\renewcommand{\|}{\parallel}
\newcommand\ec[2][]{\ensuremath{\mathbb{E}_{#1} \left[#2\right]}}
\newcommand\ecn[2][]{\ec[#1]{\norm{#2}^2}}
\newcommand\ev[1]{\left \langle #1 \right \rangle}
\newcommand\br[1]{\left ( #1 \right )}
\newcommand\pbr[1]{\left \{ #1 \right \} }
\newcommand\floor[1]{\left \lfloor #1 \right \rfloor}
\newcommand{\N}{\mathbb{N}}
\newcommand{\R}{\mathbb{R}}
\newcommand{\e}{\varepsilon}
\newcommand{\Om}{\Omega}
\newcommand{\mc}{\mathcal}
\newcommand{\norm}[1]{\left\lVert#1\right\rVert}
\newcommand{\sqn}[1]{{\left\lVert#1\right\rVert}^2}
\newcommand{\abs}[1]{\left\lvert#1\right\rvert}
\newcommand{\D}{\mathcal{D}}
\newcommand{\eqdef}{\overset{\text{def}}{=}}
\newcommand{\avemm}{\frac{1}{M}\sum_{m=1}^M}
\newcommand{\aveim}{\frac{1}{M}\sum_{i=1}^M}
\newcommand{\sigmaopt}{\sigma_{\mathrm{opt}}}
\newcommand{\sigmaf}{\sigma_{\mathrm{dif}}}
\newsavebox\myboxA
\newsavebox\myboxB
\newlength\mylenA
\definecolor{mydarkgreen}{RGB}{39,130,67}
\newcommand{\green}{\color{mydarkgreen}}
\definecolor{mydarkred}{RGB}{192,47,25}
\newcommand{\red}{\color{mydarkred}}
\newcommand{\cmark}{\green\ding{51}}%
\newcommand{\xmark}{\red\ding{55}}%
\newcommand*\overbar[2][0.75]{%
    \sbox{\myboxA}{$\m@th#2$}%
    \setbox\myboxB\null% Phantom box
    \ht\myboxB=\ht\myboxA%
    \dp\myboxBtdp\myboxA%
    \wd\myboxB=#1\wd\myboxA% Scale phantom
    \sbox\myboxB{$\m@th\overline{\copy\myboxB}$}%  Overlined phantom
    \setlength\mylenA{\the\wd\myboxA}%   calc width diff
    \addtolength\mylenA{-\the\wd\myboxB}%
    \ifdim\wd\myboxB<\wd\myboxA%
       \rlap{\hskip 0.5\mylenA\usebox\myboxB}{\usebox\myboxA}%
    \else
        \hskip -0.5\mylenA\rlap{\usebox\myboxA}{\hskip 0.5\mylenA\usebox\myboxB}%
    \fi}
\theoremstyle{definition}
\newtheorem{theorem}{Theorem}
\newtheorem{corollary}{Corollary}
\newtheorem{proposition}{Proposition}
\newtheorem{asm}{Assumption}
\newtheorem{lemma}{Lemma}
\theoremstyle{definition}
\begin{document}

\twocolumn[

\aistatstitle{Tighter Theory for Local SGD  on Identical and Heterogeneous Data\footnotemark[1] }

\runningtitle{Tighter Theory for Local SGD on Identical and Heterogeneous Data}

\aistatsauthor{Ahmed Khaled \And Konstantin Mishchenko \And Peter Richt\'arik }

\aistatsaddress{ Cairo University, KAUST\footnotemark[2] \And  KAUST \And KAUST } ]

\footnotetext[1]{This work extends two papers \cite{khaled2019analysis,khaled2019better} presented at the NeurIPS 2019 Federated Learning Workshop. }
\footnotetext[2]{This paper was prepared when the author was a research intern at KAUST.}

\begin{abstract}
     We provide a new analysis of local SGD, removing unnecessary assumptions and elaborating on the difference between two data regimes: identical and heterogeneous. In both cases, we improve the existing theory and provide values of the optimal stepsize and optimal number of local iterations. Our bounds are based on a new notion of variance that is specific to local SGD methods with different data. The tightness of our results is guaranteed by recovering known statements when we plug $H=1$, where $H$ is the number of local steps. The empirical evidence further validates the severe impact of data  heterogeneity on the  performance of local  SGD.
\end{abstract}

\section{Introduction}
Modern hardware increasingly relies on the power of uniting many parallel units into one system. This approach requires optimization methods that target specific issues arising in distributed environments such as decentralized data storage. Not having data in one place implies that computing nodes have to communicate back and forth to keep moving toward the solution of the overall problem. A number of efficient first-, second-order and dual methods that are capable of reducing the communication overhead existed in the literature for a long time, some of which are in certain sense optimal.

Yet, when Federated Learning (FL) showed up, it turned out that the problem of balancing the communication and computation had not been solved. On the one hand, Minibatch Stochastic Gradient Descent (SGD), which averages the result of stochastic gradient steps computed in parallel on several machines, again demonstrated its computation efficiency. Seeking communication efficiency, \cite{Konecny16, McMahan17} proposed to use a natural variant of Minibatch SGD---\emph{Local SGD} (Algorithm~\ref{alg:local_sgd}), which does a few SGD iterations locally on each involved node and only then computes the average. This approach saves a lot of time on communication, but, unfortunately, in terms of theory things were not as great as in terms of practice and there are still gaps in our understanding of Local SGD. 

The idea of local SGD in fact is not recent, it traces back to the work of~\cite{Mangasarian95} and has since been popular among practitioners from different communities. An asymptotic analysis can be found in~\cite{Mangasarian95} and quite a few recent papers proved new convergence results, making the bounds tighter with every work. The theory has been developing in two important regimes: identical and heterogeneous data.

The identical data regime is more of interest if the data are actually stored in one place. In that case, we can access it on each computing device at no extra cost and get a fast, scalable method. Although not very general, this framework is already of interest to a wide audience due to its efficiency in training large-scale machine learning models~\citep{TaoLin19}. The first contribution of our work is to provide the fastest known rate of convergence for this regime under weaker assumptions than in prior work.

Federated learning, however, is done on a very large number of mobile devices, and is operating in a highly non-i.i.d.\ regime. To address this, we present the first analysis of Local SGD that applies to \emph{arbitrarily heterogeneous data}, while all previous works assumed a certain type of similarity between the data or local gradients.

\begin{algorithm*}[t]
   \caption{Local SGD}
   \label{alg:local_sgd}
\begin{algorithmic}[1]
  \REQUIRE Stepsize $\gamma > 0$, initial vector $x_0 = x_0^m$ for all $m \in [M]$, synchronization timesteps $t_1, t_2, \ldots$.
   \FOR{$t=0,1,\dotsc$}
      \FOR{$m=1,\dotsc, M$ in parallel}
         \STATE Sample $z_m \overset{\text{i.i.d.}}{\sim} \D_m$.
         \IF{data is identical}
            \STATE Compute $g_t^m = g(f, x_t^m, z_m)$ such that $\ec{g_t^m \mid x_t^m} = \nabla f(x_t^m)$.
         \ELSE
            \STATE Compute $g_t^m = g(f_m, x_t^m, z_m)$ such that $\ec{g_t^m\mid x_t^m}=\nabla f_m (x_t^m)$.
         \ENDIF
         \STATE $x_{t+1}^m=
         \begin{cases}
         \frac{1}{m}\sum_{j=1}^M (x_t^j - \gamma g_t^j), & \text{ if } t = t_p \text { for some } p \in \N \\
         x_t^m - \gamma g_t^m, & \text{ otherwise. }
         \end{cases}$
      \ENDFOR
   \ENDFOR
\end{algorithmic}
\end{algorithm*}

To explain the challenge of heterogeneity better, let us introduce the problem we are trying to solve. Given that there are $M$ devices and corresponding local losses $f_m:\R^d\to \R$, we want to find
\begin{equation}
     \label{eq:optimization-problem}
     \min_{x \in \R^d} \pbr{ f(x) = \frac{1}{M} \sum_{m=1}^{M} f_m (x)}.
\end{equation}
In the case of identical data, we are able to obtain on each node an unbiased estimate of the gradient $\nabla f$. In the case of heterogeneous data, $m$-th node can only obtain an unbiased estimate of the gradient $\nabla f_m$. Data similarity can then be formulated in terms of the differences between functions $f_1,\dotsc, f_M$. If the underlying data giving rise to the loss functions are i.i.d., the function share optima and one could even minimize them separately, averaging the results at the end. We will demonstrate this rigorously later in the paper.

If the data are dissimilar, however, we need to be much more careful since running SGD locally will yield solutions of local problems. Clearly, their average might not minimize the true objective~\eqref{eq:optimization-problem}, and this poses significant issues for the convergence of Local SGD. 
%Prior work has relied on assumptions of bounded similarity (or assumptions implying it) which unfortunately fail to capture the highly non-i.i.d.\ setting common in FL applications.

To properly discuss the efficiency of local SGD, we also need a practical way of quantifying it. Normally, a method's efficiency is measured by the total number of times each function $f_m$ is touched and the cost of the touches. On the other hand, in distributed learning we also care about how much information each computing node needs to communicate. In fact, when communication is as expensive as is the case in FL, we predominantly care about communication. The question we address in this paper, thus, can be posed as follows: how many times does each node need to communicate if we want to solve~\eqref{eq:optimization-problem} up to accuracy $\e$? Equivalently, we can ask for the optimal \textit{synchronization interval length} between communications, $H$, i.e.\ how many computation steps per one communication we can allow for. We next review related work and then present our contributions.
\begin{table*}[h]
    \caption{Existing theoretical bounds for local SGD for \textbf{identical data} with convex objectives.}
    \begin{threeparttable}[b]
     \centering
     \def\arraystretch{1.2}
     \resizebox{\textwidth}{!}{
     \begin{tabular}{|c|c|c|c|c|c|c|c|c|}
     \hline
          \begin{tabular}{c}Unbounded\\ gradient \end{tabular} & \begin{tabular}{c} $H=T$ \\ convergent \end{tabular} & \begin{tabular}{c} $C(T)$\tnote{a} \\ $f$ strongly convex \end{tabular} & \begin{tabular}{c} $C(T)$ \\ $f$ convex \end{tabular} & Reference \\
     \hline    
          \xmark & \xmark &  $\Om(\sqrt{ M T })$ & \xmark & \citeauthor{Stich2018}, 5/2018 \\
     \hline
          \xmark & \xmark & $\Om\br{\sqrt{M T}}$ & \xmark & \citeauthor{Basu2019}, 6/2018 \\
     \hline
          \cmark & \xmark & $\tilde{\Om}\br{M}$ & $\Om(M^{3/2} T^{1/2})$ & \citeauthor{stich19errorfeedback}, 9/2019 \\
     \hline
          \cmark & \xmark & $\tilde{\Om}\br{M^{1/3} T^{1/3} }$\tnote{b} & - & \citeauthor{Haddadpour2019local}, 10/2019 \\
     \hline
          \cmark & \cmark & $\tilde{\Om}(M)$ & $\Om(M^{3/2} T^{1/2})$ & \textbf{THIS WORK, 9/2019-1/2020} \\
     \hline
     \end{tabular}
     }
     \begin{tablenotes}
         \item [a] $C(T)$ denotes the minimum number of communication steps required at each of $T$ iterations to achieve a linear speedup in the number of nodes $M$.
         \item [b] The PL inequality, a generalization of strong convexity, is assumed in \citep{Haddadpour2019local}, but for comparison we specialize to strong convexity.
     \end{tablenotes}
    \end{threeparttable}
    \label{tab:related_work_convexity}
\end{table*}

\section{Related Work}
While local SGD has been used among practitioners for a long time, see e.g.~\citep{Coppola15, McDonald10}, its theoretical analysis has been limited until recently. Early theoretical work on the convergence of local methods exists as in~\citep{Mangasarian95}, but no convergence rate was given there. The previous work can mainly be divided into two groups: those assuming identical data (that all nodes have access to the same dataset) and those that allow each node to hold its own dataset. As might be expected, the analysis in the latter case is more challenging, more limited, and usually shows worse rates. We note that in recent work more sophisticated local stochastic gradient methods have been considered, for example with momentum~\citep{yu2019linear, wang2019slowmo}, with quantization~\citep{Reisizadeh19,Basu2019}, with adaptive stepsizes~\citep{xie2019local} and with various variance-reduction methods \citep{liang2019variance, sharma2019parallel, karimireddy2019scaffold}. Our work is complimentary to these approaches, and provides improved rates and analysis for the vanilla method.

\renewcommand{\thefootnote}{\arabic{footnote}}
\subsection{Local SGD with Identical Data}
The analysis of local SGD in this setting shows that a reduction in communication is possible without affecting the asymptotic convergence rate of Minibatch SGD with $M$ nodes (albeit with usually worse dependence on constants). An overview of related work on local SGD for convex objectives is given in Table~\ref{tab:related_work_convexity}. We note that analysis for nonconvex objectives has been carried out in a few recent works~\citep{Zhou18,Wang18,Jiang18}, but our focus in this work is on convex objectives and hence they were not included in Table~\ref{tab:related_work_convexity}. The comparison shows that we attain superior rates in the strongly convex setting to previous work with the exception of the concurrent\footnote[1]{Made available online one day after the first version of our work was.} work of~\cite{stich19errorfeedback} and we attain these rates under less restrictive assumptions on the optimization process compared to them. We further provide a novel analysis in the convex case, which has not been previously explored in the literature, with the exception of \citep{stich19errorfeedback}. Their analysis attains the same communication complexity but is much more pessimistic about possible values of $H$. In particular, it does not recover the convergence of one-shot averaging, i.e.\ substituting $H = T$ or even $H = T/M$ gives noninformative bounds, unlike our Theorem~\ref{thm:sc-convergence-theorem}.

In addition to the works listed in the table, \cite{Patel19}  also analyze local SGD for identical data under a Hessian smoothness assumption in addition to gradient smoothness, strong convexity, and uniformly bounded variance. However, we believe that there are issues in their proof that we explain in Section~\ref{sec:patel-discuss} in the supplementary material. As a result, the work is excluded from the table. 

\subsection{Local SGD with Heterogeneous Data}
\begin{table*}[h]
    \caption{Existing theoretical bounds for local SGD with \textbf{heterogeneous data}.}
    \centering
    \def\arraystretch{1.4}
    \resizebox{\textwidth}{!}{
    \begin{tabular}{|c|c|c|c|c|c|c|c|c|c|c|}
    \hline
         \begin{tabular}{c}Unbounded\\ gradient \end{tabular} & \begin{tabular}{c}Unbounded\\ dissimilarity/diversity \end{tabular} & \begin{tabular}{c} $C(T)$ \\ $f$ strongly convex \end{tabular} & \begin{tabular}{c} $C(T)$ \\ $f$ convex \end{tabular} & \begin{tabular}{c} $C(T)$ \\ $f$ nonconvex \end{tabular} & Reference \\
    \hline
         \xmark & \xmark & - & - & $\Om\br{M^{3/4} T^{3/4}}$ & \citeauthor{Yu18}, 7/2018 \\
    \hline   
         \cmark & \xmark & - & - & $\Om(T)$ & \citeauthor{Jiang18}, 12/2018  \\
    \hline 
         \xmark & \xmark & $\Om\br{\sqrt{MT}}$ & - & $\Om\br{M^{3/4} T^{3/4}}$ & \citeauthor{Basu2019}, 6/2019  \\
    \hline
         \cmark & \xmark & $\Om\br{M^{1/3} T^{1/3}}$ & - & $\Om\br{M^{3/2} T^{1/2}}$ & \citeauthor{Haddadpour19FL}, 10/2019   \\
    \hline
         \cmark & \cmark & - & $\Om\br{M^{3/4} T^{3/4}}$ & - & \textbf{THIS WORK, 1/2020} \\
    \hline
    \end{tabular}
    }
    \label{tab:related_work_non-iid}
\end{table*}

An overview of related work on local SGD in this setting is given in Table~\ref{tab:related_work_non-iid}. In addition to the works in Table~\ref{tab:related_work_non-iid}, \cite{WangTuor18} analyze a local gradient descent method under convexity, bounded dissimilarity, and bounded gradients, but do not show convergence to arbitrary precisions. \cite{Li2019} analyze federated averaging (discussed below) in the strongly convex and nonconvex cases under bounded gradient norms. However, their result is not included in Table~\ref{tab:related_work_non-iid} because in the more general setting of federated averaging, their analysis and experiments suggest that retaining a linear speedup is not possible. 

Local SGD is at the core of the {\em Federated Averaging} algorithm which is popular in federated learning applications~\citep{Konecny16}. Essentially, Federated Averaging is a variant of Local SGD with participating devices sampled randomly. This algorithm has been used in several machine learning applications such as mobile keyboard prediction~\citep{HardRao18}, and strategies for improving its communication efficiency were explored in~\citep{Konecny16}. Despite its empirical success, little is known about convergence properties of this method and it has been observed to diverge when too many local steps are performed~\citep{McMahan17}. This is not so surprising as the majority of common assumptions are not satisfied; in particular, the data are typically very non-i.i.d.~\citep{McMahan17}, so the local gradients can point in different directions. This property of the data can be written for any vector $x$ and indices $i,j$ as
\begin{align*}
	\norm{\nabla f_i(x) - \nabla f_j(x)} \gg 1.
\end{align*}
Unfortunately, it is very hard to analyze local methods without assuming a bound on the dissimilarity of $\nabla f_i(x)$ and $\nabla f_j(x)$. For this reason, almost all prior work assumed some regularity notion over the functions such as bounded dissimilarity \citep{yu2019linear, Li2019, Yu18, WangTuor18} or bounded gradient diversity \citep{Haddadpour19FL} and addressed other less challenging aspects of federated learning such as decentralized communication, nonconvexity of the objective or unbalanced data partitioning. In fact, a common way to make the analysis simple is to assume Lipschitzness of local functions,
$
	\norm{\nabla f_i(x)} \le G
$
for any $x$ and $i$. We argue that this assumption is pathological and should be avoided when seeking a meaningful convergence bound. First of all, in unconstrained strongly convex minimization this assumption can not be satisfied, making the analysis in works like~\citep{Stich2018} questionable. Second, there exists at least one method, whose convergence is guaranteed under bounded variance~\citep{juditsky2011solving}, but in practice the method diverges~\citep{chavdarova2019reducing, mishchenko2019revisiting}. Finally, under the bounded gradients assumption we have
\begin{align*}
    \label{eq:related-work-2}
    \norm{\nabla f_i (x) - \nabla f_{j} (x)} \leq \norm{\nabla f_i (x)} + \norm{\nabla f_j (x)} \leq 2G.
\end{align*}
In other words, we lose control over the difference between the functions. Since $G$ bounds not just dissimilarity, but also the gradients themselves, it makes the statements less insightful or even vacuous. For instance, it is not going to be tight if the data are actually i.i.d.\ since $G$ in that case will remain a positive constant. In contrast, we will show that the rate should depend on a much more meaningful quantity, 
$$
	\sigmaf^2 \eqdef \frac{1}{M} \sum_{m=1}^{M} \ecn[z_m \sim \D_m]{\nabla f_m (x_*, z_m)},
$$
where $x_*$ is a fixed minimizer of $f$ and $f_m (\cdot, z_m)$ for $z_m \sim \D$ are stochastic realizations of $f_m$ (see the next section for the setting). Obviously, for all nondegenerate sampling distributions $\D_m$ the quantity $\sigmaf$ is finite and serves as a natural measure of variance in local methods. We note that an attempt to get more general convergence statement has been made by~\citep{Sahu18}, but unfortunately their guarantee is strictly worse than that of minibatch Stochastic Gradient Descent (SGD). In the overparameterized regime where $\sigmaf = 0$, \cite{zhang2019distributed} prove the convergence of Local SGD with arbitrary $H$.

Our earlier workshop paper~\citep{khaled2019analysis} explicitly analyzed Local Gradient Descent (Local GD) as opposed to Local SGD, where there is no stochasticity in the gradients. An analysis of Local GD for non-convex objectives with the PL inequality and under bounded gradient diversity was subsequently carried out by~\cite{Haddadpour19FL}. 

\section{Settings and Contributions}
\begin{asm}
     \label{asm:convexity-and-smoothness}
     Assume that the set of minimizers of \eqref{eq:optimization-problem} is nonempty. Each $f_m$ is $\mu$-strongly convex for $\mu \geq 0$ and $L$-smooth. That is, for all $x, y \in \R^d$
     \begin{align*}
          \frac{\mu}{2} \sqn{x - y} &\leq f_m (x) - f_m (y) - \ev{\nabla f_m (y), x - y} \\
          &\leq \frac{L}{2} \sqn{x - y}.
     \end{align*}
     When $\mu = 0$, we say that each $f_m$ is just convex. When $\mu \neq 0$, we define $\kappa \eqdef \frac{L}{\mu}$, the condition number.
\end{asm}

Assumption~\ref{asm:convexity-and-smoothness} formulates our requirements on the overall objective. Next, we have two different sets of assumptions on the stochastic gradients that model different scenarios, which also lead to different convergence rates.

\begin{asm}
     \label{asm:uniformly-bounded-variance}
     Given a function $h$, a point $x \in \R^d$, and a sample $z \sim \D$ drawn i.i.d.\ according to a distribution $\D$, the stochastic gradients $g = g(h, x, z)$ satisfy
     $
          \ec[z \sim \D]{g(h, x, z)} = \nabla h (x),
          \ecn[z \sim \D]{g(h, x, z) - \nabla h(x)} \leq \sigma^2.
     $
\end{asm}

Assumption~\ref{asm:uniformly-bounded-variance} holds for example when $g(x, z) = \nabla h(x) + \xi_z$ for a random variable $\xi_z$ of expected bounded squared norm: $\ecn[z \sim \D]{\xi_z} \leq \sigma^2$. Assumption~\ref{asm:uniformly-bounded-variance}, however, typically does not hold for finite-sum problems where $g(x, z)$ is a gradient of the one functions in the finite-sum. To capture this setting, we consider the following assumption:

\begin{asm}
    \label{asm:finite-sum-stochastic-gradients}
    Given an $L$-smooth and $\mu$-strongly convex (possibly with $\mu = 0$) function $h: \R^d \to \R$ written as an expectation $h = \ec[z \sim \D]{h(x, z)}$, we assume that a stochastic gradient $g = g(h, x, z)$ is computed by $g(h, x, z) = \nabla h(x, z).$
    % \ko{this notation is problematic: when we write $\nabla h(x, z)$, it's not obvious that the gradient is only with respect to the first variable of the function. Maybe we replace $h(x, z)$ with $h(x; z)$ or $h_z(x)$?}
    % \ahmed{I agree to replace it with $h(x; z)$, will do that but only if there is time.}
    We assume that $h(\cdot, z): \R^d \to \R$ is almost-surely $L$-smooth and $\mu$-strongly convex (with the same $L$ and $\mu$ as $h$).
\end{asm}

When Assumption~\ref{asm:finite-sum-stochastic-gradients} is assumed \textbf{in the identical data setting}, we assume it is satisfied on each node $m \in [M]$ with $h = f$ and distribution $\D_m$, and we define as a measure of variance at the optimum
\[ \sigmaopt^2 \eqdef \frac{1}{M} \sum_{m=1}^{M} \ecn[z_m \sim \D_m]{\nabla f (x_\ast, z_m)}. \] 
Whereas \textbf{in the heterogeneous data} setting we assume that it is satisfied on each node $m \in [M]$ with $h = f_m$ and distribution $\D_m$, and we analogously define
\[ \sigmaf^2 \eqdef \frac{1}{M} \sum_{m=1}^{M} \ecn[z_m \sim \D_m]{\nabla f_m (x_\ast, z_m)}. \]

Assumption~\ref{asm:finite-sum-stochastic-gradients} holds, for example, for finite-sum optimization problems with uniform sampling and permits direct extensions to more general settings such as expected smoothness \cite{SGD-AS}.

\textbf{Our contributions} are as follows:
\begin{enumerate}
     \item In the identical data setting under Assumptions~\ref{asm:convexity-and-smoothness} and~\ref{asm:uniformly-bounded-variance} with $\mu > 0$, we prove that the iteration complexity of Local SGD to achieve $\e$-accuracy is 
     \[ \mathcal{\tilde{O}}\br{ \frac{\sigma^2}{\mu^2 M \e} } \]
     in squared distance from the optimum provided that $T = \Omega\br{ \kappa \br{H  - 1} }$. This improves the communication complexity in prior work (see Table~\ref{tab:related_work_convexity}) with a tighter results compared to concurrent work (recovering convergence for $H = 1$ and $H = T$). When $\mu = 0$ we have that the iteration complexity of Minibatch SGD to attain an $\e$-accurate solution in functional suboptimality is 
     \[ \mathcal{O} \br{\ \frac{L^2 \norm{x_0 - x_\ast}^4}{M \e^2} + \frac{\sigma^4}{L^2 M \e^2}}, \] 
     provided that $T = \Omega\br{M^3 H^2}$. We further show that the same $\e$-dependence holds in both the $\mu > 0$ and $\mu = 0$ cases under Assumption~\ref{asm:finite-sum-stochastic-gradients}. This has not been explored in the literature on Local SGD before, and hence we obtain the first results that apply to arbitrary convex and smooth finite-sum problems. 
     \item When the data on each node is different and Assumptions~\ref{asm:convexity-and-smoothness} and \ref{asm:finite-sum-stochastic-gradients} hold with $\mu = 0$, the iteration complexity needed by Local SGD to achieve an $\e$-accurate solution in functional suboptimality is 
     \[ \mathcal{O} \br{ \frac{L^2 \norm{x_0 - x_\ast}^4}{M \e^2} + \frac{\sigmaf^4}{L^2 M \e^2}} \]
     provided that $T = \Omega(M^3 H^4)$. This improves upon previous work by not requiring any restrictive assumptions on the gradients and is the first analysis to capture true \emph{data heterogeneity} between different nodes.
     \item We verify our results by experimenting with logistic regression on multiple datasets, and investigate the effect of heterogeneity on the convergence speed.
\end{enumerate}

\section{Convergence Theory}

The following quantity is crucial to the analysis of both variants of local SGD, and measures the deviation of the iterates from their average $\hat{x}_t$ over an epoch:
\[ V_t \eqdef \avemm \sqn{x_t^m - \hat{x}_t} \text { where } \hat{x}_t \eqdef \avemm x_t^m. \]
To prove our results, we follow the line of work started by \cite{Stich2018} and first show a recurrence similar to that of SGD up to an error term proportional to $V_t$, then we bound each $V_t$ term individually or the sum of $V_t$'s over an epoch. All proofs are relegated to the supplementary material.

\subsection{Identical Data}
Our first lemma presents a bound on the sequence of the $V_t$ in terms of the synchronization interval $H$.
\begin{lemma}
     \label{lemma:uniform-var-iterate-variance-bound}
     Choose a stepsize $\gamma > 0$ such that $\gamma \leq \frac{1}{2L}$. Under Assumptions~\ref{asm:convexity-and-smoothness}, and \ref{asm:uniformly-bounded-variance} we have that for Algorithm~\ref{alg:local_sgd} with $\max_{p} \abs{t_p - t_{p+1}} \leq H$ and with identical data, for all $t \ge 1$
     \begin{align*}
         \ec{V_{t}} \leq \br{H - 1} \gamma^2 \sigma^2.
     \end{align*}
 \end{lemma}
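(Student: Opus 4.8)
The plan is to establish a clean one-step recursion for $V_t$ and then unroll it within each synchronization epoch, exploiting that $V$ vanishes immediately after every averaging step. Fix a step $t$ that is not a synchronization step, so each node performs $x_{t+1}^m = x_t^m - \gamma g_t^m$ with $\ec[t]{g_t^m} = \nabla f(x_t^m)$ (identical data means every node differentiates the \emph{same} $f$). Writing $\bar g_t \eqdef \avemm g_t^m$, the averaged iterate obeys $\hat x_{t+1} = \hat x_t - \gamma \bar g_t$, whence $x_{t+1}^m - \hat x_{t+1} = (x_t^m - \hat x_t) - \gamma(g_t^m - \bar g_t)$. Expanding the square and averaging over $m$ gives $V_{t+1} = V_t - \frac{2\gamma}{M}\sum_{m=1}^M \ev{x_t^m - \hat x_t,\, g_t^m - \bar g_t} + \gamma^2 \avemm \sqn{g_t^m - \bar g_t}$.

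Next I would take the conditional expectation $\ec[t]{\cdot}$ given the time-$t$ iterates. Decomposing $g_t^m = \nabla f(x_t^m) + \xi_t^m$ with mean-zero noise $\xi_t^m$ that is independent across $m$ and of the current iterates, the cross term between gradient and noise parts cancels in expectation, and the noise contributes at most $\gamma^2 \avemm \ec[t]{\sqn{\xi_t^m - \bar\xi_t}} \le \gamma^2\sigma^2$, since $\avemm \ec[t]{\sqn{\xi_t^m - \bar\xi_t}} = \avemm\ec[t]{\sqn{\xi_t^m}} - \ec[t]{\sqn{\bar\xi_t}} \le \sigma^2$ by Assumption~\ref{asm:uniformly-bounded-variance}. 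What remains is the purely deterministic gradient contribution $-\frac{2\gamma}{M}\sum_{m=1}^M \ev{x_t^m - \hat x_t,\, G_t^m - \bar G_t} + \gamma^2\avemm \sqn{G_t^m - \bar G_t}$, where $G_t^m \eqdef \nabla f(x_t^m)$ and $\bar G_t \eqdef \avemm G_t^m$.

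The crux of the argument — and the only place where identical data is essential — is showing this deterministic remainder is nonpositive. I would rewrite both terms in their equivalent pairwise (covariance) form, namely $\avemm\ev{x_t^m - \hat x_t,\, G_t^m - \bar G_t} = \frac{1}{2M^2}\sum_{m,m'}\ev{x_t^m - x_t^{m'},\, \nabla f(x_t^m) - \nabla f(x_t^{m'})}$ and likewise $\avemm \sqn{G_t^m - \bar G_t} = \frac{1}{2M^2}\sum_{m,m'}\sqn{\nabla f(x_t^m) - \nabla f(x_t^{m'})}$. Because every $G_t^m$ is a gradient of the \emph{same} convex $L$-smooth $f$, co-coercivity yields $\ev{x_t^m - x_t^{m'},\, \nabla f(x_t^m) - \nabla f(x_t^{m'})} \ge \frac{1}{L}\sqn{\nabla f(x_t^m) - \nabla f(x_t^{m'})}$ for each pair. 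Hence the gradient remainder is bounded by $\br{\gamma^2 - \frac{2\gamma}{L}}\avemm\sqn{G_t^m - \bar G_t}$, which is $\le 0$ whenever $\gamma \le \frac{2}{L}$, and in particular under the assumed $\gamma \le \frac{1}{2L}$. This produces the clean recursion $\ec[t]{V_{t+1}} \le V_t + \gamma^2\sigma^2$.

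Finally I would unroll. Right after any synchronization step the iterates coincide, so $V_{t_p} = 0$; taking total expectation and summing the recursion over the at most $H-1$ local steps separating $t$ from the preceding synchronization point $t_p$ gives $\ec{V_t} \le (t - t_p)\gamma^2\sigma^2 \le (H-1)\gamma^2\sigma^2$, as claimed. The main obstacle is the deterministic gradient term: controlling it \emph{exactly} — so that no residual multiple of $V_t$ feeds back and spoils the additive-only recursion — is what forces the co-coercivity step and is precisely where identicalness of the data enters. In the heterogeneous regime the pairwise inner products need not be nonnegative, which is why that case requires a genuinely different and weaker bound.
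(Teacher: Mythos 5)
Your proof is correct, and its key step differs genuinely from the paper's. Both arguments set up the same one-step expansion of $V_{t+1}$, arrive at the same recursion $\ec{V_{t+1}} \leq V_t + \gamma^2\sigma^2$, and unroll it from the last synchronization point where $V_{t_p}=0$; the divergence is in how the deterministic gradient contribution is killed. The paper bounds the gradient-deviation term $\avemm \sqn{\nabla f(x_t^m) - \bar{g}_t}$ by comparing each $\nabla f(x_t^m)$ to $\nabla f(\hat{x}_t)$ via the smoothness inequality \eqref{eq:bregman-dif}, uses Jensen's inequality to discard $\avemm\br{f(\hat{x}_t) - f(x_t^m)} \leq 0$, and then absorbs the resulting quantity into the negative inner-product term, which appears with coefficient $-2\gamma(1-\gamma L)$ and is finally handled by (strong) convexity. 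You instead rewrite both the inner-product term and the squared-deviation term in their pairwise form, $\avemm \ev{x_t^m - \hat{x}_t, \nabla f(x_t^m) - \bar{g}_t} = \frac{1}{2M^2}\sum_{m,m'} \ev{x_t^m - x_t^{m'}, \nabla f(x_t^m) - \nabla f(x_t^{m'})}$ (and analogously for the squared norms), and apply co-coercivity of $\nabla f$ pair by pair — valid precisely because all nodes share the same convex $L$-smooth $f$. This buys you a few things: the argument never references $\nabla f(\hat{x}_t)$, needs no Jensen step, and shows the remainder is nonpositive under the weaker condition $\gamma \leq \frac{2}{L}$ rather than the paper's effective requirement $\gamma \leq \frac{1}{L}$ (co-coercivity itself follows from the paper's fact \eqref{eq:bregman-dif} applied in both directions, so no new machinery is needed). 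What the paper's route buys in exchange is a contraction factor: retaining the inner-product term and applying strong convexity yields $\ec{V_{t+1}} \leq \br{1 - \frac{\gamma\mu}{2}} V_t + \gamma^2\sigma^2$ before it is relaxed, which your nonpositivity argument discards (though pairwise strong monotonicity would recover it if desired). Since the lemma only needs the relaxed recursion, both proofs are equally sufficient here.
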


Combining Lemma~\ref{lemma:uniform-var-iterate-variance-bound} with perturbed iterate analysis as in \citep{Stich2018} we can recover the convergence of local SGD for strongly-convex functions:
 \begin{theorem}
     \label{thm:sc-convergence-theorem}
     Suppose that Assumptions~\ref{asm:convexity-and-smoothness}, and \ref{asm:uniformly-bounded-variance} hold with $\mu > 0$. Then for Algorithm~\ref{alg:local_sgd} run with identical data, a constant stepsize $\gamma > 0$ such that $\gamma \leq \frac{1}{4L}$, and $H \geq 1$ such that $\max_{p} \abs{t_p - t_{p+1}} \leq H$,
     \begin{align}
         \label{eq:thm-sc-convergence-rate}
         \begin{split}
          \ecn{\hat{x}_T - x_\ast} \leq (1 &- \gamma \mu)^{T} \sqn{x_0 - x_\ast} + \frac{\gamma \sigma^2}{\mu M} \\
          &+ \frac{2 L \gamma^2 \br{H - 1} \sigma^2}{\mu}.
         \end{split}
     \end{align}
 \end{theorem}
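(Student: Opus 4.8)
The plan is to run a perturbed iterate analysis on the virtual average sequence $\hat{x}_t = \avemm x_t^m$. The key starting observation is that averaging is linear and idempotent on the mean, so regardless of whether a communication happens at step $t$, the average obeys a clean SGD-type recursion $\hat{x}_{t+1} = \hat{x}_t - \gamma \hat{g}_t$, where $\hat{g}_t = \avemm g(f, x_t^m, z_t^m)$ is the average of the local stochastic gradients evaluated at the \emph{distinct} local points. Writing $\bar{g}_t = \avemm \nabla f(x_t^m)$ for the conditional mean given the iterates, I would expand $\sqn{\hat{x}_{t+1} - x_\ast}$ and take expectation over the gradient noise to obtain $\ec{\sqn{\hat{x}_{t+1} - x_\ast}} = \sqn{\hat{x}_t - x_\ast} - 2\gamma \ev{\bar{g}_t, \hat{x}_t - x_\ast} + \gamma^2 \ec{\sqn{\hat{g}_t}}$, and then control the last two terms.

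For the inner-product term I would use strong convexity at each local point, $\ev{\nabla f(x_t^m), x_t^m - x_\ast} \ge f(x_t^m) - f(x_\ast) + \frac{\mu}{2}\sqn{x_t^m - x_\ast}$, together with an $L$-smoothness bound $\ev{\nabla f(x_t^m), \hat{x}_t - x_t^m} \ge f(\hat{x}_t) - f(x_t^m) - \frac{L}{2}\sqn{\hat{x}_t - x_t^m}$ to swap $\hat{x}_t$ for $x_t^m$; after averaging and using the variance decomposition $\avemm \sqn{x_t^m - x_\ast} = \sqn{\hat{x}_t - x_\ast} + V_t$, this lower-bounds the inner product by $f(\hat{x}_t) - f(x_\ast) + \frac{\mu}{2}\sqn{\hat{x}_t - x_\ast} - \frac{L}{2}V_t$. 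For the second-moment term I would split $\ec{\sqn{\hat{g}_t}} = \sqn{\bar{g}_t} + \ec{\sqn{\hat{g}_t - \bar{g}_t}}$; independence of the noise across the $M$ nodes gives $\ec{\sqn{\hat{g}_t - \bar{g}_t}} \le \sigma^2/M$ (this is where the linear speedup originates), while convexity of $\sqn{\cdot}$ and the co-coercivity inequality $\sqn{\nabla f(x) - \nabla f(x_\ast)} \le 2L\br{f(x) - f(x_\ast)}$ bound $\sqn{\bar{g}_t} \le 2L \cdot \avemm\br{f(x_t^m) - f(x_\ast)}$, which I would then relate to $f(\hat{x}_t) - f(x_\ast) + \frac{L}{2}V_t$ using smoothness expanded around $\hat{x}_t$.

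Assembling these pieces, the function-suboptimality terms collect into $-2\gamma(1 - \gamma L)\br{f(\hat{x}_t) - f(x_\ast)}$, which is non-positive under the stepsize restriction $\gamma \le \frac{1}{4L}$ and can be discarded, while the $V_t$ terms collect with coefficient $\gamma L(1 + \gamma L) \le 2\gamma L$. This leaves the contraction $\ec{\sqn{\hat{x}_{t+1} - x_\ast}} \le (1 - \gamma\mu)\ec{\sqn{\hat{x}_t - x_\ast}} + 2\gamma L \ec{V_t} + \frac{\gamma^2\sigma^2}{M}$. Taking full expectation, substituting $\ec{V_t} \le (H-1)\gamma^2\sigma^2$ from Lemma~\ref{lemma:uniform-var-iterate-variance-bound}, unrolling the geometric recursion from $\hat{x}_0 = x_0$, and bounding $\sum_{k=0}^{T-1}(1-\gamma\mu)^k \le \frac{1}{\gamma\mu}$ then yields the three-term bound in~\eqref{eq:thm-sc-convergence-rate}.

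I expect the main obstacle to be the cross-term bookkeeping in the second paragraph: arranging the bounds so that the net coefficient of $f(\hat{x}_t) - f(x_\ast)$ comes out non-positive. The difficulty is that the descent is produced by the inner product evaluated at the local points $x_t^m$ rather than at $\hat{x}_t$, while the second-moment term also lives at the local points; both must be converted into quantities centered at $\hat{x}_t$ at the controllable cost of $V_t$ terms, and only the stepsize condition $\gamma \le \frac{1}{4L}$ guarantees the surviving suboptimality coefficient is negative, so it may be dropped rather than tracked.
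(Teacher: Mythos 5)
Your proposal is correct and follows essentially the same route as the paper: the one-step recursion you derive inline (descent in $\sqn{\hat{x}_t - x_\ast}$ with a non-positive suboptimality term, a $\gamma^2\sigma^2/M$ variance term, and a $2\gamma L V_t$ error term) is precisely the perturbed-iterate lemma of Stich (2018) that the paper invokes as Lemma~\ref{lemma:iterate-one-recursion}, combined with the independence-based variance reduction of Lemma~\ref{lemma:minibatch-variance-reduction}. From there both arguments are identical: substitute the bound $\ec{V_t} \leq (H-1)\gamma^2\sigma^2$ from Lemma~\ref{lemma:uniform-var-iterate-variance-bound}, discard the non-positive Bregman term, unroll the contraction, and bound the geometric sum by $\frac{1}{\gamma\mu}$, which yields exactly \eqref{eq:thm-sc-convergence-rate}.
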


By \eqref{eq:thm-sc-convergence-rate} we see that the convergence of local SGD is the same as Minibatch SGD plus an additive error term which can be controlled by controlling the size of $H$, as the next corollary and the successive discussion show.

\begin{corollary}
    \label{corrollary:sc-convergence-comm-complexity}
     Choosing $\gamma = \frac{1}{\mu a}$, with $a = {4 \kappa + t}$ for $t > 0$ and we take $T = 2 a \log a$ steps. Then substituting in \eqref{eq:thm-sc-convergence-rate} and using that $1 - x \leq \exp(-x)$ and some algebraic manipulation we can conclude that,
     \begin{equation*}
        \ecn{r_T} = \tilde{\mathcal{O}} \br{ \frac{\sqn{r_0}}{T^2} + \frac{\sigma^2}{\mu^2 M T} + \frac{\kappa \sigma^2 (H-1)}{\mu^2 T^2}}.
     \end{equation*}
    where $r_{t} = \hat{x}_t - x_\ast$ and $\tilde{\mathcal{O}} (\cdot)$ ignores polylogarithmic and constant numerical factors. 
\end{corollary}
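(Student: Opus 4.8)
The plan is to substitute the prescribed values $\gamma = \frac{1}{\mu a}$ and $T = 2 a \log a$ (with $a = 4\kappa + t$, $t > 0$) into the three-term bound \eqref{eq:thm-sc-convergence-rate} of Theorem~\ref{thm:sc-convergence-theorem}, simplify each term into a multiple of a power of $a$, and then re-express those powers of $a$ in terms of $T$, sweeping the logarithmic slack into the $\tilde{\mathcal{O}}$ notation. Before substituting I would first check that the stepsize is admissible: since $\mu a = \mu(4\kappa + t) = 4L + \mu t \ge 4L$, we have $\gamma = \frac{1}{\mu a} \le \frac{1}{4L}$, so the hypothesis $\gamma \le \frac{1}{4L}$ of Theorem~\ref{thm:sc-convergence-theorem} is met.

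Next I would treat the three terms in turn. For the contraction term, $\gamma\mu = \frac{1}{a}$, so using $1 - x \le \exp(-x)$ gives $(1-\gamma\mu)^T \le \exp(-T/a) = \exp(-2\log a) = a^{-2}$, and hence this term is at most $\sqn{r_0}/a^2$. The Minibatch-SGD variance term becomes $\frac{\gamma\sigma^2}{\mu M} = \frac{\sigma^2}{\mu^2 a M}$. The locality term becomes $\frac{2L\gamma^2(H-1)\sigma^2}{\mu} = \frac{2L(H-1)\sigma^2}{\mu^3 a^2} = \frac{2\kappa(H-1)\sigma^2}{\mu^2 a^2}$, using $\kappa = L/\mu$. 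Thus after substitution the bound reads, up to the explicit constant $2$,
\[ \ecn{r_T} \le \frac{\sqn{r_0}}{a^2} + \frac{\sigma^2}{\mu^2 a M} + \frac{2\kappa(H-1)\sigma^2}{\mu^2 a^2}. \]

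The crux is to convert $1/a$ and $1/a^2$ into $1/T$ and $1/T^2$ up to polylogarithmic factors, i.e.\ to show $a = \Theta\br{T/\log T}$. From $T = 2a\log a$ with $a \ge 4\kappa \ge 4$ (so $\log a \ge 1$) I would sandwich $a$: on one side $T = 2a\log a \ge 2a$ forces $a \le T/2$ and hence $\log a \le \log T$; on the other side $T = 2a\log a \le 2a^2$ forces $a \ge \sqrt{T/2}$ and hence $\log a \ge \frac{1}{2}\log(T/2)$. Together these give $\log a = \Theta(\log T)$, so $a = \frac{T}{2\log a} = \Theta\br{T/\log T}$, whence $\frac{1}{a} = \tilde{\mathcal{O}}\br{\frac{1}{T}}$ and $\frac{1}{a^2} = \tilde{\mathcal{O}}\br{\frac{1}{T^2}}$. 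Substituting these into the displayed bound yields exactly $\ecn{r_T} = \tilde{\mathcal{O}}\br{\frac{\sqn{r_0}}{T^2} + \frac{\sigma^2}{\mu^2 M T} + \frac{\kappa\sigma^2(H-1)}{\mu^2 T^2}}$.

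The only genuinely delicate point — and the main obstacle — is this last logarithmic bookkeeping: one must be sure that the factors $\log^2 a$ arising from $1/a^2 = 4\log^2 a/T^2$ and the factor $\log a$ from the middle term really are polylogarithmic in $T$ rather than in some unrelated quantity, which is precisely what the two-sided bound $\log a = \Theta(\log T)$ guarantees. Everything else is routine algebra, so I would keep the presentation terse and simply note at the end that all suppressed factors are powers of $\log a = \Theta(\log T)$ together with absolute constants, exactly as $\tilde{\mathcal{O}}(\cdot)$ permits.
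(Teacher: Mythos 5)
Your proposal is correct and follows exactly the route the paper intends: the paper gives no separate proof for this corollary, its statement itself being the recipe (substitute $\gamma = \frac{1}{\mu a}$ and $T = 2a\log a$ into \eqref{eq:thm-sc-convergence-rate}, apply $1 - x \leq \exp(-x)$, and absorb logarithms into $\tilde{\mathcal{O}}$). Your write-up fills in the details the paper waves off as ``algebraic manipulation'' --- notably the stepsize admissibility check $\gamma \le \frac{1}{4L}$ and the two-sided bound giving $\log a = \Theta(\log T)$, which is indeed the one point that needs care.
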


\textbf{Recovering fully synchronized Minibatch SGD.} When $H = 1$ the error term vanishes and we obtain directly the ordinary rate of Minibatch SGD. 

\textbf{Linear speedup in the number of nodes $M$.} We see that choosing $H = \mathcal{O}(T/M)$ leads to an asymptotic convergence rate of $\mathcal{\tilde{O}}\br{ \frac{\sigma^2 \kappa}{\mu^2 M T} }$ which shows the same linear speedup of Minibatch SGD but with worse dependence on $\kappa$. The number of communications in this case is then $C(T) = T/H = \tilde{\Omega}(M)$.

\textbf{Local SGD vs Minibatch SGD.} We assume that the statistical $\sigma^2/T$ dependence dominates the dependence on the initial distance $\sqn{x_0 - x_\ast} / T^2$. From Corollary~\ref{corrollary:sc-convergence-comm-complexity}, we see that in order to achieve the same convergence guarantees as Minibatch SGD, we must have $H = \mathcal{O} \br{\frac{T}{\kappa M}}$, achieving a communication complexity of $\mathcal{O} \br{\kappa M}$. This is only possible when $T > \kappa M$. It follows that given a number of steps $T$ the optimal $H$ is $H = 1 + \floor{T/(\kappa M)}$ achieving a communication complexity of $\tilde{\Omega} \br{ \min (T, \kappa M) }$. 

\textbf{One-shot averaging.} Putting $H = T + 1$ yields a convergence rate of $\mathcal{\tilde{O}}(\sigma^2 \kappa /(\mu^2 T))$, showing no linear speedup but showing convergence, which improves upon all previous work. However, we admit that simply using Jensen's inequality to bound the distance of the average iterate $\ecn{\hat{x}_T - x_\ast}$ would yield a better asymptotic convergence rate of $\mathcal{\tilde{O}}(\sigma^2 / (\mu^2 T))$. Under a Lipschitz Hessian assumption, \cite{Zhang13} show that one-shot averaging can attain a linear speedup in the number of nodes, so one may do analysis of local SGD under this additional assumption to try to remove this gap, but this is beyond the scope of our work.

Similar results can be obtained for weakly convex functions, as the next Theorem shows.

\begin{theorem}
     \label{thm:weakly-convex-thm}
     Suppose that Assumptions~\ref{asm:convexity-and-smoothness}, \ref{asm:uniformly-bounded-variance} hold with $\mu = 0$ and that a constant stepsize $\gamma$ such that $\gamma \geq 0$ and $\gamma \leq \frac{1}{4 L}$ is chosen and that Algorithm~\ref{alg:local_sgd} is run for identical data with $H \geq 1$ such that $\sup_{p} \abs{t_p - t_{p+1}} \leq H$, then for $\bar{x}_T = \frac{1}{T} \sum_{t=1}^{T} \hat{x}_t$,
     \begin{align}
         \label{eq:thm-wc-convergence-rate}
         \begin{split}
               \ec{f(\bar{x}_T) - f(x_\ast)} \leq \frac{2}{\gamma T} &\norm{x_0 - x_\ast}^2  + \frac{2 \gamma \sigma^2}{M} \\
               &+ 4 \gamma^2 L \sigma^2 \br{H - 1}.
         \end{split}
     \end{align}
\end{theorem}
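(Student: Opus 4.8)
The plan is to run a one-step perturbed-iterate descent argument on the averaged sequence $\hat{x}_t$, mirroring the strongly convex Theorem~\ref{thm:sc-convergence-theorem} but tracking functional suboptimality rather than distance contraction. First I would write the averaged update as $\hat{x}_{t+1} = \hat{x}_t - \gamma \bar{g}_t$ with $\bar{g}_t = \avemm g(f, x_t^m, z_t^m)$, expand the squared distance to the optimum, and condition on the filtration $\F_t$ generated by the iterates at step $t$ (so that $\E_t \bar{g}_t = \avemm \nabla f(x_t^m)$):
\[ \E_t \sqn{\hat{x}_{t+1} - x_\ast} = \sqn{\hat{x}_t - x_\ast} - 2\gamma \avemm \ev{\nabla f(x_t^m), \hat{x}_t - x_\ast} + \gamma^2 \E_t \sqn{\bar{g}_t}. \]
The whole proof then reduces to lower-bounding the cross term and upper-bounding the gradient-norm term, in each case absorbing the mismatch between the drifted local points $x_t^m$ and their average $\hat{x}_t$ into a controllable multiple of $V_t$.

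For the cross term I would split $\ev{\nabla f(x_t^m), \hat{x}_t - x_\ast}$ into $\ev{\nabla f(x_t^m), x_t^m - x_\ast} + \ev{\nabla f(x_t^m), \hat{x}_t - x_t^m}$, lower-bound the first piece by convexity ($\ge f(x_t^m) - f(x_\ast)$) and the second by $L$-smoothness ($\ge f(\hat{x}_t) - f(x_t^m) - \tfrac{L}{2}\sqn{\hat{x}_t - x_t^m}$); averaging over $m$ cancels the $f(x_t^m)$ terms and yields $\avemm \ev{\nabla f(x_t^m), \hat{x}_t - x_\ast} \ge f(\hat{x}_t) - f(x_\ast) - \tfrac{L}{2} V_t$. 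For the gradient-norm term I would decompose $\E_t\sqn{\bar g_t}$ into its conditional variance plus $\sqn{\avemm \nabla f(x_t^m)}$; using conditional independence of the samples across nodes together with Assumption~\ref{asm:uniformly-bounded-variance} bounds the variance by $\sigma^2/M$, while Jensen's inequality, the standard smooth-convex bound $\sqn{\nabla f(x)} \le 2L(f(x) - f(x_\ast))$, and once more $L$-smoothness give $\sqn{\avemm \nabla f(x_t^m)} \le 2L\br{f(\hat{x}_t) - f(x_\ast)} + L^2 V_t$.

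Substituting both bounds produces a one-step inequality of the form $\E_t \sqn{\hat{x}_{t+1} - x_\ast} \le \sqn{\hat{x}_t - x_\ast} - 2\gamma(1 - L\gamma)\br{f(\hat{x}_t) - f(x_\ast)} + (\gamma L + \gamma^2 L^2) V_t + \tfrac{\gamma^2 \sigma^2}{M}$. Here the constraint $\gamma \le \tfrac{1}{4L}$ is exactly what I would use to keep $1 - L\gamma$ bounded away from zero, so the coefficient of the suboptimality gap stays positive; careful bookkeeping of the remaining $\gamma^2$ factors recovers the constants in the statement. I would then take full expectations, apply Lemma~\ref{lemma:uniform-var-iterate-variance-bound} to replace $\E V_t$ by $(H-1)\gamma^2\sigma^2$, telescope over $t$, drop the nonnegative terminal distance, divide by $\gamma T$, and finish with Jensen's inequality $f(\bar{x}_T) \le \tfrac{1}{T}\sum_t f(\hat{x}_t)$ to pass from the average of per-step gaps to $\E\br{f(\bar{x}_T) - f(x_\ast)}$.

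I expect the main obstacle to be the simultaneous control of both the convexity lower bound and the smoothness upper bound: every discrepancy between the local $x_t^m$ and the average $\hat{x}_t$ must be funneled into a multiple of $V_t$ rather than into an uncontrolled gradient-norm or function-value term, and the $\gamma^2$ contribution of the gradient-norm bound must be reconciled with the linear-in-$\gamma$ descent term. This is where the stepsize restriction $\gamma \le \tfrac{1}{4L}$ does the essential work, guaranteeing that the coefficient of $f(\hat{x}_t) - f(x_\ast)$ remains strictly positive after the $\gamma^2$ terms are folded in; the remainder is routine telescoping and an application of the already-established variance bound.
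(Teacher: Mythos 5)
Your proposal is correct, and its skeleton matches the paper's: a one-step perturbed-iterate recursion for $\sqn{\hat{x}_t - x_\ast}$, the bound $\ec{V_t} \leq (H-1)\gamma^2\sigma^2$ from Lemma~\ref{lemma:uniform-var-iterate-variance-bound}, telescoping over $t$, and a final application of Jensen's inequality. Where you genuinely differ is in how the one-step recursion is obtained. The paper simply invokes Lemma~\ref{lemma:iterate-one-recursion} (imported from \citep{Stich2018}) together with Lemma~\ref{lemma:minibatch-variance-reduction}; the underlying argument (reproduced in the paper's Lemma~\ref{lemma:perturbed-iterate-analysis} for the finite-sum setting) controls the drift term $-2\ev{\hat{x}_t - x_t^m, \nabla f(x_t^m)}$ by Young's inequality with parameter $2L$ plus the bound $\sqn{\nabla f(x_t^m)} \leq 2L\br{f(x_t^m)-f(x_\ast)}$, keeps the functional gap at the local points $x_t^m$, and only converts to $f(\hat{x}_t)$ by Jensen at the very end. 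You instead lower-bound $\ev{\nabla f(x_t^m), \hat{x}_t - x_t^m}$ directly by the descent lemma, so the $f(x_t^m)$ terms cancel upon averaging over $m$, and you bound $\sqn{\avemm \nabla f(x_t^m)}$ by Jensen, the same smooth-convex gradient bound, and one more averaged smoothness step (valid because the linear terms vanish, since $\avemm x_t^m = \hat{x}_t$). Both derivations are sound, and yours is self-contained and yields slightly tighter constants: your coefficient on $V_t$ is $\gamma L + \gamma^2 L^2 \leq \tfrac{5}{4}\gamma L$ rather than $2\gamma L$, and your descent coefficient is $2\gamma(1-\gamma L) \geq \tfrac{3}{2}\gamma$ rather than $\tfrac{\gamma}{2}$, so after telescoping you recover the stated bound $\frac{2}{\gamma T}\sqn{x_0 - x_\ast} + \frac{2\gamma\sigma^2}{M} + 4\gamma^2 L \sigma^2 (H-1)$ with room to spare. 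What the paper's route buys in exchange is brevity and reuse: the same imported lemma also drives Theorem~\ref{thm:sc-convergence-theorem}, so the weakly convex case follows in a few lines.
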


Theorem~\ref{thm:weakly-convex-thm} essentially tells the same story as Theorem~\ref{thm:sc-convergence-theorem}: convergence of local SGD is the same as Minibatch SGD up to an additive constant whose size can be controlled by controlling $H$.
 
\begin{corollary}
    \label{corollary:weakly-convex-convergence-iid}
     Assume that $T \geq M$. Choosing $\gamma = \frac{\sqrt{M}}{4 L \sqrt{T}}$, then substituting in \eqref{eq:thm-wc-convergence-rate} we have,
     \begin{align*}
          \begin{split}
               \ec{f(\bar{x}_T) - f(x_\ast)} \leq &\frac{8 \sqn{x_0 - x_\ast}}{\sqrt{M T}} + \frac{\sigma^2}{2 L \sqrt{M T}} \\
               &+ \frac{\sigma^2 M \br{H - 1}}{L T}.
          \end{split}
     \end{align*}
\end{corollary}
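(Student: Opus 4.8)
The plan is to apply Theorem~\ref{thm:weakly-convex-thm} directly with the prescribed constant stepsize $\gamma = \frac{\sqrt{M}}{4L\sqrt{T}}$ and then simplify the resulting bound term by term. Since the corollary is an immediate specialization of \eqref{eq:thm-wc-convergence-rate}, there is essentially no new analytic content: the proof reduces to (i) checking that the chosen $\gamma$ is admissible for the theorem, and (ii) carrying out three routine algebraic simplifications. I would lay the argument out in exactly that order.

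First I would verify admissibility. Theorem~\ref{thm:weakly-convex-thm} requires $0 \leq \gamma \leq \frac{1}{4L}$, and for the chosen stepsize $\gamma \leq \frac{1}{4L}$ is equivalent to $\frac{\sqrt{M}}{\sqrt{T}} \leq 1$, i.e.\ to $M \leq T$. This is precisely the hypothesis $T \geq M$, so that assumption is present exactly to license the application of the theorem (positivity of $\gamma$ being automatic). This is the only step that is not purely mechanical, so I would state it explicitly up front and flag that $T \geq M$ is doing real work here.

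With admissibility in hand, I would substitute $\gamma = \frac{\sqrt{M}}{4L\sqrt{T}}$ into each of the three summands of \eqref{eq:thm-wc-convergence-rate}. The initial-distance term $\frac{2}{\gamma T}\sqn{x_0 - x_\ast}$ collapses to $\frac{8L}{\sqrt{MT}}\sqn{x_0 - x_\ast}$; the statistical term $\frac{2\gamma\sigma^2}{M}$ becomes $\frac{\sigma^2}{2L\sqrt{MT}}$, matching the stated second term exactly; and the drift term $4\gamma^2 L\sigma^2\br{H-1}$ simplifies to $\frac{\sigma^2 M\br{H-1}}{4LT}$, which I would then bound above by the stated $\frac{\sigma^2 M\br{H-1}}{LT}$ (a harmless relaxation by the constant factor $4$). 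Summing the three pieces reproduces the stated bound. I do not anticipate any genuine obstacle: the entire difficulty has already been absorbed into Theorem~\ref{thm:weakly-convex-thm}, and what remains is bookkeeping, with the one point deserving care being the $T \geq M$ admissibility check noted above.
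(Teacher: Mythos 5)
Your proposal is correct and takes exactly the paper's intended route: the corollary is stated as a direct substitution of $\gamma = \frac{\sqrt{M}}{4L\sqrt{T}}$ into \eqref{eq:thm-wc-convergence-rate}, with the hypothesis $T \geq M$ serving precisely to guarantee the admissibility condition $\gamma \leq \frac{1}{4L}$, and your term-by-term simplification (including relaxing $\frac{\sigma^2 M (H-1)}{4LT}$ to $\frac{\sigma^2 M (H-1)}{LT}$) is the whole content. One remark: your computation of the initial-distance term correctly yields $\frac{8L\sqn{x_0 - x_\ast}}{\sqrt{MT}}$, which reveals that the factor $L$ is missing from the corollary as printed in the paper (compare Corollaries~\ref{corollary:wc-iid-conv-unbounded-var} and~\ref{corollary:wc-noniid-unbounded-var}, where it appears); this is a typo in the statement, not a flaw in your derivation, though your closing claim that the three pieces ``reproduce the stated bound'' holds only up to that correction.
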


\textbf{Linear speedup and optimal $H$.} From Corollary~\ref{corollary:weakly-convex-convergence-iid} we see that if we choose $H = \mc{O}(\sqrt{T} M^{-3/2})$ then we obtain a linear speedup, and the number of communication steps is then $C = T/H = \Omega\br{M^{3/2} T^{1/2}}$, and we get that the optimal $H$ is then $H = 1 + \floor{ T^{1/2} M^{-3/2} }$.

The previous results were obtained under Assumption~\ref{asm:uniformly-bounded-variance}. Unfortunately, this assumption does not easily capture the finite-sum minimization scenario where $f(x) = \frac{1}{n} \sum_{i=1}^{n} f_i (x)$ and each stochastic gradient $g_t$ is sampled uniformly at random from the sum. %Finite-sum minimization is the dominant setting encountered in large-scale machine learning problems coming from empirical risk minimization \citep{Bottou18}. 

Using smaller stepsizes and more involved proof techniques, we can show that our results still hold in the finite-sum setting. For strongly-convex functions, the next theorem shows that the same convergence guarantee as Theorem~\ref{thm:sc-convergence-theorem} can be attained.
\begin{figure*}[t]
\centering
	\includegraphics[scale=0.23]{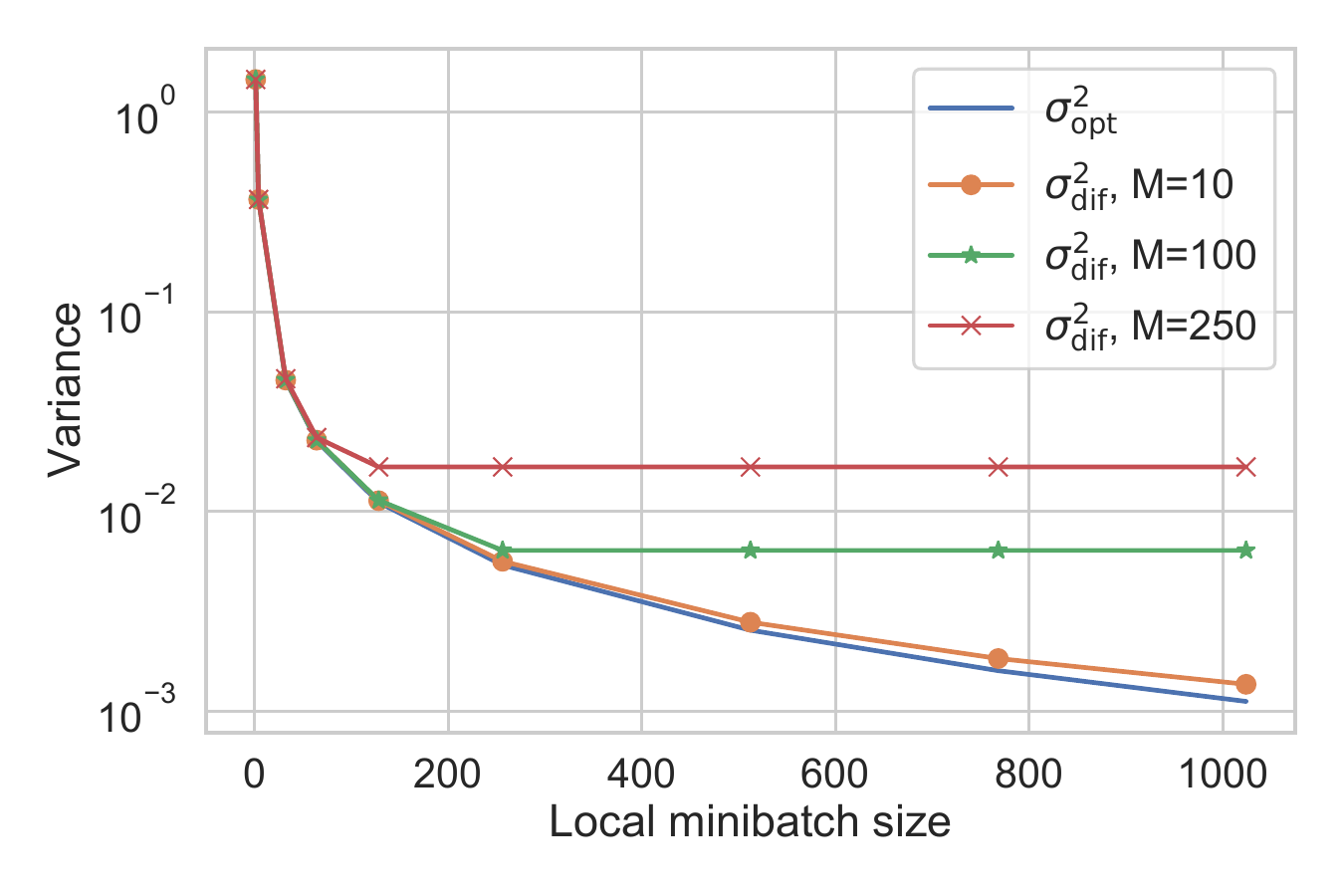}
	\includegraphics[scale=0.23]{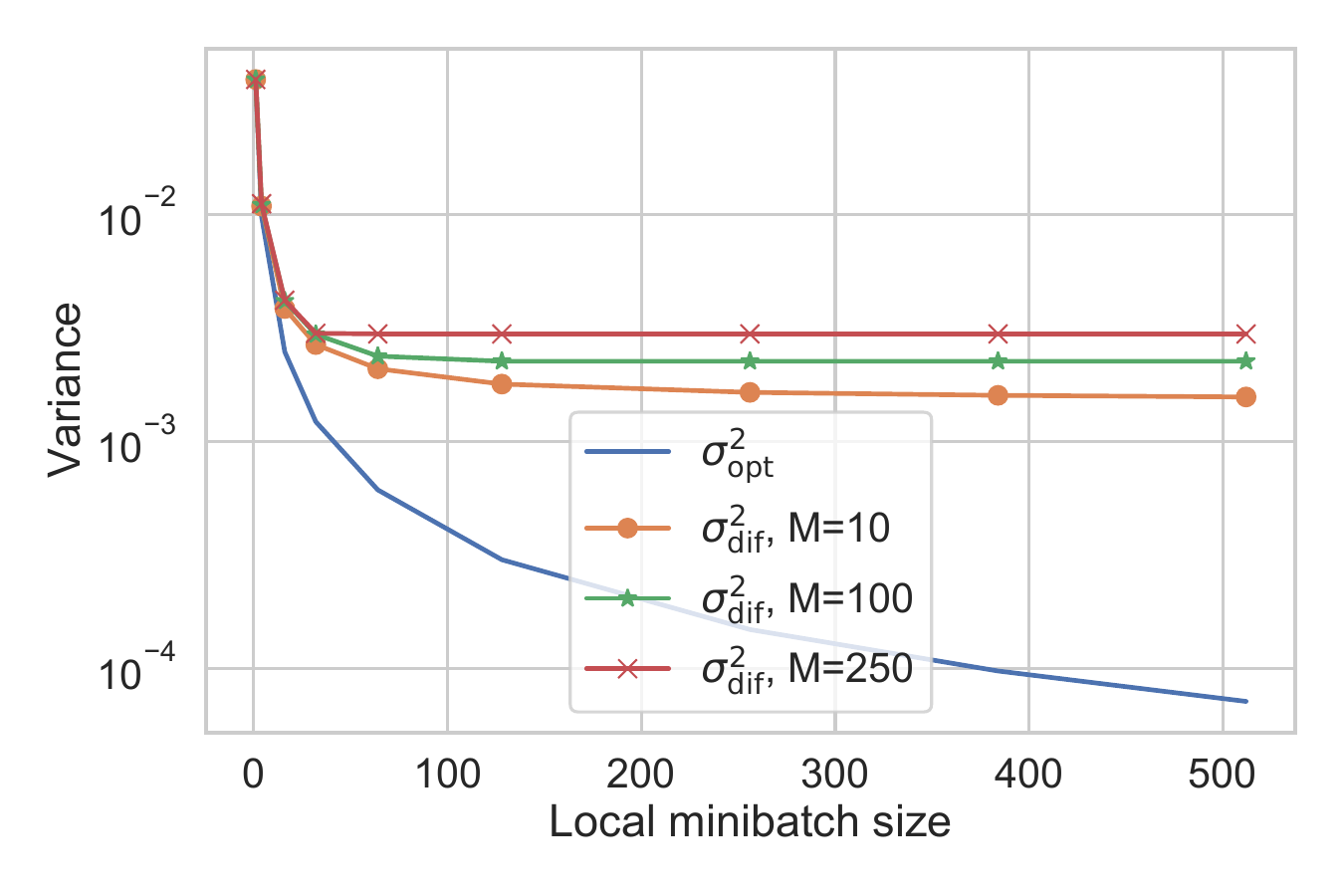}
	\includegraphics[scale=0.23]{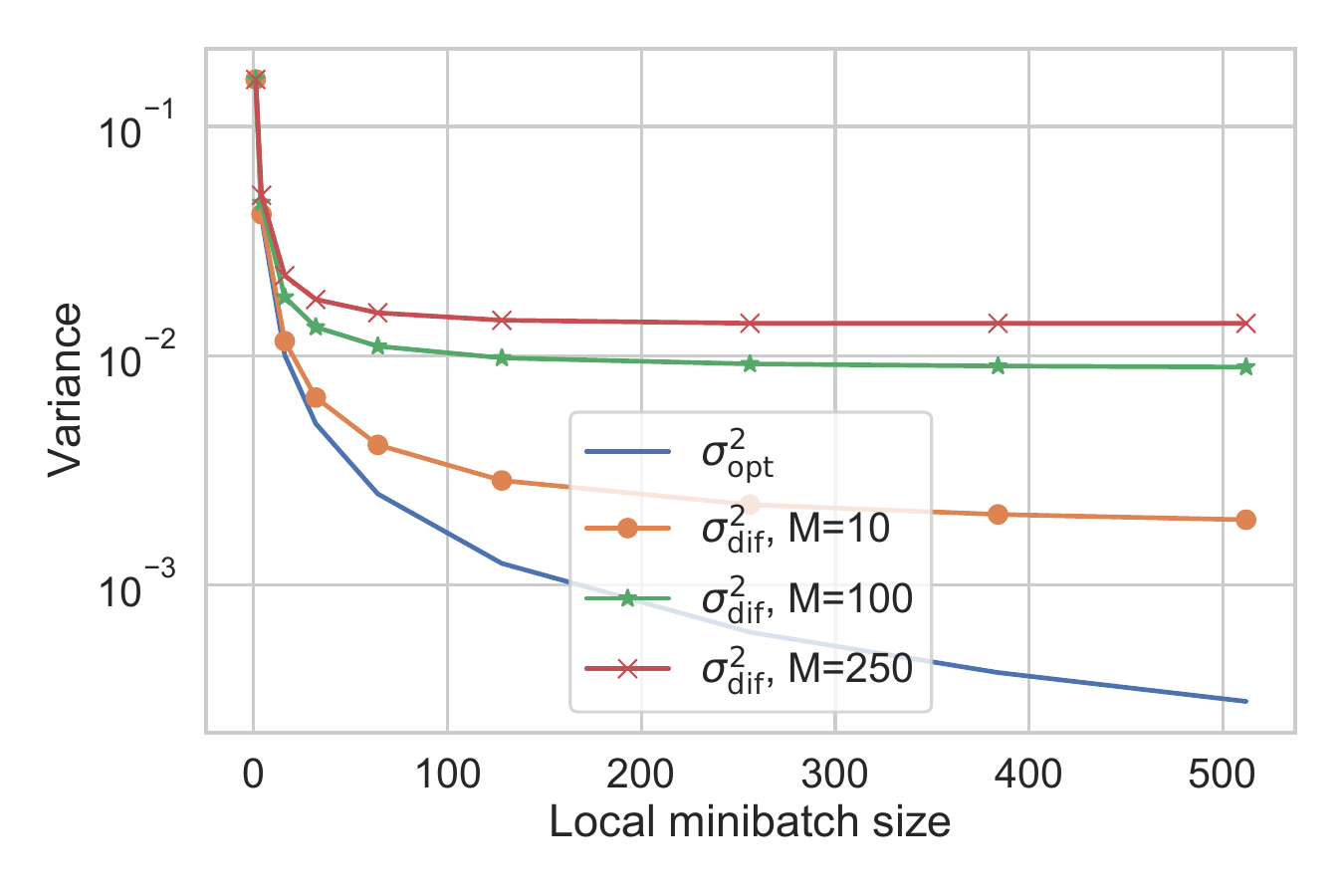}
	\caption{The effect of the dataset and number of workers $M$ on the variance parameters. Left: `a8a', middle: `mushrooms', right: `w8a' dataset. We use uniform sampling of data points, so $\sigmaopt^2$ is the same as $\sigmaf^2$ with $M=1$, while for higher values of $M$ the value of $\sigmaf^2$ might be drastically larger than $\sigmaopt^2$.}
	\label{fig:variance}
\end{figure*}
\begin{theorem}
    \label{theorem:sc-unbounded-variance-iid}
    Suppose that Assumptions~\ref{asm:convexity-and-smoothness} and \ref{asm:finite-sum-stochastic-gradients} hold with $\mu > 0$. Suppose that Algorithm~\ref{alg:local_sgd} is run for identical data with $\max_{p} \abs{t_{p} - t_{p+1}} \leq H$ for some $H \geq 1$ and with a stepsize $\gamma > 0$ chosen such that 
    $ \gamma \leq \min \pbr{\frac{1}{4L \br{1 + \frac{2}{M}}}, \frac{1}{\mu + 8 L \br{H-1}} }. $
    Then for any timestep $t$ such that synchronization occurs,
    \begin{align}
        \label{eq:thm-sc-const-gamma-unbounded-variance}
        \begin{split}
            \ecn{\hat{x}_t - x_\ast} &\leq \br{1 - \gamma \mu}^{t} \ecn{x_0 - x_\ast} \\
            &+ \frac{2 \gamma \sigmaopt^2}{\mu M} + \frac{4 \sigmaopt^2 \gamma^2 \br{H - 1} L}{\mu}.
        \end{split}
    \end{align}
\end{theorem}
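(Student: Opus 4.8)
The plan is to mirror the perturbed-iterate argument behind Theorem~\ref{thm:sc-convergence-theorem}, but to replace every appeal to the uniform variance bound of Assumption~\ref{asm:uniformly-bounded-variance} with the smoothness-plus-convexity estimates that Assumption~\ref{asm:finite-sum-stochastic-gradients} makes available. I would begin with the virtual averaged sequence: since averaging is a no-op for $\hat{x}_t$, whether or not synchronization occurs we have the clean recurrence $\hat{x}_{t+1} = \hat{x}_t - \gamma \bar{g}_t$ with $\bar{g}_t = \avemm g(f, x_t^m, z_t^m)$. Expanding $\sqn{\hat{x}_{t+1} - x_\ast}$ and taking expectation conditional on the past gives the three usual pieces: the previous distance, the inner-product term $-\frac{2\gamma}{M}\summ \ev{\nabla f(x_t^m), \hat{x}_t - x_\ast}$, and the second-moment term $\gamma^2 \ec{\sqn{\bar{g}_t}}$. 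The inner product is handled by writing $\hat{x}_t - x_\ast = \br{x_t^m - x_\ast} + \br{\hat{x}_t - x_t^m}$, invoking $\mu$-strong convexity on the first part and the identity $\avemm \sqn{x_t^m - x_\ast} = \sqn{\hat{x}_t - x_\ast} + V_t$, which produces the contraction factor $\br{1 - \gamma\mu}$, a negative multiple of $f(x_t^m) - f(x_\ast)$, and a cross term controlled by Young's inequality in terms of $V_t$.

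The decisive step, and the place where the two assumptions genuinely diverge, is the bound on $\ec{\sqn{\bar{g}_t}}$. Here I would use the standard inequality that for an $L$-smooth convex realization $f(\cdot, z)$, $\sqn{\nabla f(x, z) - \nabla f(x_\ast, z)} \le 2L\br{f(x, z) - f(x_\ast, z) - \ev{\nabla f(x_\ast, z), x - x_\ast}}$; taking expectation over $z$ and using $\nabla f(x_\ast) = 0$ turns this into $\ec[z]{\sqn{\nabla f(x, z) - \nabla f(x_\ast, z)}} \le 2L\br{f(x) - f(x_\ast)}$. Splitting $\nabla f(x,z) = \br{\nabla f(x,z) - \nabla f(x_\ast, z)} + \nabla f(x_\ast, z)$ and exploiting independence of the samples across the $M$ nodes then yields a bound of the form $\ec{\sqn{\bar{g}_t}} \lesssim L \cdot \avemm \br{f(x_t^m) - f(x_\ast)} + \frac{\sigmaopt^2}{M}$, i.e.\ the noise floor is $\sigmaopt^2/M$ rather than $\sigma^2/M$, at the cost of extra function-suboptimality terms. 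Choosing $\gamma$ small enough --- this is exactly where the constraint $\gamma \le \frac{1}{4L(1 + 2/M)}$ enters --- ensures these positive $f(x_t^m) - f(x_\ast)$ contributions are dominated by the negative ones harvested from strong convexity.

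The remaining ingredient is a replacement for Lemma~\ref{lemma:uniform-var-iterate-variance-bound}: an epoch bound on $\ec{V_t}$ under Assumption~\ref{asm:finite-sum-stochastic-gradients}. Because synchronization resets $V_{t_p} = 0$ and the drift accumulates over at most $H - 1$ local steps, I expect a bound of the shape $\ec{V_t} \lesssim \gamma^2 (H-1) L \sum \br{f(x_s^m) - f(x_\ast)} + \gamma^2 (H-1) \sigmaopt^2$, again with suboptimality terms in place of a clean $\sigma^2$. Substituting this into the one-step recurrence and summing over an epoch, the function-value terms coming from $V_t$ must be absorbed into the negative suboptimality already present; this is enforced by the second stepsize cap $\gamma \le \frac{1}{\mu + 8L(H-1)}$. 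What survives is a geometric recurrence $\ec{\sqn{\hat{x}_{t+1} - x_\ast}} \le \br{1-\gamma\mu}\ec{\sqn{\hat{x}_t - x_\ast}} + \frac{2\gamma^2 \sigmaopt^2}{M} + 4\sigmaopt^2 \gamma^3 (H-1) L$, which unrolls to the stated bound once the geometric sum $\sum_{k} \br{1-\gamma\mu}^k \le \frac{1}{\gamma\mu}$ is applied to the two constant terms.

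The main obstacle I anticipate is bookkeeping the two families of $f(x_t^m) - f(x_\ast)$ terms --- those generated by the second-moment bound and those generated by the $V_t$ bound --- simultaneously, since both must be shown nonpositive in aggregate under a single choice of $\gamma$. Unlike the bounded-variance analysis, there is no uniform $\sigma^2$ to hide behind, so the argument only closes if the coupling between the $H$-dependent stepsize restriction and the smoothness-convexity constants is tracked exactly; pinning down the constants $4L\br{1+2/M}$ and $8L(H-1)$ is the crux.
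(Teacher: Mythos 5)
Your plan is essentially the paper's own proof: a perturbed-iterate recursion yielding the contraction $(1-\gamma\mu)$ plus negative suboptimality terms (the paper's Lemmas~\ref{lemma:perturbed-iterate-analysis} and~\ref{lemma:optimality-gap-contraction}, where the cap $\gamma \leq \tfrac{1}{4L(1+2/M)}$ enters), second-moment bounds built from smoothness of the realizations and the variance at the optimum $\sigmaopt^2$ (Lemmas~\ref{lemma:gradient-smoothness-bound} and~\ref{lemma:minibatch-gradient-noise-reduction}), an epoch-wise bound on $\ec{V_t}$ whose suboptimality terms are absorbed by the negative ones under the cap $\gamma \leq \tfrac{1}{\mu + 8L(H-1)}$, and a final geometric unrolling. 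The one point where your stated heuristic would not deliver the theorem as written is the noise term in the $\ec{V_t}$ bound: ``drift accumulates over at most $H-1$ steps'' executed via a direct Jensen expansion over the epoch gives a $\gamma^2(H-1)^2\sigmaopt^2$ term (this is exactly what the paper does for heterogeneous data in Lemma~\ref{lemma:iterate-deviation-epoch}, and why Theorem~\ref{thm:wc-noniid-unbounded-var} has quadratic dependence on $H$), so to obtain the linear $(H-1)$ dependence you claim --- and the theorem states --- you must instead run the one-step conditional recursion of Lemmas~\ref{lemma:average-iterate-deviation-recursion} and~\ref{lemma:epoch-iterate-deviation-bound}, in which the stochastic noise accumulates variance-wise, adding only $2\gamma^2\sigmaopt^2$ per local step.
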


As a corollary, we can obtain an asymptotic convergence rate by choosing specific stepsizes $\gamma$ and $H$.

\begin{corollary}
    \label{corollary:unbounded-var-convergence}
    Let $a = 18 \kappa t$ for some $t > 0$, let $H \leq t$ and choose $\gamma = \frac{1}{\mu a} \leq \frac{1}{9 L H}$. We substitute in \eqref{eq:thm-sc-const-gamma-unbounded-variance} and take $T = 18 a \log a$ steps, then for $r_{t} \eqdef \hat{x}_t - x_\ast$,
    \begin{align*}
%        \label{eq:corr-unbounded-var-sc-iid}
           \ecn{r_{t}} = \mathcal{\tilde{O}} \Bigg ( \frac{\sqn{r_{0}}}{T^2} &+ \frac{\sigmaopt^2}{\mu^2 M T} + \frac{\sigmaopt^2 \kappa (H-1)}{\mu^2 T^2} \Bigg ).
    \end{align*}
\end{corollary}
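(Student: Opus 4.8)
The plan is to treat the bound~\eqref{eq:thm-sc-const-gamma-unbounded-variance} of Theorem~\ref{theorem:sc-unbounded-variance-iid} as a black box and substitute the prescribed values $a = 18\kappa t$, $\gamma = \frac{1}{\mu a}$, and $T = 18 a \log a$, simplifying each of the three resulting terms into the claimed $\tilde{\mathcal{O}}$ form. Before invoking the theorem I first verify that $\gamma = \frac{1}{\mu a}$ is admissible. Using $\mu \kappa = L$ and $a = 18 \kappa t$ I rewrite $\gamma = \frac{1}{18 L t}$; since $H \le t$ this gives $\gamma \le \frac{1}{18 L H} \le \frac{1}{9 L H}$, which is the inequality asserted in the statement. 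From $\gamma \le \frac{1}{9LH}$ the second hypothesis $\gamma \le \frac{1}{\mu + 8L(H-1)}$ follows, because $9LH \ge \mu + 8L(H-1)$ reduces to $LH + 8L \ge \mu$, true since $\mu \le L$ and $H \ge 1$. For the first hypothesis, note that $H \ge 1$ and $H \le t$ force $t \ge 1$, so $\gamma = \frac{1}{18Lt} \le \frac{1}{18L} \le \frac{1}{4L\br{1 + 2/M}}$ for every $M \ge 1$; thus the theorem applies.

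With admissibility settled I would bound the three terms in~\eqref{eq:thm-sc-const-gamma-unbounded-variance} separately. For the geometric term I use $1 - x \le \exp(-x)$ together with $\gamma\mu = \frac{1}{a}$ to get $\br{1 - \gamma\mu}^T \le \exp\br{-T/a} = \exp\br{-18\log a} = a^{-18}$; since $T = 18 a \log a$ gives $T^2 = \tilde{\mathcal{O}}\br{a^2}$, hence $a^{-2} = \tilde{\mathcal{O}}\br{1/T^2}$, and $a^{-18} \le a^{-2}$, this term is $\tilde{\mathcal{O}}\br{\sqn{r_0}/T^2}$. For the second term I substitute $\gamma = \frac{1}{\mu a}$ to obtain $\frac{2\sigmaopt^2}{\mu^2 M a}$ and then use $\frac{1}{a} = \frac{18 \log a}{T}$, landing on $\tilde{\mathcal{O}}\br{\frac{\sigmaopt^2}{\mu^2 M T}}$. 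For the third term I substitute $\gamma^2 = \frac{1}{\mu^2 a^2}$ and $L/\mu = \kappa$ to get $\frac{4 \sigmaopt^2 (H-1)\kappa}{\mu^2 a^2}$, then apply $\frac{1}{a^2} = \frac{(18\log a)^2}{T^2}$ to reach $\tilde{\mathcal{O}}\br{\frac{\sigmaopt^2 \kappa (H-1)}{\mu^2 T^2}}$. Summing the three bounds yields the claim.

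The content is entirely mechanical, so the ``main obstacle'' is bookkeeping rather than ideas: I must ensure that the admissibility conditions are genuinely implied by $H \le t$ and the definition of $a$, and that every factor of $\log a$ (equivalently $\log T$) is legitimately absorbed into $\tilde{\mathcal{O}}(\cdot)$ rather than concealing a true polynomial dependence. The one subtlety worth double-checking is the constant-factor comparison for the smoothness constraint: the weaker bound $\gamma \le \frac{1}{9L}$ does not by itself imply $\gamma \le \frac{1}{4L(1+2/M)}$ when $M = 1$, so the verification must use the exact value $\gamma = \frac{1}{18Lt}$ together with $t \ge 1$, as done above.
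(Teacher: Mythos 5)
Your proposal is correct and follows exactly the route the paper intends: direct substitution of $\gamma = \frac{1}{\mu a}$ and $T = 18 a \log a$ into the bound \eqref{eq:thm-sc-const-gamma-unbounded-variance}, with $(1-\gamma\mu)^T \le e^{-T/a} = a^{-18}$ handling the geometric term and $\frac{1}{a} = \frac{18\log a}{T}$ converting the remaining terms, absorbing $\log a \le \log T$ factors into $\tilde{\mathcal{O}}(\cdot)$. Your explicit verification that $\gamma = \frac{1}{18Lt}$ satisfies both stepsize conditions of Theorem~\ref{theorem:sc-unbounded-variance-iid} (using $t \ge H \ge 1$ and $\mu \le L$) is a detail the paper leaves implicit, and you are right that the $M=1$ case is the one requiring the exact value of $\gamma$ rather than just $\gamma \le \frac{1}{9LH}$.
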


Substituting $H = 1 + \floor{t/M} = 1 + \floor{T/(18 \kappa M)}$ in Corollary~\ref{corollary:unbounded-var-convergence} we get an asymptotic convergence rate of $\mc{\tilde{O}} \br{ \frac{\sigmaopt^2}{T M} }$. This preserves the rate of minibatch SGD up to problem-independent constants and polylogarithmic factors, but with possibly fewer communication steps.

\begin{theorem}
    \label{thm:wc-iid-unbounded-var}
    Suppose that Assumptions~\ref{asm:convexity-and-smoothness} and \ref{asm:finite-sum-stochastic-gradients} hold with $\mu = 0$, that a stepsize $\gamma \leq \frac{1}{10 L H}$ is chosen and that Algorithm~\ref{alg:local_sgd} is run on $M \geq 2$ nodes with identical data and with $\sup_{p} \abs{t_p - t_{p+1}} \leq H$, then for any timestep $T$ such that synchronization occurs we have for $\bar{x}_T = \frac{1}{T} \sum_{t=1}^{T} \hat{x}_t$ that
    \begin{align}
        \label{eq:thm-weakly-convex-case-iid-unbounded-var}
        \begin{split}
            \ec{ f(\bar{x}_T) - f(x_\ast) } \leq &\frac{10 \sqn{x_0 - x_\ast} }{\gamma T} + \frac{20 \gamma \sigmaopt^2}{M} \\
            &+ 40 \gamma^2 L \sigmaopt^2 \br{H-1}.
        \end{split}
    \end{align}
\end{theorem}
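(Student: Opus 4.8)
The plan is to follow the perturbed-iterate template of \cite{Stich2018} used throughout this section, tracking the virtual average iterate $\hat{x}_t$, but to replace every appeal to the uniform variance bound by the finite-sum consequence of Assumption~\ref{asm:finite-sum-stochastic-gradients}. The central device is the standard bound for an $L$-smooth convex realization $f(\cdot, z)$, namely $\sqn{\nabla f(x,z) - \nabla f(x_\ast, z)} \leq 2L\br{f(x,z) - f(x_\ast, z) - \ev{\nabla f(x_\ast, z), x - x_\ast}}$; taking expectations, using $\nabla f(x_\ast) = 0$, and noting that the squared-norm-at-$x_\ast$ term averages over the nodes to $\sigmaopt^2$ yields the per-node estimate $\ec[z]{\sqn{\nabla f(x, z)}} \leq 4L\br{f(x) - f(x_\ast)} + 2\ec[z]{\sqn{\nabla f(x_\ast,z)}}$. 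This replaces the constant $\sigma^2$ of Assumption~\ref{asm:uniformly-bounded-variance} by a term proportional to the functional suboptimality, which is the source of all the technical difficulty.

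First I would write the one-step identity for the average, $\hat{x}_{t+1} = \hat{x}_t - \gamma g_t$ with $g_t = \avemm \nabla f(x_t^m, z_t^m)$, expand $\sqn{\hat{x}_{t+1} - x_\ast}$, and take conditional expectation. The cross term is handled by convexity and smoothness: splitting $\ev{\nabla f(x_t^m), \hat{x}_t - x_\ast} = \ev{\nabla f(x_t^m), x_t^m - x_\ast} + \ev{\nabla f(x_t^m), \hat{x}_t - x_t^m}$, lower-bounding the first piece by $f(x_t^m) - f(x_\ast)$ and the second by $f(\hat{x}_t) - f(x_t^m) - \frac{L}{2}\sqn{\hat{x}_t - x_t^m}$, gives $\ec{\ev{g_t, \hat{x}_t - x_\ast}} \geq \br{f(\hat{x}_t) - f(x_\ast)} - \frac{L}{2}V_t$. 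For the second moment I would use independence of the per-node samples to split off a factor $\frac{1}{M}$ on the variance, then apply the smoothness estimate above together with $\frac{1}{M}\summ\br{f(x_t^m) - f(x_\ast)} \leq \br{f(\hat{x}_t) - f(x_\ast)} + \frac{L}{2}V_t$ (again from smoothness and $\summ\br{x_t^m - \hat{x}_t} = 0$). Collecting terms produces a recurrence $\ecn{\hat{x}_{t+1} - x_\ast} \leq \ecn{\hat{x}_t - x_\ast} - 2\gamma\br{1 - cL\gamma}\ec{f(\hat{x}_t) - f(x_\ast)} + \frac{2\gamma^2\sigmaopt^2}{M} + (\text{a multiple of } L\gamma V_t)$ for an absolute constant $c$ depending on $M \geq 2$, where the stepsize restriction keeps the descent coefficient bounded away from zero.

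The heart of the argument is controlling $V_t$, which in this unbounded-gradient regime cannot be bounded by a constant as in Lemma~\ref{lemma:uniform-var-iterate-variance-bound}. I would prove the finite-sum analogue: for $t$ in the epoch following the last synchronization at $t_0$, using that the average minimizes the sum of squared distances, $V_t \leq \avemm \sqn{x_t^m - \hat{x}_{t_0}} \leq \gamma^2 (H-1)\avemm \sum_{s=t_0}^{t-1}\sqn{\nabla f(x_s^m, z_s^m)}$, and then feed in the smoothness bound on the squared gradient norms to express $\ec{V_t}$ through accumulated functional suboptimalities $\ec{f(x_s^m) - f(x_\ast)}$ and a $\gamma^2 (H-1)\sigmaopt^2$ term. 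Summing over an epoch and over epochs, these suboptimalities couple back, via $\frac{1}{M}\summ (f(x_s^m) - f(x_\ast)) \leq (f(\hat{x}_s) - f(x_\ast)) + \frac{L}{2}V_s$, into the very quantity the main recurrence is trying to decrease.

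The main obstacle is precisely this self-referential coupling: the deviation $V_t$ is driven by gradient sizes, which are themselves proportional to the functional suboptimality we are bounding. The resolution is to sum the one-step recurrence over $t = 0, \dots, T-1$, substitute the summed $V_t$ bound, and move the resulting $\sum_t \ec{f(\hat{x}_t) - f(x_\ast)}$ contribution coming from the $V_t$ chain to the left-hand side. For this to leave a strictly positive coefficient on $\sum_t \ec{f(\hat{x}_t) - f(x_\ast)}$, the accumulated penalty $\mathcal{O}(L^2\gamma^2 (H-1))$ must be dominated by the descent term $\mathcal{O}(\gamma)$, which is exactly what the hypothesis $\gamma \leq \frac{1}{10 L H}$ guarantees, the factor $H$ reflecting that deviations accumulate over $H$ local steps. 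Telescoping the distance terms, discarding the nonnegative $\ecn{\hat{x}_T - x_\ast}$, dividing by the net coefficient times $T$, and finally applying Jensen's inequality $f(\bar{x}_T) \leq \frac{1}{T}\sum_{t=1}^T f(\hat{x}_t)$ delivers the stated bound, with the three terms arising respectively from $\sqn{x_0 - x_\ast}$, the $\frac{\gamma^2\sigmaopt^2}{M}$ noise accumulated over $T$ steps, and the residual $\gamma^2 L\sigmaopt^2 (H-1)$ from the deviation chain.
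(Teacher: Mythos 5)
Your overall architecture is exactly the paper's: a perturbed-iterate one-step recurrence with noise $\frac{2\gamma^2\sigmaopt^2}{M}$, a descent term, and a $\gamma L V_t$ penalty (your version is a correct variant of Lemma~\ref{lemma:optimality-gap-contraction}), followed by an epoch-wise bound on $\sum_t \ec{V_t}$, absorption of the coupled suboptimality terms via the stepsize restriction, telescoping, and Jensen. The gap is in how you bound $V_t$. You anchor at the last synchronization time $t_0$, write $x_t^m - \hat{x}_{t_0} = -\gamma \sum_{s=t_0}^{t-1} g_s^m$, and apply Cauchy--Schwarz to get $V_t \leq \gamma^2 (H-1) \avemm \sum_{s=t_0}^{t-1} \sqn{g_s^m}$. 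That is the paper's \emph{heterogeneous-data} argument (Lemma~\ref{lemma:iterate-deviation-epoch}), and it provably costs a factor of $H$ on the noise: feeding in $\ecn{g_s^m} \leq 4 L D_f(x_s^m, x_\ast) + 2\sigma_m^2$, the noise contribution to each $\ec{V_t}$ is $2\gamma^2 (H-1)(t - t_0)\sigmaopt^2 \approx 2\gamma^2 (H-1)^2 \sigmaopt^2$, so after summing the recurrence and dividing by the descent coefficient you end with a third term of order $\gamma^2 L \sigmaopt^2 (H-1)^2$, not the claimed $40 \gamma^2 L \sigmaopt^2 (H-1)$. Your closing sentence asserting a residual linear in $(H-1)$ does not follow from the derivation you describe. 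The distinction is not cosmetic: with $(H-1)^2$ one can only afford $H = \mathcal{O}(T^{1/4} M^{-3/4})$ while preserving linear speedup (the heterogeneous rate of Theorem~\ref{thm:wc-noniid-unbounded-var}), whereas the linear dependence is precisely what licenses $H = \mathcal{O}(T^{1/2} M^{-3/2})$ in Corollary~\ref{corollary:wc-iid-conv-unbounded-var}.

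The reason Cauchy--Schwarz is too lossy is that it discards the martingale cancellation of the stochastic noise across the epoch. Two fixes. (i) The paper's route (Lemma~\ref{lemma:average-iterate-deviation-recursion}): derive a per-step recursion $\ec{V_{t+1}} \leq \ec{V_t} + 2\gamma \ec{D_f(\hat{x}_t, x_\ast)} + 2\gamma^2 \sigmaopt^2$ by taking conditional expectations one step at a time, so that the variance injected per step is only $\mathcal{O}(\gamma^2 \sigmaopt^2)$; the cross term $-\frac{2\gamma}{M}\sum_{m} \ev{x_t^m - \hat{x}_t, \nabla f(x_t^m)}$ is bounded by convexity so that the per-node suboptimalities it produces telescope against those coming from the gradient-variance bound (using $\gamma \leq \frac{1}{2L}$), leaving only $D_f(\hat{x}_t, x_\ast)$. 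Unrolling then gives $\ec{V_t} \leq 2\gamma \sum_{j=t_0}^{t-1} \ec{D_f(\hat{x}_j, x_\ast)} + 2\gamma^2 \sigmaopt^2 (H-1)$, with noise linear in $(H-1)$. (ii) Alternatively, keep your unrolling but first split $g_s^m = \nabla f(x_s^m) + \xi_s^m$, apply Cauchy--Schwarz only to the drift $\sum_s \nabla f(x_s^m)$, and use the martingale-difference identity $\ecn{\sum_s \xi_s^m} = \sum_s \ecn{\xi_s^m}$ for the noise part; the drift part carries the $(H-1)^2$ factor but multiplies only $D_f$ terms, which are absorbed by the stepsize condition anyway. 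With either repair, the noise in $\sum_t \ec{V_t}$ over an epoch becomes $\mathcal{O}(\gamma^2 \sigmaopt^2 (H-1))$ per step, and the rest of your argument goes through to give the stated bound up to bookkeeping of constants.
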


%By specifying $\gamma$ and $H$ we can get an explicit convergence rate in terms of the time horizon $T$:
\begin{corollary}
    \label{corollary:wc-iid-conv-unbounded-var}
    Let $H \leq \frac{\sqrt{T}}{\sqrt{M}}$, then for $\gamma = \frac{\sqrt{M}}{10 L \sqrt{T}}$ we see that $\gamma \leq \frac{1}{10 L H}$, and plugging it into \eqref{eq:thm-weakly-convex-case-iid-unbounded-var} yields 
    \begin{align*}
        \begin{split}
            \ec{f(\bar{x}_{T}) - f(x_\ast)} \leq &\frac{100 L \sqn{x_0 - x_\ast}}{\sqrt{T M}} + \frac{2 \sigmaopt^2}{L \sqrt{T M}} \\
            &+ \frac{2 \sigmaopt^2 M (H-1)}{5 L T}.
        \end{split}
    \end{align*}
\end{corollary}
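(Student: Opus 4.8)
The plan is to obtain this corollary as an immediate consequence of Theorem~\ref{thm:wc-iid-unbounded-var}, so the whole argument reduces to two elementary steps: (i) verifying that the proposed stepsize is admissible for that theorem, and (ii) substituting it into the right-hand side of \eqref{eq:thm-weakly-convex-case-iid-unbounded-var} and simplifying term by term.

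First I would check the stepsize constraint. Theorem~\ref{thm:wc-iid-unbounded-var} requires $\gamma \leq \frac{1}{10 L H}$, and with the choice $\gamma = \frac{\sqrt{M}}{10 L \sqrt{T}}$ this inequality is equivalent to $\frac{\sqrt{M}}{\sqrt{T}} \leq \frac{1}{H}$, i.e.\ to $H \leq \frac{\sqrt{T}}{\sqrt{M}}$, which is exactly the standing hypothesis of the corollary. Hence the theorem is applicable for this $\gamma$, and the bound \eqref{eq:thm-weakly-convex-case-iid-unbounded-var} holds verbatim.

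Second, I would substitute $\gamma$ into each of the three terms, using $\gamma^2 = \frac{M}{100 L^2 T}$. The initialization term becomes $\frac{10 \sqn{x_0 - x_\ast}}{\gamma T} = \frac{100 L \sqn{x_0 - x_\ast}}{\sqrt{T M}}$; the statistical term becomes $\frac{20 \gamma \sigmaopt^2}{M} = \frac{2 \sigmaopt^2}{L \sqrt{T M}}$, where the factor $\frac{\sqrt{M}}{M} = \frac{1}{\sqrt{M}}$ combines with the $\sqrt{T}$ in the denominator of $\gamma$ to produce $\sqrt{TM}$; and the local-drift term becomes $40 \gamma^2 L \sigmaopt^2 (H-1) = \frac{2 \sigmaopt^2 M (H-1)}{5 L T}$. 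Summing the three simplified terms reproduces the claimed bound exactly.

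There is no genuine analytic obstacle here, since all the probabilistic and smoothness content is already packaged inside Theorem~\ref{thm:wc-iid-unbounded-var}. The only thing demanding care is the bookkeeping of the numerical constants—the $10$, $20$, and $40$ from the theorem interacting with the factor $\frac{1}{10}$ hidden inside $\gamma$—and the tracking of the powers of $M$ and $T$, so that each denominator collapses to either $\sqrt{TM}$ or $T$ as stated.
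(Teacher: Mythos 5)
Your proposal is correct and matches the paper's own argument exactly: the corollary is proved by checking that $H \leq \sqrt{T}/\sqrt{M}$ makes $\gamma = \frac{\sqrt{M}}{10 L \sqrt{T}}$ satisfy the constraint $\gamma \leq \frac{1}{10 L H}$ of Theorem~\ref{thm:wc-iid-unbounded-var}, and then substituting this $\gamma$ into each of the three terms of \eqref{eq:thm-weakly-convex-case-iid-unbounded-var}. Your constant bookkeeping ($100 L/\sqrt{TM}$, $2/(L\sqrt{TM})$, and $2M(H-1)/(5LT)$) is also accurate.
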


This is the same result as Corollary~\ref{corollary:weakly-convex-convergence-iid}, and hence we see that choosing $H = \mathcal{O}\br{T^{1/2} M^{-3/2}}$ (when $T > M^3$) yields a linear speedup in the number of nodes $M$.

\subsection{Heterogeneous Data}
We next show that for arbitrarily heterogeneous convex objectives, the convergence of Local SGD is the same as Minibatch SGD plus an error that depends on $H$.

\begin{theorem}
    \label{thm:wc-noniid-unbounded-var}
    Suppose that Assumptions~\ref{asm:convexity-and-smoothness} and \ref{asm:finite-sum-stochastic-gradients} hold with $\mu = 0$ and for heterogeneous data. Then for Algorithm~\ref{alg:local_sgd} run for different data with $M \geq 2$, $\max_{p} \abs{t_p - t_{p+1}} \leq H$, and a stepsize $\gamma > 0$ such that $\gamma \leq \min \pbr{ \frac{1}{4L} \frac{1}{8 L (H-1)}}$, then we have
    \begin{align*}
        \begin{split}
            \ec{f(\bar{x}_T) -f(x_\ast)} \leq &\frac{4 \sqn{r_0}}{\gamma T} + \frac{20 \gamma \sigmaf^2}{M} \\
            &+ 16 \gamma^2 L (H-1)^2 \sigmaf^2.
        \end{split}
    \end{align*}
    where $\bar{x}_T \eqdef \frac{1}{T} \sum_{i=0}^{T-1} \hat{x}_{i}$ and $r_{0} = x_0 - x_\ast$.
\end{theorem}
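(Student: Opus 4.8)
The plan is to reuse the virtual-sequence (perturbed iterate) framework behind the identical-data theorems, but with a bound on the consensus error $V_t$ that accounts for genuine gradient dissimilarity rather than stochastic noise alone. Since the averaged iterate evolves as $\hat{x}_{t+1} = \hat{x}_t - \gamma \hat{g}_t$ with $\hat{g}_t = \avemm \nabla f_m(x_t^m, z_t^m)$ even across local steps, I would start from
\[ \ec{\sqn{\hat{x}_{t+1} - x_\ast}} = \sqn{\hat{x}_t - x_\ast} - 2\gamma \ev{\bar{g}_t, \hat{x}_t - x_\ast} + \gamma^2 \ec{\sqn{\hat{g}_t}}, \]
where $\bar{g}_t \eqdef \avemm \nabla f_m(x_t^m)$ is the conditional mean and the expectation is over the samples $z_t^m$ drawn at step $t$.

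For the cross term I would split $\hat{x}_t - x_\ast = (\hat{x}_t - x_t^m) + (x_t^m - x_\ast)$, bound the first piece by $L$-smoothness of $f_m$ and the second by convexity, then average over $m$ and use $f = \avemm f_m$ to obtain $\ev{\bar{g}_t, \hat{x}_t - x_\ast} \ge f(\hat{x}_t) - f(x_\ast) - \tfrac{L}{2} V_t$. For the squared gradient I would write $\hat{g}_t$ as its mean plus zero-mean noise; because the samples are independent across the $M$ nodes the noise carries a factor $1/M$, which is the source of both the linear speedup and the $\sigmaf^2 / M$ term. I would then control $\sqn{\bar{g}_t}$ by a bound of the form $c_1 L^2 V_t + c_2 L (f(\hat{x}_t) - f(x_\ast))$, using $\nabla f(x_\ast) = 0$, $L$-smoothness, and the identity $\avemm \nabla f_m(x_\ast) = \nabla f(x_\ast) = 0$; the per-node noise is handled through Assumption~\ref{asm:finite-sum-stochastic-gradients}, producing a residual proportional to $\sigmaf^2$ together with a Bregman-type suboptimality that folds back into the descent.

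I expect the main obstacle to be bounding $\sum_t V_t$ across an epoch. Writing $x_t^m - \hat{x}_t = -\gamma \sum_s (g_s^m - \hat{g}_s)$ over the (at most $H-1$) local steps since the last synchronization, the deviation splits into a stochastic part and a deterministic mismatch $\nabla f_m(x_s^m) - \bar{g}_s$, which---unlike the identical-data case of Lemma~\ref{lemma:uniform-var-iterate-variance-bound}---does not vanish when the data differ. Applying Cauchy--Schwarz over the $H-1$ summands yields one factor of $(H-1)$, and summing the resulting per-step dissimilarities over the epoch yields a second, explaining the $(H-1)^2$ in the statement. The delicate point is to express these mismatch terms purely through $\sigmaf^2$ and $L$-smoothness---with no bounded-gradient or bounded-dissimilarity assumption---by comparing $\nabla f_m(x_s^m)$ against $\nabla f_m(x_\ast)$ and absorbing the leftover function-suboptimality into the descent; the stepsize cap $\gamma \le \tfrac{1}{8L(H-1)}$ is exactly what makes this absorption possible.

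Finally, I would rearrange the one-step inequality into $2\gamma(f(\hat{x}_t) - f(x_\ast)) \le \sqn{\hat{x}_t - x_\ast} - \ec{\sqn{\hat{x}_{t+1} - x_\ast}} + (\text{error}_t)$, sum over $t = 0, \dots, T-1$ so that the distances telescope, substitute the epoch bound on $\sum_t V_t$, invoke Jensen's inequality $f(\bar{x}_T) \le \tfrac{1}{T} \sum_t f(\hat{x}_t)$, and divide by $2\gamma T$. Collecting terms should produce exactly the three claimed contributions: $\sqn{r_0}/(\gamma T)$ from telescoping, $\gamma \sigmaf^2 / M$ from the variance, and $\gamma^2 L (H-1)^2 \sigmaf^2$ from the consensus error.
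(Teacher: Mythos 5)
Your proposal is correct and follows essentially the same route as the paper's own proof: the same perturbed-iterate recursion (Lemma~\ref{lemma:optimality-gap-single-recursion}), the same splitting of the cross term via $\hat{x}_t - x_\ast = (\hat{x}_t - x_t^m) + (x_t^m - x_\ast)$ (Lemma~\ref{lemma:inner-product-bound}), the same mean-plus-noise decomposition of the averaged gradient with the $1/M$ variance reduction and comparison against $\nabla f_m(x_\ast, z_m)$ (Lemma~\ref{lemma:average-gradient-bound}), and the same epoch-wise bound on $\sum_t V_t$ in which one factor of $H-1$ comes from Jensen/Cauchy--Schwarz over the local steps, a second from summing over the epoch, and the leftover Bregman terms are absorbed into the descent using the stepsize cap (Lemma~\ref{lemma:iterate-deviation-epoch} and the double-counting argument in the theorem's proof). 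The final assembly---telescoping, Jensen's inequality for $\bar{x}_T$, and division by $\gamma T$---also coincides with the paper, up to bookkeeping of constants.
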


\textbf{Dependence on $\sigmaf$.} We see that the convergence guarantee given by Theorem~\ref{thm:wc-noniid-unbounded-var} shows a dependence on $\sigmaf$, which measures the heterogeneity of the data distribution. In typical (non-federated) distributed learning settings where data is distributed before starting training, this term can very quite significantly depending on how the data is distributed.

\textbf{Dependence on $H$.} We further note that the dependence on $H$ in Theorem~\ref{thm:wc-noniid-unbounded-var} is quadratic rather than linear. This translates to a worse upper bound on the synchronization interval $H$ that still allows convergence, as the next corollary shows.

\begin{figure*}[t]
	\centering
	\includegraphics[scale=0.23]{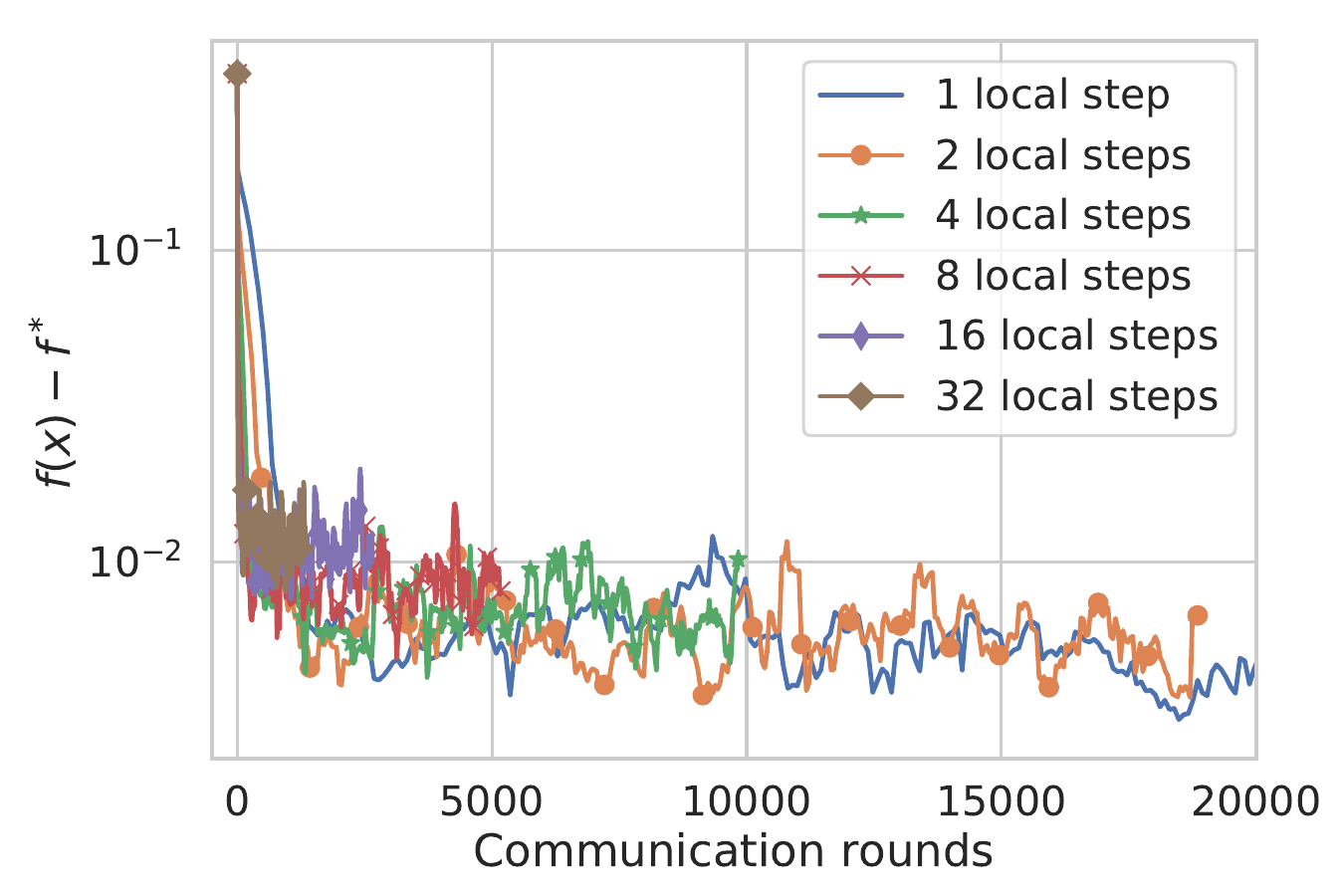}
	\includegraphics[scale=0.23]{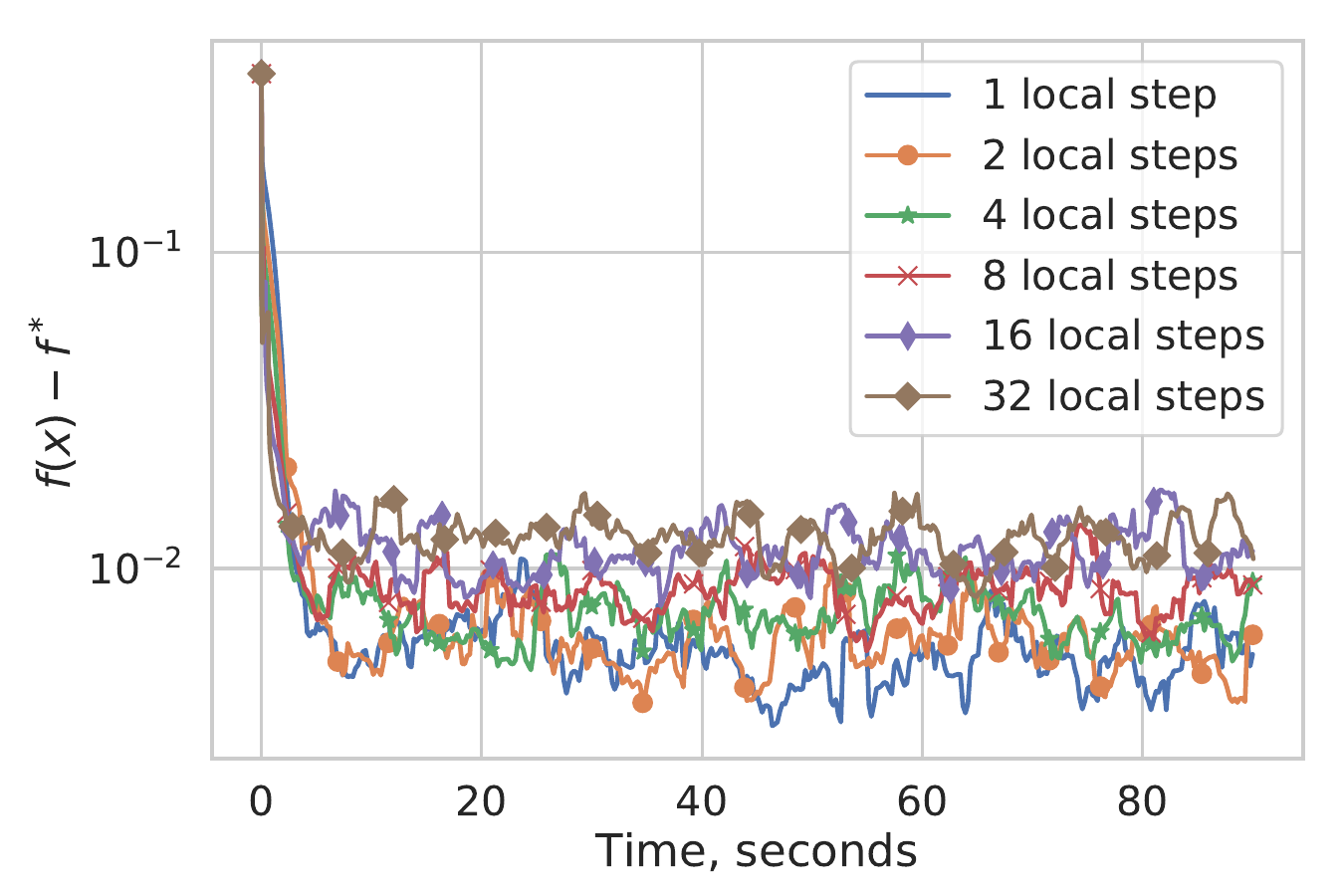}
	\caption{Results on `a9a' dataset, with stepsize $\frac{1}{L}$. For any value of local iterations $H$ the method converged to a neighborhood within a small number of communication rounds due to large stepsizes.}
	\label{fig:a5a_same_data_01}
\end{figure*}
\begin{figure*}[!t]
	\centering
	\includegraphics[scale=0.23]{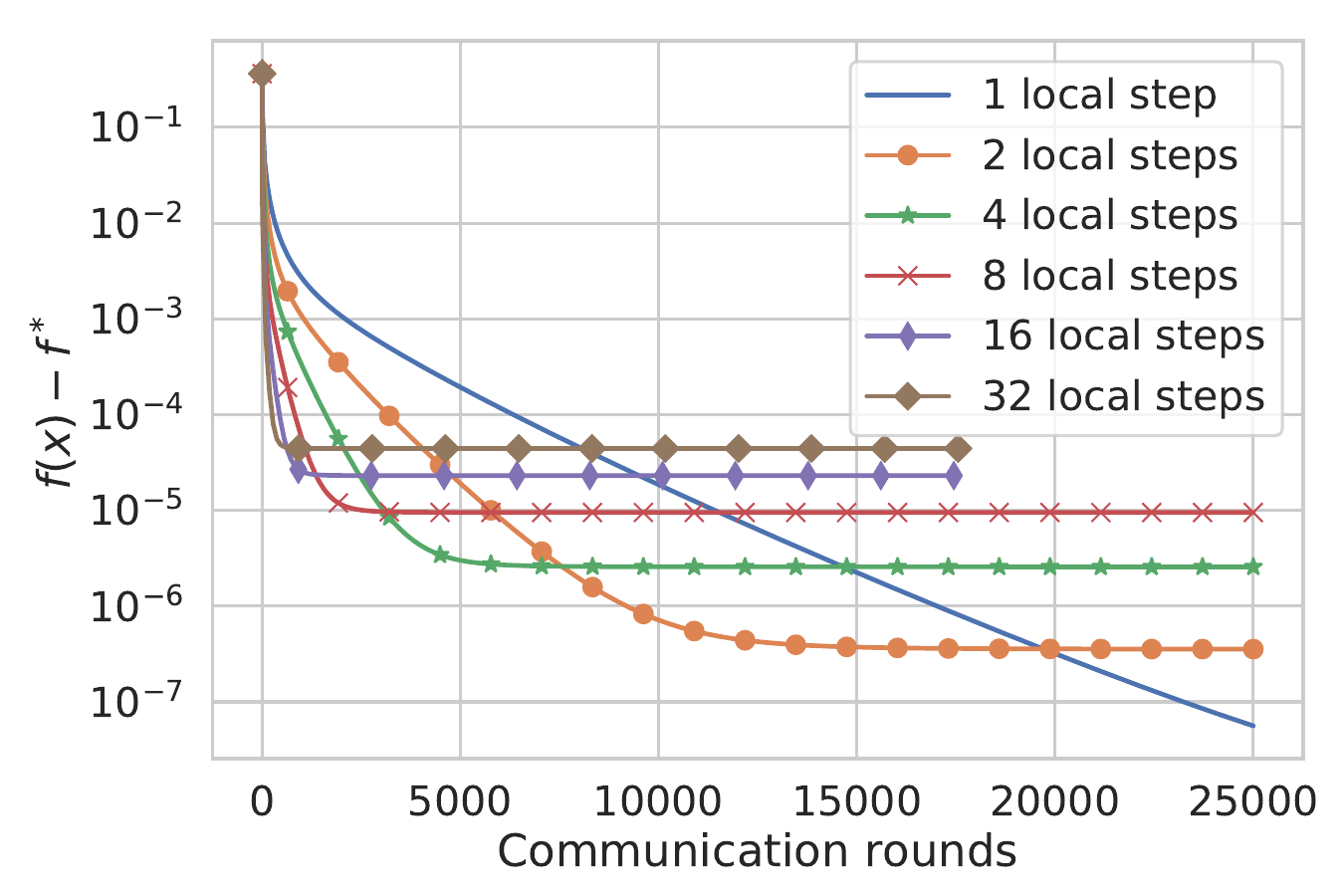}
	\includegraphics[scale=0.23]{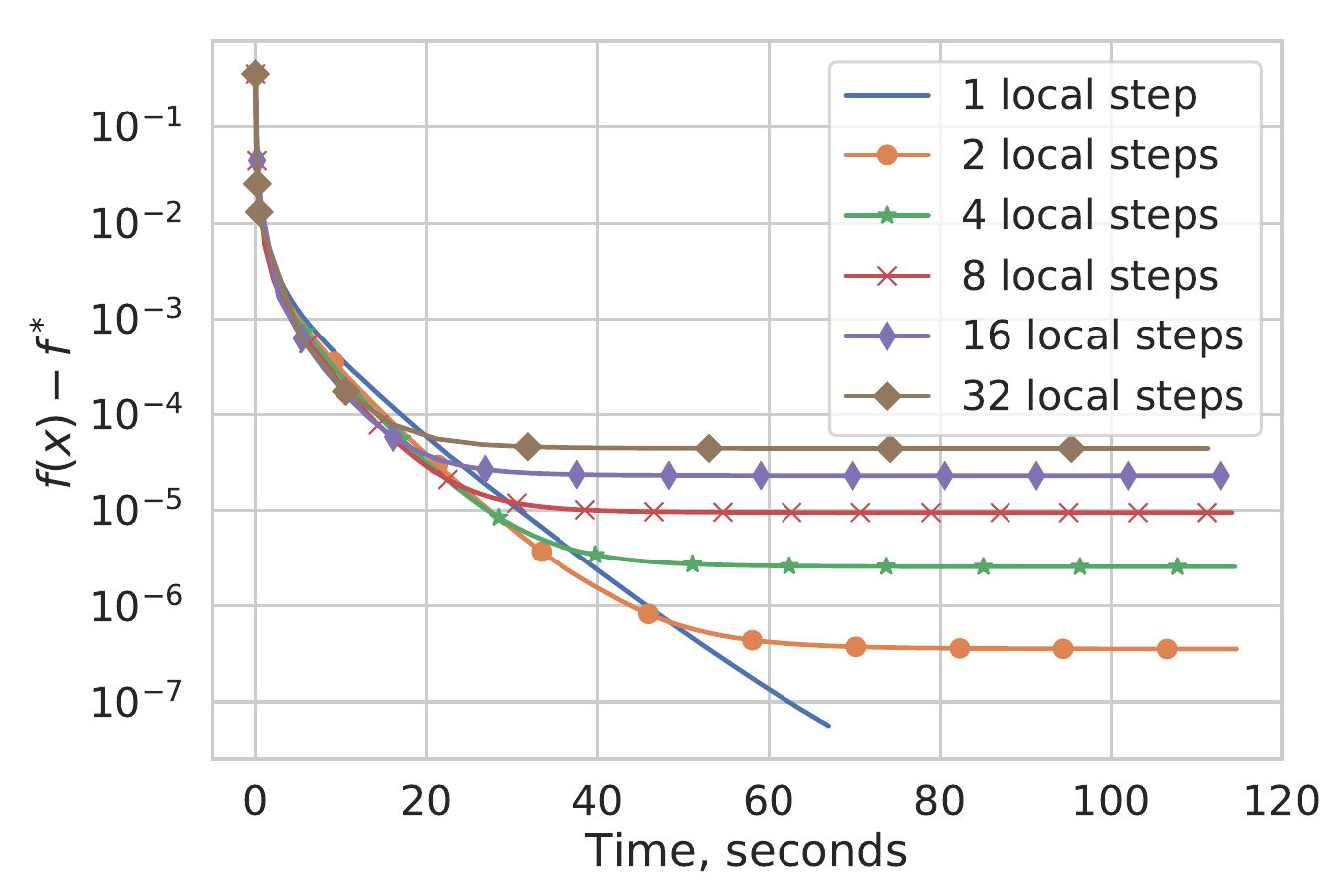}
	\includegraphics[scale=0.23]{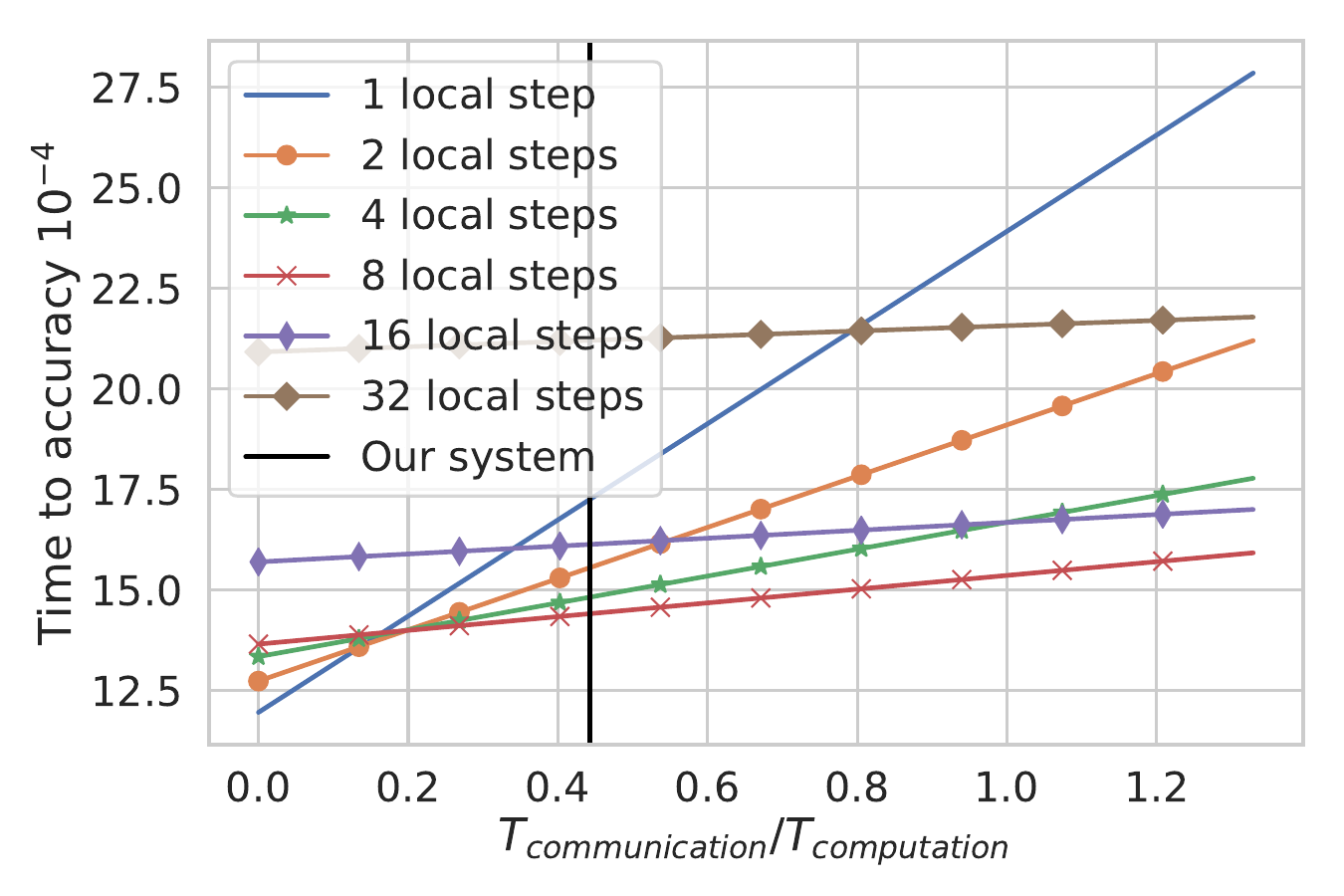}
	\caption{Convergence on heterogeneous data with different number of local steps on the `a5a' dataset. 1 local step corresponds to fully synchronized gradient descent. Left: convergence in terms of communication rounds, which shows a clear advantage of local GD when only limited accuracy is required. Mid plot: wall-clock time might improve only slightly if communication is cheap. Right: what changes with different communication cost.}
	\label{fig:a5a_different_H}
\end{figure*}

\begin{corollary}
    \label{corollary:wc-noniid-unbounded-var}
    Choose $H \leq \frac{\sqrt{T}}{\sqrt{M}}$, then $\gamma = \frac{\sqrt{M}}{8 L \sqrt{T}} \leq \frac{1}{8 H L}$, and hence applying the result of Theorem~\ref{thm:wc-noniid-unbounded-var},
    \begin{align*}
        \begin{split}
            \ec{f (\bar{x}_T) - f (x_\ast)} \leq &\frac{32 L \sqn{x_0 - x_\ast}}{\sqrt{M T}} \\
            &+ \frac{5 \sigmaf^2}{2 L \sqrt{MT}} + \frac{\sigmaf^2 M (H-1)^2}{4 L T}.
        \end{split}
    \end{align*}
\end{corollary}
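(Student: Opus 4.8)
The plan is to derive Corollary~\ref{corollary:wc-noniid-unbounded-var} entirely from Theorem~\ref{thm:wc-noniid-unbounded-var}, without reopening the convergence analysis: the only work is to (i) confirm that the prescribed stepsize $\gamma = \frac{\sqrt{M}}{8 L \sqrt{T}}$ is admissible for the theorem, and then (ii) substitute it into the theorem's three-term bound and simplify.

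First I would check admissibility. The theorem requires $\gamma \leq \min \pbr{ \frac{1}{4L}, \frac{1}{8 L (H-1)} }$. From the standing hypothesis $H \leq \frac{\sqrt{T}}{\sqrt{M}}$ I get $\frac{\sqrt{M}}{\sqrt{T}} \leq \frac{1}{H}$, so $\gamma = \frac{\sqrt{M}}{8 L \sqrt{T}} \leq \frac{1}{8 L H} \leq \frac{1}{8 L (H-1)}$, where the last step uses $H - 1 < H$ (and for $H = 1$ the second branch of the minimum is vacuous). For the first branch, note that $H \geq 1$ together with $H \leq \frac{\sqrt{T}}{\sqrt{M}}$ forces $T \geq M$, hence $\frac{\sqrt{M}}{\sqrt{T}} \leq 1$ and $\gamma \leq \frac{1}{8L} \leq \frac{1}{4L}$. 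Thus both constraints hold and Theorem~\ref{thm:wc-noniid-unbounded-var} is in force.

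Next I would substitute $\gamma = \frac{\sqrt{M}}{8 L \sqrt{T}}$, equivalently $\frac{1}{\gamma} = \frac{8 L \sqrt{T}}{\sqrt{M}}$ and $\gamma^2 = \frac{M}{64 L^2 T}$, into each term. The initial-distance term becomes $\frac{4 \sqn{r_0}}{\gamma T} = \frac{32 L \sqn{x_0 - x_\ast}}{\sqrt{M T}}$; the variance term becomes $\frac{20 \gamma \sigmaf^2}{M} = \frac{5 \sigmaf^2}{2 L \sqrt{M T}}$; and the drift term becomes $16 \gamma^2 L (H-1)^2 \sigmaf^2 = \frac{M (H-1)^2 \sigmaf^2}{4 L T}$. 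Summing the three recovers the claimed inequality verbatim.

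There is essentially no obstacle here, since the argument reduces to arithmetic substitution. The one place that warrants a moment's care is the admissibility check: the single hypothesis $H \leq \frac{\sqrt{T}}{\sqrt{M}}$ must be shown to control both branches of the minimum at once---directly for the $\frac{1}{8 L (H-1)}$ branch, and through the implied bound $T \geq M$ for the $\frac{1}{4L}$ branch.
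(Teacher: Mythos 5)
Your proposal is correct and matches the paper's own treatment, which likewise just verifies $\gamma = \frac{\sqrt{M}}{8L\sqrt{T}} \leq \frac{1}{8HL}$ and substitutes into the three terms of Theorem~\ref{thm:wc-noniid-unbounded-var}; your arithmetic for all three terms checks out. If anything, you are slightly more thorough than the paper, which does not explicitly verify the $\frac{1}{4L}$ branch of the stepsize condition via the implied bound $T \geq M$, nor flag the vacuous $H=1$ case.
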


\textbf{Optimal $H$.} By Corollary~\ref{corollary:wc-noniid-unbounded-var} we see that the optimal value of $H$ is $H = 1 + \floor{T^{1/4} M^{-3/4}}$, which gives $\mathcal{O}\br{\frac{1}{\sqrt{MT}}}$ convergence rate. Thus, the same convergence rate is attained provided that communication is more frequent compared to the identical data regime. 

\section{Experiments}
All experiments described below were run on logistic regression problem with $\ell_2$ regularization of order $\frac{1}{n}$. The datasets were taken from the LIBSVM library~\citep{chang2011libsvm}. The code was written in Python using MPI~\citep{dalcin2011parallel} and run on Intel(R) Xeon(R) Gold 6146 CPU @3.20GHz cores in parallel.
\subsection{Variance measures}
We provide values of $\sigmaf^2$ and $\sigmaopt^2$ in Figure~\ref{fig:variance} for different datasets, minibatch sizes and $M$. The datasets were split evenly without any data reshuffling and no overlaps. For any $M>1$, the value of $\sigmaf$ is lower bounded by $\avemm \|\nabla f_m(x_*)\|^2$ which explains the difference between identical and heterogeneous data. 

\subsection{Identical Data}
For identical data we used $M=20$ nodes and 'a9a' dataset.
We estimated $L$ numerically and ran two experiments, with stepsizes $\frac{1}{L}$ and $\frac{0.05}{L}$ and minibatch size equal 1. In both cases we observe convergence to a neighborhood, although of a different radius. Since we run the experiments on a single machine, the communication is very cheap and there is little gain in time required for convergence. However, the advantage in terms of required communication rounds is self-evident and can lead to significant time improvement under slow communication networks. The results are provided here in Figure~\ref{fig:a5a_same_data_01} and in the supplementary material in Figure~\ref{fig:a5a_same_data_001}.

\subsection{Heterogeneous Data}

\begin{figure}[t]
	\centering
	\includegraphics[scale=0.23]{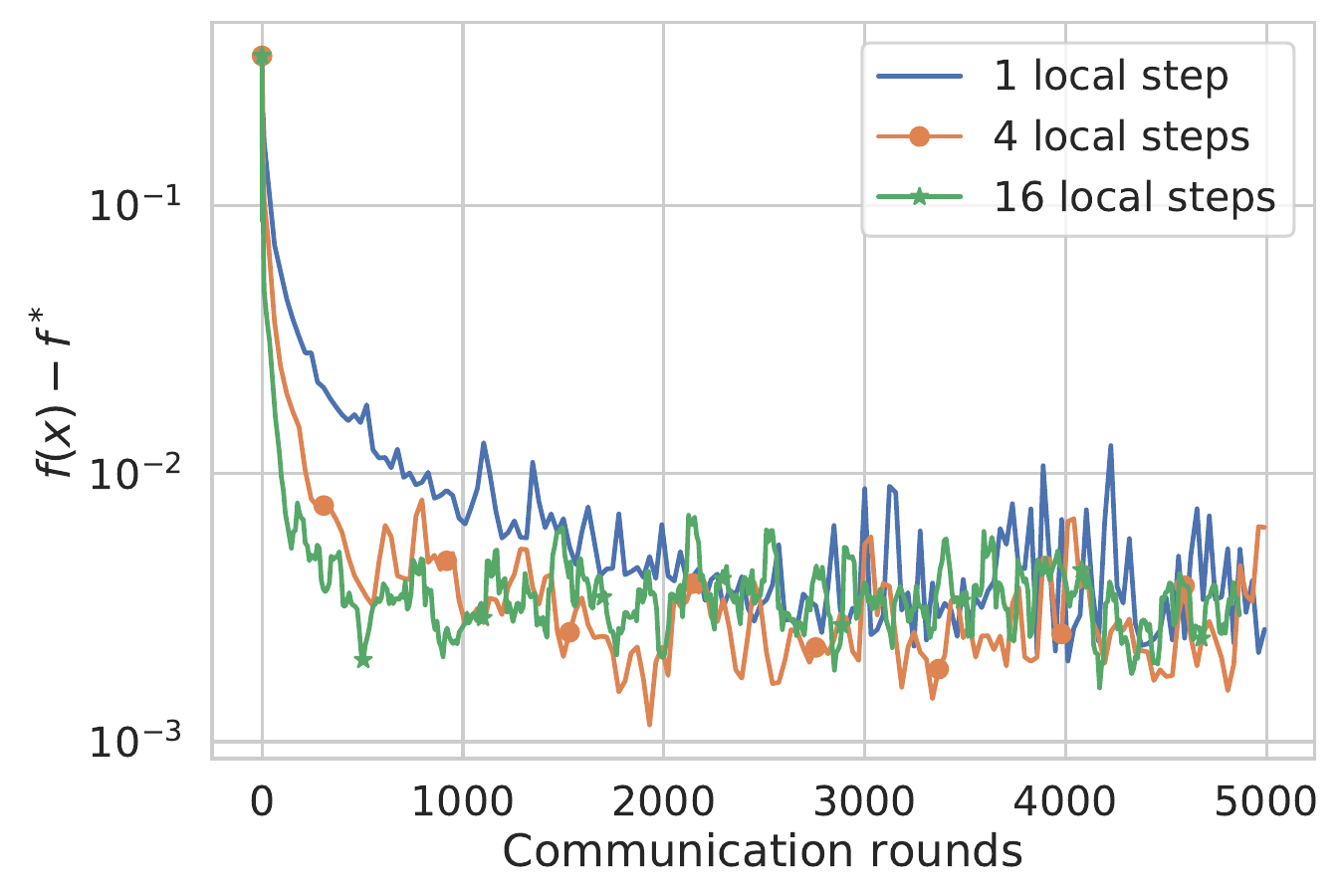}
	\label{fig:a5a_different_H2}
	\caption{Convergence of local SGD on heterogeneous data with different number of local steps on the `a5a' dataset.}
\end{figure}
Since our architecture leads to a very specific trade-off between computation and communication, we  provide plots for the case the communication time relative to gradient computation time is higher or lower. To see the impact of $\sigmaf$, in all experiments we use full gradients $\nabla f_m$ and constant stepsize $\frac{1}{L}$. The data partitioning is not i.i.d.\ and is done based on the index in the original dataset. The results are provided in Figure~\ref{fig:a5a_different_H} and in the supplementary material in Figure~\ref{fig:mushrooms_different_H}.
In cases where communication is significantly more expensive than gradient computation, local methods are much faster for imprecise convergence.    

\section{Acknowledgments}
We thank Li Yipeng for spotting multiple typos in the paper.

\bibliography{local_sgd}

\clearpage
\onecolumn
\part*{Supplementary Material}
\tableofcontents
\clearpage
\section{Basic Facts and Notation}

We use a notation similar to that of~\cite{Stich2018} and denote the sequence of time stamps when synchronization happens as  $(t_{p})_{p=1}^{\infty}$. Given stochastic gradients $g_t^1, g_t^2, \ldots, g_t^M$ at time $t \geq 0$ we define
\begin{align*}
    g_t \eqdef \avemm g_t^m, && \bar{g}_t^m \eqdef \ec{g_t^m} = \begin{cases}
     \nabla f(x_t^m) & \text { for identical data }  \\
     \nabla f_m (x_t^m) & \text { otherwise. } 
\end{cases} && \bar{g}_t \eqdef \ec{g_t}.
\end{align*}

We define an epoch to be a sequence of timesteps between two synchronizations: for $p \in \N$ an epoch is the sequence $t_{t_p}, t_{t_p + 1}, \ldots, t_{t_{p+1} - 1}$.  We summarize some of the notation used in Table~\ref{tab:notation-summary}.
\begin{table}[h]
    \centering
    \caption{Common Notation Summary.}
    \begin{tabular}{|c|c|}
    \hline
    Symbol & Description \\ \hline
    $g_t^m$ & Stochastic gradient at time $t$ on node $m$. See Algorithm~\ref{alg:local_sgd}. \\ \hline
    $x_t^m$ & Local iterate at time $t$ on node $m$. See Algorithm~\ref{alg:local_sgd}. \\ \hline
    $g_t$ & \begin{tabular}[c]{@{}c@{}}Average of stochastic gradients across nodes\\ at time $t$. See Algorithm~\ref{alg:local_sgd}. \end{tabular} \\ \hline
    $\bar{g}_t$ & Expected value of $g_t$: $\ec{g_t} = \bar{g}_t$. \\ \hline
    $\hat{x}_t$ & The average of all local iterates at time $t$. \\ \hline
    $r_{t}$ & The deviation of the average iterate from the optimum $\hat{x}_t - x_\ast$ at time $t$. \\ \hline
    $\sigma^2$ & \begin{tabular}[c]{@{}c@{}}Uniform bound on the variance of the stochastic gradients \\ for identical data. See Assumption~\ref{asm:uniformly-bounded-variance}. \end{tabular}  \\ \hline
    $\sigmaopt^2$ & \begin{tabular}[c]{@{}c@{}}The variance of the stochastic gradients at the optimum \\ for identical data. See Assumption~\ref{asm:finite-sum-stochastic-gradients}. \end{tabular} \\ \hline
    $\sigmaf^2$ & \begin{tabular}[c]{@{}c@{}}The variance of the stochastic gradients at the optimum \\ for heterogeneous data. See Assumption~\ref{asm:finite-sum-stochastic-gradients}. \end{tabular} \\ \hline
    $t_1, t_2, \ldots, t_p$ & Timesteps at which synchronization happen in Algorithm~\ref{alg:local_sgd}. \\ \hline
    $H$ & \begin{tabular}[c]{@{}c@{}}Upper bound on the maximum number of local computations \\ between timesteps, i.e.\ $\max_{p} \abs{t_p - t_{p+1}} \leq H$. \end{tabular}  \\ \hline
    \end{tabular}
    \label{tab:notation-summary}
\end{table}

Throughout the proofs, we will use the variance decomposition that holds for any random vector $X$ with finite second moment:
\begin{align}
    \label{eq:variance_def}
    \ecn {X} = \ecn { X - \ec{X} } + \sqn{\ec{X}}.
\end{align}
In particular, its version for vectors with finite number of values gives
\begin{align}
    \avemm \norm{X_m}^2
    = \avemm \norm{X_m - \aveim X_i}^2 + \norm{\avemm X_m}^2.\label{eq:variance_m}
\end{align}
As a consequence of \eqref{eq:variance_def} we have that,
\begin{align}
    \label{eq:variance_sqnorm_upperbound}
    \ecn{X - \ec{X}} \leq \ecn{X}.
\end{align}

\begin{proposition}[Jensen's inequality]\label{pr:jensen}
    For any convex function $f$ and any vectors $x^1,\dotsc, x^M$ we have
    \begin{align}
        f\br{\avemm x^m}
        \le \avemm f(x^m). \label{eq:jensen}
    \end{align}
    As a special case with $f(x)=\|x\|^2$, we obtain
    \begin{align}
        \norm{\avemm x_m}^2
        \le \avemm \|x_m\|^2.\label{eq:jensen_norm}
    \end{align}
 \end{proposition}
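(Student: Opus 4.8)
The plan is to prove the general inequality \eqref{eq:jensen} by induction on the number of vectors $M$, and then derive the special case \eqref{eq:jensen_norm} by instantiating $f$ at the squared Euclidean norm. The base case $M = 1$ is the trivial identity $f(x^1) \le f(x^1)$; alternatively one can begin from $M = 2$, which is precisely the two-point definition of convexity $f(\lambda u + (1-\lambda) v) \le \lambda f(u) + (1-\lambda) f(v)$ evaluated at $\lambda = \tfrac{1}{2}$.

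For the inductive step, suppose \eqref{eq:jensen} holds for any $M-1$ vectors. The key observation is that the equal-weight average over $M$ vectors can be rewritten as a genuine two-point convex combination of the last vector $x^M$ and the average $\bar{x} \eqdef \frac{1}{M-1}\sum_{m=1}^{M-1} x^m$ of the first $M-1$, namely $\avemm x^m = \frac{M-1}{M}\bar{x} + \frac{1}{M} x^M$, since the weights $\frac{M-1}{M}$ and $\frac{1}{M}$ are nonnegative and sum to one. Applying the two-point convexity inequality with $\lambda = \frac{M-1}{M}$ gives $f\br{\avemm x^m} \le \frac{M-1}{M} f(\bar{x}) + \frac{1}{M} f(x^M)$, and the induction hypothesis then bounds $f(\bar{x}) \le \frac{1}{M-1}\sum_{m=1}^{M-1} f(x^m)$. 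Substituting this and simplifying the factor $\frac{M-1}{M}\cdot\frac{1}{M-1} = \frac{1}{M}$ collapses the right-hand side to $\avemm f(x^m)$, completing the induction.

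For the special case I would first verify that $f(x) = \sqn{x}$ is convex: its Hessian is $2I \succeq 0$, or equivalently one checks the two-point inequality directly by expanding $\sqn{\lambda u + (1-\lambda) v}$ and using $\lambda(1-\lambda) \ge 0$ together with $2\ev{u,v} \le \sqn{u} + \sqn{v}$. With convexity in hand, \eqref{eq:jensen_norm} follows immediately by applying the already-proved inequality \eqref{eq:jensen} to this choice of $f$.

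There is no genuine obstacle here, as this is the elementary finite form of Jensen's inequality. The only point demanding a little care is the weight bookkeeping in the inductive step: one must peel off a single vector rather than halving the index set, so that the remaining average carries the correct coefficient $\frac{M-1}{M}$ and the induction hypothesis applies to exactly $M-1$ equally weighted vectors.
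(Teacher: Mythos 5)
Your proof is correct: the induction via the two-point decomposition $\frac{1}{M}\sum_{m=1}^{M} x^m = \frac{M-1}{M}\bar{x} + \frac{1}{M}x^M$, together with the direct verification that $x \mapsto \sqn{x}$ is convex, is the canonical elementary argument for the finite form of Jensen's inequality. The paper itself states this proposition without any proof, treating it as a standard fact, so your write-up is consistent with (and simply fills in) what the paper takes for granted.
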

\noindent We denote the Bregman divergence associated with function $f$ and arbitrary $x, y$ as
\begin{align*}
D_f(x, y)
\eqdef f(x) - f(y) - \ev{\nabla f(y), x - y}.
\end{align*}
\begin{proposition}
     If $f$ is $L$-smooth and convex, then for any $x$ and $y$ it holds
     \begin{align}
          \|\nabla f(x) - \nabla f(y)\|^2
          \le 2LD_f(x, y). \label{eq:bregman-dif}
     \end{align}
\end{proposition}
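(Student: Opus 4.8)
The plan is to reduce the claim to the standard descent lemma for $L$-smooth functions by shifting $f$ with a linear term so that $y$ becomes a global minimizer. Fix $y$ and define the auxiliary function $g(z) \eqdef f(z) - \ev{\nabla f(y), z}$. Since adding a linear term preserves both convexity and $L$-smoothness, $g$ is convex and $L$-smooth with gradient $\nabla g(z) = \nabla f(z) - \nabla f(y)$; in particular $\nabla g(y) = 0$, so by convexity $y$ is a global minimizer of $g$. A direct computation also gives $g(x) - g(y) = f(x) - f(y) - \ev{\nabla f(y), x - y} = D_f(x, y)$, which is exactly the quantity we wish to lower bound.

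First I would invoke the descent lemma for the $L$-smooth function $g$: applying the upper quadratic bound (the right-hand inequality of Assumption~\ref{asm:convexity-and-smoothness}, now applied to $g$) at the point $x - \frac{1}{L} \nabla g(x)$ gives
\[ g\br{x - \tfrac{1}{L} \nabla g(x)} \leq g(x) - \frac{1}{2L} \sqn{\nabla g(x)}. \]
Since $y$ minimizes $g$, the left-hand side is at least $g(y)$, so rearranging yields
\[ \frac{1}{2L} \sqn{\nabla g(x)} \leq g(x) - g(y) = D_f(x, y). \]

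Substituting $\nabla g(x) = \nabla f(x) - \nabla f(y)$ and multiplying through by $2L$ gives the claimed bound $\sqn{\nabla f(x) - \nabla f(y)} \leq 2L D_f(x, y)$. The only points requiring care are the justification of the descent lemma and the minimality of $y$ for $g$, but both are routine consequences of $L$-smoothness and convexity respectively, so I expect no real obstacle. The key idea worth isolating is the shift to $g$: once $y$ is recognized as the minimizer, the result collapses to a single application of the descent lemma.
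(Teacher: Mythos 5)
Your proof is correct. Note that the paper states this proposition in its ``Basic Facts'' section without any proof, so there is no in-paper argument to compare against; your linearization argument---shifting to $g(z) = f(z) - \langle \nabla f(y), z\rangle$ so that $y$ becomes a global minimizer, then applying the descent lemma at $x - \frac{1}{L}\nabla g(x)$---is the standard textbook proof of this inequality, and every step checks out (in particular, $g$ inherits $L$-smoothness because $D_g = D_f$, and $\nabla g(y)=0$ makes $y$ a minimizer by convexity).
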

\noindent If $f$ satisfies Assumption~\ref{asm:convexity-and-smoothness}, then
\begin{equation}
     \label{eq:asm-strong-convexity}
     f(x) + \ev{\nabla f(y), x - y} + \frac{\mu}{2} \sqn{y - x} \leq f(y) \qquad \forall x, y \in \R^d.
\end{equation}
\noindent We will also use the following facts from linear algebra:
\begin{align}
    \sqn{x + y} &\leq 2 \sqn{x} + 2 \sqn{y}, \label{eq:sum_sqnorm} \\
    2 \ev{a, b} &\leq \zeta \sqn{a} + \zeta^{-1} \sqn{b} \text { for all } a, b \in \R^d \text { and } \zeta > 0. \label{eq:youngs-inequality}
\end{align}

\section{Proofs for Identical data under Assumption~\ref{asm:uniformly-bounded-variance}}
\subsection{Proof of Lemma~\ref{lemma:uniform-var-iterate-variance-bound}}
\begin{proof}
     Let $t \in \N$ be such that $t_p \leq t \leq t_{p+1} - 1$. Recall that for a time $t$ such that $t_p \leq t < t_{p+1}$ we have $x_{t+1}^m = x_t^m - \gamma g_{t}^m$ and $\hat{x}_{t+1} = \hat{x}_t - \gamma g_t$. Hence for the expectation conditional on $x_t^1, x_t^2, \ldots, x_t^M$ we have:
     \begin{align*}
         \ecn{x_{t+1}^m - \hat{x}_{t+1}} &= \sqn{x_t^m - \hat{x}_t} + \gamma^2 \ecn{g_t^m - g_t} - 2 \gamma \ec{\ev{x_t^m - \hat{x}_t, g_t^m - g_t}} \\
         &= \sqn{x_t^m - \hat{x}_t} + \gamma^2 \ecn{g_t^m - g_t} - 2 \gamma \ev{x_t^m - \hat{x}_t, \nabla f(x_{t}^m)} \\
         &\quad + 2 \gamma \ev{x_t^m - \hat{x}_t, \bar{g}_t}.
     \end{align*}
     Averaging both sides and letting $V_t = \frac{1}{M} \sum_{m} \sqn{x_{t}^m - \hat{x}_t}$, we have
     \begin{align}
         \ec{V_{t+1}} &= V_t + \frac{\gamma^2}{M} \sum_{m} \ecn{g_t^m - g_t} - \frac{2 \gamma}{M} \sum_{m} \ev{x_t^m - \hat{x}_t, \nabla f(x_t^m)} + 2 \gamma \underbrace{\ev{ \hat{x}_t - \hat{x}_t, \bar{g}_t}}_{=0} \nonumber \\
         \label{iterate-variance-bound-recursion}
         &= V_t + \frac{\gamma^2}{M} \sum_{m} \ecn{g_t^m - g_t} - \frac{2 \gamma}{M} \sum_{m} \ev{x_t^m - \hat{x}_t, \nabla f(x_t^m)}.
     \end{align}
     Now note that by expanding the square we have,
     \begin{align}
         \label{iterate-gradient-variance-bound-1}
         \ecn{g_t^m - g_t} &= \ecn{g_t^m - \bar{g_t}} + \ecn{\bar{g}_t - g_t} + 2 \ec{\ev{g_t^m - \bar{g}_t, \bar{g}_t - g_t}}.
     \end{align}
     We decompose the first term in the last equality again by expanding the square,
     \begin{align*}
         \ecn{g_t^m - \bar{g}_t} &= \ecn{g_t^m - \bar{g}_t^m} + \sqn{\bar{g}_t^m - \bar{g}_t} + 2 \ec{\ev{ g_t^m - \bar{g}_t^m, \bar{g}_t^m - \bar{g}_t }} \\
         &= \ecn{g_t^m - \bar{g}_t^m} + \sqn{\bar{g}_t^m - \bar{g}_t} + 2 \underbrace{\ev{ \bar{g}_t^m - \bar{g}_t^m, \bar{g}_t^m - \bar{g}_t }}_{=0} \\
         &= \ecn{g_t^m - \bar{g}_t^m} + \sqn{\bar{g}_t^m - \bar{g}_t}.
     \end{align*}
     Plugging this into~\eqref{iterate-gradient-variance-bound-1} we have,
     \begin{align*}
         \ecn{g_t^m - g_t} &= \ecn{g_t^m - \bar{g}_t^m} + \sqn{\bar{g}_t^m - \bar{g}_t} + \ecn{\bar{g}_t - g_t} + 2 \ec{\ev{g_t^m - \bar{g}_t, \bar{g}_t - g_t}}.
     \end{align*}
     Now average over $m$:
     \begin{align*}
         \frac{1}{M} \sum_{m} \ecn{g_t^m - g_t} &= \frac{1}{M} \sum_{m} \ecn{g_t^m - \bar{g}_t^m} + \frac{1}{M} \sum_m \sqn{\bar{g}_t^m - \bar{g}_t} + \ecn{\bar{g}_t - g_t} \\
         &- 2 \ecn{\bar{g}_t - g_t},
     \end{align*}
     where we used that by definition $\avemm g_t^m = g_t$. Hence,
     \begin{align}
         \frac{1}{M} \sum_m \ecn{g_t^m - g_t} &= \frac{1}{M} \sum_{m} \ecn{g_t^m - \bar{g}_t^m} + \frac{1}{M} \sum_m \sqn{\bar{g}_t^m - \bar{g}_t} - \ecn{\bar{g}_t - g_t} \nonumber \\
         \label{iterate-gradient-variance-bound-2}
         &\leq \frac{1}{M} \sum_{m} \ecn{g_t^m - \bar{g}_t^m} + \frac{1}{M} \sum_{m} \sqn{\bar{g}_t^m - \bar{g}_t}.
     \end{align}
     Now note that for the first term in~\eqref{iterate-gradient-variance-bound-2} we have by Assumption~\ref{asm:uniformly-bounded-variance},
     \begin{align}
         \label{iterate-gradient-variance-bound-3}
         \ecn{g_t^m - \bar{g}_t^m} &= \ecn{g_t^m - \nabla f(x_t^m)} \leq \sigma^2.
     \end{align}
     For the second term in \eqref{iterate-gradient-variance-bound-2} we have
     \begin{align*}
         \sqn{\bar{g}_t^m - \bar{g}_t} &= \sqn{\bar{g}_t^m - \nabla f(\hat{x}_t)} + \sqn{ \nabla f(\hat{x}_t) - \bar{g}_t } + 2 \ev{ \bar{g}_t^m - \nabla f(\hat{x}_t), \nabla f(\hat{x}_t) - \bar{g}_t  }.
     \end{align*}
     Averaging over $m$,
     \begin{align*}
         \frac{1}{M} \sum_{m=1}^{M} \sqn{ \bar{g}_t^m - \bar{g}_t } &= \frac{1}{M} \sum_{m} \sqn{\bar{g}_t^m - \nabla f(\hat{x}_t)} + \sqn{\nabla f(\hat{x}_t) - \bar{g}_t} + 2 \ev{ \bar{g}_t - \nabla f(\hat{x}_t), \nabla f(\hat{x}_t) - \bar{g}_t } \\
         &= \frac{1}{M} \sum_{m} \sqn{\bar{g}_t^m - \nabla f(\hat{x}_t)} + \sqn{\nabla f(\hat{x}_t) - \bar{g}_t} - 2 \sqn{\nabla f(\hat{x}_t) - \bar{g}_t} \\
         &= \frac{1}{M} \sum_{m} \sqn{\bar{g}_t^m - \nabla f(\hat{x}_t)} - \sqn{\nabla f(\hat{x}_t) - \bar{g}_t} \leq \frac{1}{M} \sum_{m} \sqn{\bar{g}_t^m - \nabla f(\hat{x}_t)},
     \end{align*}
     where we used the fact that $\frac{1}{M} \sum_{m} \bar{g}_t^m = \bar{g}_t$, which comes from the linearity of expectation. Now we bound $\sqn{\bar{g}_t^m - \nabla f(\hat{x}_t)}$ in the last inequality by smoothness and then use that Jensen's inequality implies $\sum_{m=1}^{M} (f(\hat{x}_t) - f(x_t^m)) \leq 0$,
     \begin{align}
         \frac{1}{M} \sum_{m} \sqn{ \bar{g}_t^m - \nabla f(\hat{x}_t) } &= \frac{1}{M} \sum_{m} \sqn{ \nabla f(x_{t}^m) - \nabla f(\hat{x}_t) } \nonumber \\
         &\overset{\eqref{eq:bregman-dif}}{\le} \frac{1}{M} \sum_{m} 2L (f(\hat x_t) - f(x_t^m) - \ev{\hat x_t - x_t^m, \nabla f(x_t^m)}) \nonumber \\
         \label{eq:iterate-gradient-variance-bound-4}
         &\le \frac{2L}{M} \sum_{m} \ev{x_t^m - \hat x_t, \nabla f(x_t^m)}.
     \end{align}
     Plugging in \eqref{eq:iterate-gradient-variance-bound-4} and \eqref{iterate-gradient-variance-bound-3} into \eqref{iterate-gradient-variance-bound-2} we have,
     \begin{align}
         \label{iterate-gradient-variance-bound-5}
         \frac{1}{M} \sum_{m} \ecn{g_t^m - g_t} \leq \sigma^2 + \frac{2L}{M} \sum_{m} \ev{x_t^m - \hat x_t, \nabla f(x_t^m)}.
     \end{align}
     Plugging \eqref{iterate-gradient-variance-bound-5} into \eqref{iterate-variance-bound-recursion}, we get
     \begin{align}
         \ec{V_{t+1}} 
         &\le V_t + \gamma^2 \sigma^2 - \frac{2 \gamma}{M} \sum_m \ev{x_t^m - \hat{x}_t, \nabla f(x_t^m)} + \frac{2L \gamma^2}{M} \sum_{m} \ev{x_t^m - \hat x_t, \nabla f(x_t^m)} \nonumber \\
         \label{iterate-variance-recursion-pre-sc}
         &= V_t + \gamma^2 \sigma^2 - \frac{2 \gamma(1 - \gamma L)}{M} \sum_m \ev{x_t^m - \hat{x}_t, \nabla f(x_t^m)} \\
         &\overset{\eqref{eq:asm-strong-convexity}}{\le} \br{1 - \gamma(1 - \gamma L)\mu} V_t + \gamma^2\sigma^2. \nonumber
     \end{align}
     Using that $\gamma \leq \frac{1}{2L}$ we can conclude,
     \begin{align*}
         \ec{V_{t+1}}
         &\le \br{1 - \frac{\gamma\mu}{2}} V_t + \gamma^2\sigma^2 \\
         &\le V_t + \gamma^2 \sigma^2.
     \end{align*}
     Taking expectations and iterating the above inequality,
     \begin{align*}
         \ec{V_t} &\leq \ec{V_{t_p}} + \gamma^2 \sigma^2 \br{t - t_p} \\
         &\leq \ec{V_{t_p}} + \gamma^2 \sigma^2 \br{t_{p+1} - t_p - 1} \\
         &\leq \ec{V_{t_p}} + \gamma^2 \sigma^2 \br{H - 1}.
     \end{align*}
     It remains to notice that by assumption we have $V_{t_p} = 0$.
\end{proof}

\subsection{Two More Lemmas}
\begin{lemma}{\citep{Stich2018}.}
     \label{lemma:iterate-one-recursion}
     Let $(x_t^m)_{t \geq 0}$ be iterates generated by Algorithm~\ref{alg:local_sgd} run with identical data. Suppose that $f$ satisfies Assumption~\ref{asm:convexity-and-smoothness} and that $\gamma \leq \frac{1}{2 L}$. Then,
     \begin{align}
         \begin{split}
             \ecn{\hat{x}_{t+1} - x_\ast} \leq (1 &- \gamma \mu) \ecn{\hat{x}_t - x_\ast} + \gamma^2 \ecn{g_t - \bar{g}_t} \\
             &- \frac{\gamma}{2} \ec{D_{f} (\hat{x}_t, x_\ast)} + 2 \gamma L \ec{V_t}.
         \end{split}
     \end{align}
 \end{lemma}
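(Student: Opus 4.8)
The plan is to track the average iterate $\hat{x}_t$, which by the definition of Algorithm~\ref{alg:local_sgd} with identical data evolves as $\hat{x}_{t+1} = \hat{x}_t - \gamma g_t$ with $g_t = \avemm g_t^m$. First I would expand $\sqn{\hat{x}_{t+1} - x_\ast} = \sqn{\hat{x}_t - x_\ast} - 2\gamma\ev{\hat{x}_t - x_\ast, g_t} + \gamma^2\sqn{g_t}$ and take the expectation conditional on $x_t^1,\dotsc,x_t^M$. Unbiasedness gives $\ec{g_t} = \bar{g}_t = \avemm\nabla f(x_t^m)$, and the variance decomposition \eqref{eq:variance_def} splits $\gamma^2\ecn{g_t} = \gamma^2\ecn{g_t - \bar{g}_t} + \gamma^2\sqn{\bar{g}_t}$, isolating the noise term $\gamma^2\ecn{g_t - \bar{g}_t}$ that appears verbatim in the claim and leaving two deterministic pieces to handle.

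Next I would control the cross term $-2\gamma\ev{\hat{x}_t - x_\ast, \bar{g}_t} = -\frac{2\gamma}{M}\sum_m \ev{\hat{x}_t - x_\ast, \nabla f(x_t^m)}$ by splitting $\hat{x}_t - x_\ast = (\hat{x}_t - x_t^m) + (x_t^m - x_\ast)$ inside each summand. To the first piece I apply $L$-smoothness, which yields $\ev{x_t^m - \hat{x}_t, \nabla f(x_t^m)} \le f(x_t^m) - f(\hat{x}_t) + \frac{L}{2}\sqn{x_t^m - \hat{x}_t}$ and hence contributes $\gamma L V_t$ after averaging over $m$. To the second piece I apply strong convexity \eqref{eq:asm-strong-convexity}, giving $\ev{x_t^m - x_\ast, \nabla f(x_t^m)} \ge f(x_t^m) - f(x_\ast) + \frac{\mu}{2}\sqn{x_t^m - x_\ast}$; Jensen's inequality \eqref{eq:jensen_norm} then converts $-\gamma\mu\avemm\sqn{x_t^m - x_\ast}$ into $-\gamma\mu\sqn{\hat{x}_t - x_\ast}$, producing the desired factor $(1 - \gamma\mu)$. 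The key point is that the $\avemm f(x_t^m)$ contributions from the two pieces cancel, leaving exactly $-2\gamma D_f(\hat{x}_t, x_\ast)$, where I use $\nabla f(x_\ast) = 0$ so that $D_f(\hat{x}_t, x_\ast) = f(\hat{x}_t) - f(x_\ast)$.

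It then remains to dispose of the leftover $\gamma^2\sqn{\bar{g}_t}$. Jensen \eqref{eq:jensen_norm} bounds it by $\frac{\gamma^2}{M}\sum_m\sqn{\nabla f(x_t^m)}$, and since $\nabla f(x_\ast) = 0$, the smoothness inequality \eqref{eq:bregman-dif} gives $\sqn{\nabla f(x_t^m)} \le 2L D_f(x_t^m, x_\ast)$. I convert this average back to $\hat{x}_t$ by averaging the smoothness upper bound for $f(x_t^m)$ around $\hat{x}_t$ and using $\avemm(x_t^m - \hat{x}_t) = 0$, which yields $\avemm\br{f(x_t^m) - f(x_\ast)} \le D_f(\hat{x}_t, x_\ast) + \frac{L}{2}V_t$. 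Collecting all terms, the coefficient of $D_f(\hat{x}_t, x_\ast)$ becomes $-2\gamma + 2L\gamma^2$ and that of $V_t$ becomes $\gamma L + L^2\gamma^2$; imposing $\gamma \le \frac{1}{2L}$ (so $L\gamma \le \frac12$) and using $D_f, V_t \ge 0$ bounds these by $-\frac{\gamma}{2}$ and $2\gamma L$ respectively, after which taking total expectation gives the stated recurrence. I expect the main obstacle to be the bookkeeping in the two cancellations: ensuring the $\avemm f(x_t^m)$ terms from the cross term vanish, and reabsorbing the extra $\gamma^2\sqn{\bar{g}_t}$ term into $D_f(\hat{x}_t, x_\ast)$ and $V_t$ rather than producing an uncontrolled residual. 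Note that the reduction of the $D_f$ coefficient from $-2\gamma$ down to $-\frac{\gamma}{2}$ relies entirely on the stepsize restriction.
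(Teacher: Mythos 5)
Your proof is correct, and it is worth noting that the paper itself does not prove this lemma at all: it simply cites Lemma~3.1 of \citep{Stich2018}, whose argument is reproduced (in the finite-sum setting) only later as Lemma~\ref{lemma:perturbed-iterate-analysis} and Lemma~\ref{lemma:optimality-gap-contraction}. Your proof shares the perturbed-iterate skeleton with that argument --- expand the square, apply the variance decomposition to isolate $\gamma^2\ecn{g_t-\bar g_t}$, split the cross term over $(\hat x_t - x_t^m) + (x_t^m - x_\ast)$ --- but the mechanics differ in two places. For the term $-\ev{\hat x_t - x_t^m, \nabla f(x_t^m)}$ the paper uses Young's inequality \eqref{eq:youngs-inequality} with $\zeta = 2L$ followed by the Bregman bound \eqref{eq:bregman-dif}, which keeps per-node suboptimalities $f(x_t^m)-f(x_\ast)$ with coefficient $\br{\gamma L - \tfrac12}$ and applies Jensen only at the end; you instead use the smoothness function-value inequality, so that the $\avemm f(x_t^m)$ contributions cancel exactly against the strong-convexity terms and produce $-2\gamma D_f(\hat x_t, x_\ast)$ outright, and you then absorb $\gamma^2\sqn{\bar g_t}$ via the averaging identity $\avemm (x_t^m - \hat x_t)=0$, giving $\avemm\br{f(x_t^m)-f(x_\ast)} \le D_f(\hat x_t,x_\ast) + \tfrac{L}{2}V_t$. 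This difference has a concrete payoff: your final coefficients are $-2\gamma(1-\gamma L) \le -\gamma$ for $D_f$ and $\gamma L(1+\gamma L) \le \tfrac32 \gamma L$ for $V_t$, so the claimed inequality holds under the stated condition $\gamma \le \tfrac{1}{2L}$, whereas the Young's-inequality route yields the $D_f$ coefficient $2\gamma\br{\gamma L - \tfrac12}$, which only reaches $-\tfrac{\gamma}{2}$ when $\gamma \le \tfrac{1}{4L}$. In other words, your argument actually justifies the lemma at its stated stepsize threshold, which the paper's own reproduced version of Stich's argument would not (the downstream Theorem~\ref{thm:sc-convergence-theorem} assumes $\gamma \le \tfrac{1}{4L}$ anyway, so nothing in the paper breaks either way).
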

 \begin{proof}
     This is Lemma 3.1 in \citep{Stich2018}.
 \end{proof}
 
 \begin{lemma}
     \label{lemma:minibatch-variance-reduction}
     Suppose that Assumption~\ref{asm:uniformly-bounded-variance} holds. Then,
     \[ \ecn{g_t - \bar{g}_t} \leq \frac{\sigma^2}{M}. \]
 \end{lemma}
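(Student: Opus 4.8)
The plan is to exploit the fact that $g_t$ is an average of $M$ \emph{independent}, unbiased estimators of the per-node quantities $\bar g_t^m = \nabla f(x_t^m)$, so that the variance of the average contracts by a factor of $1/M$. First I would use $\bar g_t = \ec{g_t} = \avemm \bar g_t^m$ (linearity of expectation) to rewrite the centered average as a sum of centered per-node terms,
\[ g_t - \bar g_t = \avemm \br{g_t^m - \bar g_t^m}, \]
and then expand the squared norm under the expectation conditional on $x_t^1,\dotsc,x_t^M$ into a diagonal part and cross terms:
\[ \ecn{g_t - \bar g_t} = \frac{1}{M^2}\summ \ecn{g_t^m - \bar g_t^m} + \frac{1}{M^2}\sum_{m \ne m'} \ec{\ev{g_t^m - \bar g_t^m,\, g_t^{m'} - \bar g_t^{m'}}}. \]

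The key step is to argue that every cross term vanishes. Because the samples $z_t^m \sim \D_m$ are drawn independently across the nodes, the centered gradients $g_t^m - \bar g_t^m$ and $g_t^{m'} - \bar g_t^{m'}$ are (conditionally) independent and each has zero conditional mean, so for $m \ne m'$ the inner product factorizes and
\[ \ec{\ev{g_t^m - \bar g_t^m,\, g_t^{m'} - \bar g_t^{m'}}} = \ev{\ec{g_t^m - \bar g_t^m},\, \ec{g_t^{m'} - \bar g_t^{m'}}} = 0. \]
Only the diagonal sum survives.

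Finally I would bound each diagonal term using Assumption~\ref{asm:uniformly-bounded-variance}: since $\bar g_t^m = \nabla f(x_t^m)$ in the identical-data case, we have $\ecn{g_t^m - \bar g_t^m} = \ecn{g_t^m - \nabla f(x_t^m)} \le \sigma^2$. Summing the $M$ diagonal terms gives $\ecn{g_t - \bar g_t} \le \frac{1}{M^2}\cdot M \sigma^2 = \frac{\sigma^2}{M}$, and removing the conditioning by the tower property yields the claim. The only genuine obstacle is the vanishing of the cross terms, which rests entirely on the cross-node independence of the sampling; this is implicit in the model (each node draws its own sample) and is exactly what the $1/M$ improvement depends on — without it one would only recover the trivial bound $\sigma^2$ via \eqref{eq:jensen_norm}.
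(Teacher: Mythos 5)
Your proposal is correct and is essentially the paper's own proof: the paper invokes the fact that the variance of a sum of independent random vectors equals the sum of their variances, which is exactly the cross-term cancellation you spell out explicitly, followed by the same per-node bound $\ecn{g_t^m - \bar{g}_t^m} \leq \sigma^2$ from Assumption~\ref{asm:uniformly-bounded-variance}. Your version merely makes the vanishing of the cross terms (via conditional independence and zero conditional mean) explicit rather than citing it as a standard fact.
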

 \begin{proof}
     This is Lemma 3.2 in \citep{Stich2018}. Because the stochastic gradients $g_t^m$ are independent we have that the variance of their sum is the sum of their variances, hence
     \begin{align*}
         \ecn{g_t - \bar{g}_t} &= \frac{1}{M^2} \ecn{ \sum_{m=1}^{M} g_t^m - \bar{g}_t^m } = \frac{1}{M^2} \sum_{m=1}^{M} \ecn{g_t^m - \bar{g}_t^m} \leq \frac{\sigma^2}{M}.
     \end{align*}
 \end{proof}

\subsection{Proof of Theorem~\ref{thm:sc-convergence-theorem}}
\begin{proof}
     Combining Lemma~\ref{lemma:iterate-one-recursion} and Lemma~\ref{lemma:minibatch-variance-reduction}, we have
     \begin{equation}
         \label{sc-thm-proof-1}
         \ecn{\hat{x}_{t+1} - x_\ast} \leq (1 - \gamma \mu) \ecn{\hat{x}_t - x_\ast} + \frac{\gamma^2 \sigma^2}{M} - \frac{\gamma}{2} \ec{D_{f} (\hat{x}_t, x_\ast)} + 2 \gamma L \ec{V_t}.
     \end{equation}
     Using Lemma~\ref{lemma:uniform-var-iterate-variance-bound} we can upper bound the $\ec{V_t}$ term in $\eqref{sc-thm-proof-1}$:
     \begin{equation*}
         \ecn{\hat{x}_{t+1} - x_\ast} \leq (1 - \gamma \mu) \ecn{\hat{x}_t - x_\ast} + \frac{\gamma^2 \sigma^2}{M} - \frac{\gamma}{2} \ec{D_{f} (\hat{x}_t, x_\ast)} + 2 \gamma^3 L \br{H - 1} \sigma^2.
     \end{equation*}
     Letting $r_{t+1} = \hat{x}_{t+1} - x_\ast$ and we have,
     \begin{equation*}
         \ecn{r_{t+1}} \leq \br{1 - \gamma \mu} \ecn{r_t} + \frac{\gamma^2 \sigma^2}{M} + 2 \gamma^3 L \br{H - 1} \sigma^2.
     \end{equation*}
     Recursing the above inequality we have,
     \begin{equation*}
         \ecn{r_{T}} \leq \br{1 - \gamma \mu}^{T} \ecn{r_0} + \br{ \sum_{t=0}^{T-1} \br{1 - \gamma \mu}^t } \br{ \frac{\gamma^2 \sigma^2}{M} + 2 \gamma^3 L \br{H - 1} \sigma^2 }.
     \end{equation*}
     Using that $\sum_{t=0}^{T-1} \br{1 - \gamma \mu}^{t} \leq \sum_{t=0}^{\infty} \br{1 - \gamma \mu}^{t} = \frac{1}{\gamma \mu}$ we have,
     \begin{equation*}
         \ecn{r_{T}} \leq \br{1 - \gamma \mu}^{T} \ecn{r_0} + \frac{\gamma \sigma^2}{\mu M} + \frac{2 \gamma^2 L \br{H - 1} \sigma^2}{\mu},
     \end{equation*}
     which is the claim of this theorem.
 \end{proof}

\subsection{Proof of Theorem~\ref{thm:weakly-convex-thm}}
\begin{proof}
    Let $r_t = \hat{x}_t - x_\ast$, then putting $\mu = 0$ in Lemma~\ref{lemma:iterate-one-recursion} and combining it with Lemma~\ref{lemma:minibatch-variance-reduction}, we have
    \begin{equation*}
        \ecn{r_{t+1}} \leq \ecn{r_t} + \frac{\gamma^2 \sigma^2}{M} - \frac{\gamma}{2} \ec{D_{f} (\hat{x}_t, x_\ast)} + 2 \gamma L \ec{V_t}.
    \end{equation*}
    Further using Lemma~\ref{lemma:uniform-var-iterate-variance-bound},
    \begin{equation*}
        \ecn{r_{t+1}} \leq \ecn{r_t} + \frac{\gamma^2 \sigma^2}{M} - \frac{\gamma}{2} \ec{D_{f} (\hat{x}_t, x_\ast)} + 2 \gamma^3 L \br{H - 1} \sigma^2.
    \end{equation*}
    Rearranging we have,
    \begin{align*}
        \frac{\gamma}{2} \ec{D_{f} (\hat{x}_t, x_\ast)} \leq \ecn{r_t} - \ecn{r_{t+1}} + \frac{\gamma^2 \sigma^2}{M} + 2 \gamma^3 L \br{H - 1} \sigma^2.
    \end{align*}
    Averaging the above equation as $t$ varies between $0$ and $T-1$,
    \begin{align}
        \frac{\gamma}{2 T} \sum_{t=0}^{T-1} \ec{D_{f} (\hat{x}_t, x_\ast)} &\leq \frac{1}{T} \sum_{t=0}^{T-1} \ecn{r_t} - \ecn{r_{t+1}} + \frac{1}{T} \sum_{t=0}^{T-1} \br{ \frac{\gamma^2 \sigma^2}{M} + 2 \gamma^3 L \br{H - 1} \sigma^2} \nonumber \\
        &= \frac{\sqn{r_0} - \ecn{r_T}}{T} + \frac{\gamma^2 \sigma^2}{M} + 2 \gamma^3 L \br{H - 1} \sigma^2 \nonumber \\
        \label{wc-thm-proof-2}
        &\leq \frac{\sqn{r_0}}{T} + \frac{\gamma^2 \sigma^2}{M} + 2 \gamma^3 L \br{H - 1} \sigma^2.
    \end{align}
    By Jensen's inequality we have $D_{f} (\bar{x}_T, x_\ast) \leq \frac{1}{T} \sum_{t=0}^{T-1} D_{f} (\hat{x}_t, x_\ast)$. Using this in \eqref{wc-thm-proof-2} we have,
    \begin{align*}
        \frac{\gamma}{2} \ec{D_{f} (\hat{x}_t, x_\ast)} \leq \frac{\sqn{r_0}}{T} + \frac{\gamma^2 \sigma^2}{M} + 2 \gamma^3 L \br{H - 1} \sigma^2.
    \end{align*}
    Dividing both sides by $\gamma/2$ yields the theorem's claim.
\end{proof}

\section{Proofs for identical data under Assumption~\ref{asm:finite-sum-stochastic-gradients}}
\subsection{Preliminary Lemmas}
\begin{lemma}
     \label{lemma:gradient-smoothness-bound}
     Individual gradient variance bound: assume that Assumption~\ref{asm:finite-sum-stochastic-gradients} holds with identical data, then for all $t \geq 0$ and  $m \in [M]$ we have
     \begin{equation}
         \label{eq:lma-gradient-smoothness-bound}
         \ecn{g_t^m} \leq 4 L D_{f} (x_t^m, x_\ast) + 2 \sigma_{m}^2,
     \end{equation}
    where $\sigma_m^2 \eqdef \ecn[z_m \sim \D_m]{\nabla f (x_\ast, z_m)}$ is the noise at the optimum on the $m$-th node.
\end{lemma}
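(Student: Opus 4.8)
The plan is to decompose the stochastic gradient $g_t^m = \nabla f(x_t^m, z_t^m)$ around its realization at the optimum and to exploit that each realization $f(\cdot, z_m)$ is itself convex and $L$-smooth. Working conditionally on $x_t^m$ (so that both $x_t^m$ and $x_\ast$ are deterministic inside the expectation over the fresh sample $z_m \sim \D_m$), I would first write $\nabla f(x_t^m, z_m) = \br{\nabla f(x_t^m, z_m) - \nabla f(x_\ast, z_m)} + \nabla f(x_\ast, z_m)$ and apply the elementary bound \eqref{eq:sum_sqnorm}. This splits $\ecn{g_t^m}$ into twice a curvature term $\ecn[z_m \sim \D_m]{\nabla f(x_t^m, z_m) - \nabla f(x_\ast, z_m)}$ and twice the noise term $\ecn[z_m \sim \D_m]{\nabla f(x_\ast, z_m)} = \sigma_m^2$.

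For the curvature term the key step is the smoothness-convexity inequality \eqref{eq:bregman-dif}. By Assumption~\ref{asm:finite-sum-stochastic-gradients} each realization $f(\cdot, z_m)$ is almost surely $L$-smooth and convex, so I can apply \eqref{eq:bregman-dif} to the function $f(\cdot, z_m)$ for each fixed $z_m$, yielding pointwise $\sqn{\nabla f(x_t^m, z_m) - \nabla f(x_\ast, z_m)} \le 2 L D_{f(\cdot, z_m)}(x_t^m, x_\ast)$.

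Taking the expectation over $z_m$, the main remaining observation is that the Bregman divergence is linear in the underlying function, so that $\ec[z_m \sim \D_m]{D_{f(\cdot, z_m)}(x_t^m, x_\ast)} = D_f(x_t^m, x_\ast)$. This holds because $f = \ec[z_m \sim \D_m]{f(\cdot, z_m)}$ and the gradient commutes with the expectation, so the defining expression $f(x_t^m, z_m) - f(x_\ast, z_m) - \ev{\nabla f(x_\ast, z_m), x_t^m - x_\ast}$ averages exactly to $D_f(x_t^m, x_\ast)$. Combining the curvature bound $4 L D_f(x_t^m, x_\ast)$ with the noise term $2\sigma_m^2$ gives the claim.

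The computation is essentially routine; the only point requiring care is the interchange of expectation with the Bregman-divergence and gradient operations, which relies precisely on the almost-sure smoothness and convexity of each realization $f(\cdot, z_m)$ granted by Assumption~\ref{asm:finite-sum-stochastic-gradients}, rather than merely on smoothness and convexity of the aggregate $f$. I do not anticipate any genuine obstacle beyond keeping $x_t^m$ fixed when conditioning, so that \eqref{eq:bregman-dif} may be applied at the deterministic pair $(x_t^m, x_\ast)$ for each sample before averaging.
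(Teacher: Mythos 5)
Your proposal is correct and follows essentially the same route as the paper's proof: the same splitting of $\nabla f(x_t^m, z_m)$ around $\nabla f(x_\ast, z_m)$ via \eqref{eq:sum_sqnorm}, the same application of \eqref{eq:bregman-dif} to each realization $f(\cdot, z_m)$, and the same passage to expectation. The only cosmetic difference is that the paper simplifies the expected Bregman term by invoking $\ec{\nabla f(x_\ast, z_m)} = \nabla f(x_\ast) = 0$, whereas you appeal to linearity of the Bregman divergence in the underlying function --- both yield $D_f(x_t^m, x_\ast)$ identically.
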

\begin{proof}
     Using that $g_t^m = \nabla f (x_t^m, z_m)$ for some $z_m \sim \D_m$, 
     \begin{align*}
         \sqn{g_t^m} &= \sqn{\nabla f (x_t^m, z_m)} \\
         &\overset{\eqref{eq:sum_sqnorm}}{\leq} 2 \sqn{\nabla f (x_t^m, z_m) - \nabla f (x_\ast, z_m)} + 2 \sqn{\nabla f (x_\ast, z_m)} \\
         &\overset{\eqref{eq:bregman-dif}}{\leq} 4 L \br{f (x_t^m, z_m) - f (x_\ast, z_m) - \ev{\nabla f (x_\ast, z_m), x_t^m - x_\ast}} + 2 \sqn{\nabla f (x_\ast, z_m)}.
     \end{align*}
     Taking expectations and using that $\ec{\nabla f (x_\ast, z)} = \nabla f(x_\ast) = 0$ we get,
     \begin{align*}
         \ecn{g_t^m} &\leq 4 L \br{f(x_t^m) - f(x_\ast)} + 2 \sigma_m^2 \\
         &= 4 L D_{f} (x_t^m, x_\ast) + 2 \sigma_m^2.
     \end{align*}
\end{proof}

\begin{lemma}[Average gradient variance reduction]
    \label{lemma:minibatch-gradient-noise-reduction}
    Assume that Assumption~\ref{asm:finite-sum-stochastic-gradients} holds with identical data, then for all $t \geq 0$ and for $M$ nodes we have,
    \begin{align}
        \label{eq:lma-mbns}
        \ecn{g_t - \bar{g}_t} \leq \frac{2 \sigmaopt^2}{M} + \frac{4 L}{M^2} \sum_{m=1}^{M} D_{f} (x_t^m, x_\ast).
    \end{align}
\end{lemma}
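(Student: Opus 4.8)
The plan is to exploit the independence of the stochastic gradients across nodes to reduce the variance of their average by a factor of $M$, and then to control each node's individual second moment using Lemma~\ref{lemma:gradient-smoothness-bound}.

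First I would write $g_t - \bar{g}_t = \avemm (g_t^m - \bar{g}_t^m)$, where each $\bar{g}_t^m = \nabla f(x_t^m)$ in the identical-data setting. Since the samples $z_m \sim \D_m$ are drawn independently across the nodes, the centered terms $g_t^m - \bar{g}_t^m$ are mutually independent and mean-zero, so the variance of the sum equals the sum of the variances. This is exactly the reasoning already used in Lemma~\ref{lemma:minibatch-variance-reduction}, and it yields
\[ \ecn{g_t - \bar{g}_t} = \frac{1}{M^2}\sum_{m=1}^M \ecn{g_t^m - \bar{g}_t^m}. \]

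Next I would bound each summand. Using $\ecn{X - \ec{X}} \leq \ecn{X}$ from \eqref{eq:variance_sqnorm_upperbound} gives $\ecn{g_t^m - \bar{g}_t^m} \leq \ecn{g_t^m}$, and then applying Lemma~\ref{lemma:gradient-smoothness-bound} bounds this by $4L D_{f}(x_t^m, x_\ast) + 2\sigma_m^2$, where $\sigma_m^2$ is the noise at the optimum on node $m$. Substituting and separating the two contributions leaves $\frac{4L}{M^2}\sum_{m} D_{f}(x_t^m, x_\ast) + \frac{2}{M^2}\sum_{m} \sigma_m^2$.

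Finally I would identify the constant term: by the definition of $\sigmaopt^2$ we have $\sum_{m=1}^M \sigma_m^2 = M\sigmaopt^2$, so $\frac{2}{M^2}\sum_{m} \sigma_m^2 = \frac{2\sigmaopt^2}{M}$, which produces the claimed bound. No step here is genuinely difficult; the one place requiring care is the independence argument, since it is precisely what delivers the favorable $1/M$ scaling on the variance — dropping the cross terms incorrectly (for instance by bounding only through Jensen's inequality) would lose this factor and give a strictly weaker estimate.
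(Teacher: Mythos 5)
Your proof is correct and follows essentially the same route as the paper's: the identical decomposition of $g_t - \bar{g}_t$ into independent, mean-zero per-node terms, the same variance-of-independent-sum identity, the same bound $\ecn{g_t^m - \bar{g}_t^m} \leq \ecn{g_t^m}$ via \eqref{eq:variance_sqnorm_upperbound}, and the same application of Lemma~\ref{lemma:gradient-smoothness-bound} followed by the identification $\sum_{m=1}^{M}\sigma_m^2 = M\sigmaopt^2$. Nothing is missing; your closing remark about independence being the source of the $1/M$ factor is exactly the point the paper's proof relies on as well.
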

\begin{proof}
    Using the definition of $g_t$ and $\bar{g}_t$,
    \begin{align}
        \ecn{g_t - \bar{g}_t} &= \ecn{\frac{1}{M} \sum_{m=1}^{M} g_t^m - \nabla f(x_t^m)} \nonumber \\
        &= \frac{1}{M^2} \ecn{\sum_{m=1}^{M} \br{g_t^m - \nabla f(x_t^m)}}. \label{eq:lma-mgns-proof-1}
    \end{align}
    The sum in \eqref{eq:lma-mgns-proof-1} is the variance of a sum of independent random variables and hence can be decomposed into the sum of their individual variances which we can use Lemma~\ref{lemma:gradient-smoothness-bound} to bound:
    \begin{align*}
        \ecn{g_t - \bar{g}_t} &= \frac{1}{M^2} \sum_{m=1}^{M} \ecn{g_t^m - \nabla f(x_t^m)} \\
        &\overset{\eqref{eq:variance_sqnorm_upperbound}}{\leq} \frac{1}{M^2} \sum_{m=1}^{M} \ecn{g_t^m} \\
        &\overset{\eqref{eq:lma-gradient-smoothness-bound}}{\leq} \frac{1}{M^2} \sum_{m=1}^{M} \br{ 2 \sigma_m^2 + 4 L D_{f} (x_t^m, x_\ast) } \\
        &= \frac{2 \sigmaopt^2}{M} + \frac{4 L}{M^2} \sum_{m=1}^{M} D_f (x_t^m, x_\ast),
    \end{align*}
    where in the last equality we used that $\sigmaopt^2$ is by definition equal to $\sum_{m=1}^{M} \sigma_m^2/M$.
\end{proof}

\begin{lemma}
    \label{lemma:perturbed-iterate-analysis}
    Perturbed iterate analysis: this bounds the optimality gap across one iteration when the descent step is $\hat{x}_{t+1} = \hat{x}_t - \frac{\gamma}{M} \sum_{m=1}^{M} \nabla f(x_t^m)$, i.e.\ when the expectation of the local SGD update is used. Suppose that Assumptions~\ref{asm:convexity-and-smoothness}, and \ref{asm:finite-sum-stochastic-gradients} hold with identical data. Then,
    \begin{align}
        \begin{split}
            \label{eq:lma-perturbed-iterate-analysis}
            \sqn{\hat{x}_{t+1} - \gamma \bar{g}_t - x_\ast} \leq \sqn{\hat{x}_t - x_\ast} &+ 2 \gamma L V_t \\
            &+ \frac{2 \gamma}{M} \sum_{m=1}^{M} \br{\br{\gamma L - \frac{1}{2}} \br{f(x_t^m) - f(x_\ast)} - \frac{\mu}{2} \sqn{x_t^m - x_\ast}}.
        \end{split}
    \end{align}
\end{lemma}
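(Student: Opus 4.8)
The plan is to treat this as a deterministic inequality, conditional on the local iterates $x_t^1,\dots,x_t^M$, since under identical data $\bar g_t = \frac1M\sum_{m=1}^M \nabla f(x_t^m)$ is a fixed vector and no expectation is involved. First I would expand the squared norm of the virtual full-gradient step around $x_\ast$,
\[
\sqn{\hat x_t - \gamma \bar g_t - x_\ast} = \sqn{\hat x_t - x_\ast} - 2\gamma \ev{\hat x_t - x_\ast, \bar g_t} + \gamma^2 \sqn{\bar g_t},
\]
and then bound the cross (inner-product) term and the quadratic term separately, rewriting each in terms of $V_t$ and the local suboptimalities $f(x_t^m) - f(x_\ast)$.

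For the quadratic term I would push the average inside the norm by Jensen \eqref{eq:jensen_norm}, giving $\sqn{\bar g_t} \le \frac1M\sum_m \sqn{\nabla f(x_t^m)}$, and then use $\nabla f(x_\ast) = 0$ together with the smoothness bound \eqref{eq:bregman-dif} to get $\sqn{\nabla f(x_t^m)} = \sqn{\nabla f(x_t^m) - \nabla f(x_\ast)} \le 2L D_f(x_t^m, x_\ast) = 2L(f(x_t^m) - f(x_\ast))$. This yields $\gamma^2 \sqn{\bar g_t} \le \frac{2\gamma^2 L}{M}\sum_m (f(x_t^m) - f(x_\ast))$, producing the $\gamma L$ part of the $(\gamma L - \tfrac12)$ coefficient.

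The crux is the cross term. Writing $\bar g_t = \frac1M\sum_m\nabla f(x_t^m)$ and splitting $\hat x_t - x_\ast = (\hat x_t - x_t^m) + (x_t^m - x_\ast)$, the piece involving $x_t^m - x_\ast$ is handled by strong convexity \eqref{eq:asm-strong-convexity}, namely $\ev{\nabla f(x_t^m), x_t^m - x_\ast} \ge f(x_t^m) - f(x_\ast) + \frac\mu2\sqn{x_t^m - x_\ast}$, which (after multiplying by $-\tfrac{2\gamma}{M}$ and summing) contributes the distance term $-\frac{\gamma\mu}{M}\sum_m\sqn{x_t^m - x_\ast}$ and a function term $-\frac{2\gamma}{M}\sum_m(f(x_t^m)-f(x_\ast))$. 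The remaining piece $\frac{2\gamma}{M}\sum_m \ev{x_t^m - \hat x_t, \nabla f(x_t^m)}$ is the one I expect to be the main obstacle, because a naive smoothness bound leaves an $f(\hat x_t)$ term of the wrong sign that cannot be discarded. The key idea is to exploit $\sum_m (x_t^m - \hat x_t) = 0$ to recenter the gradient, replacing $\nabla f(x_t^m)$ by $\nabla f(x_t^m) - \nabla f(\hat x_t)$ at no cost, and then bound $\ev{x_t^m - \hat x_t, \nabla f(x_t^m) - \nabla f(\hat x_t)} \le L\sqn{x_t^m - \hat x_t}$ by Cauchy--Schwarz and Lipschitzness of the gradient; averaging gives exactly $2\gamma L V_t$.

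Finally I would collect the three contributions. The function-value terms combine to $\frac{2\gamma^2 L}{M}\sum_m(f(x_t^m) - f(x_\ast)) - \frac{2\gamma}{M}\sum_m(f(x_t^m)-f(x_\ast)) = \frac{2\gamma}{M}\sum_m(\gamma L - 1)(f(x_t^m) - f(x_\ast))$, which is in fact sharper than the claimed coefficient $\gamma L - \tfrac12$; since $f(x_t^m) - f(x_\ast) \ge 0$ and $\gamma L - 1 \le \gamma L - \tfrac12$, relaxing the coefficient gives precisely the stated bound, while the $2\gamma L V_t$ and $-\frac{\gamma\mu}{M}\sum_m\sqn{x_t^m-x_\ast}$ terms already match. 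I expect no probabilistic subtleties; the only real care is the recentering trick for the cross term and careful bookkeeping of signs.
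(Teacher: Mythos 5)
Your proof is correct, and it shares the paper's overall skeleton---expand the square, bound $\gamma^2 \sqn{\bar{g}_t}$ via Jensen's inequality \eqref{eq:jensen_norm} plus \eqref{eq:bregman-dif} (using $\nabla f(x_\ast)=0$), split $\hat{x}_t - x_\ast = (\hat{x}_t - x_t^m) + (x_t^m - x_\ast)$ and apply strong convexity \eqref{eq:asm-strong-convexity} to the second piece---but you handle the crux term $\frac{2\gamma}{M}\sum_{m} \ev{x_t^m - \hat{x}_t, \nabla f(x_t^m)}$ by a genuinely different mechanism. The paper applies Young's inequality \eqref{eq:youngs-inequality} with $\zeta = 2L$, which produces $2\gamma L V_t$ plus a leftover $\frac{\gamma}{2LM}\sum_m \sqn{\nabla f(x_t^m)}$; bounding that leftover by \eqref{eq:bregman-dif} costs an extra $\frac{\gamma}{M}\sum_m \br{f(x_t^m) - f(x_\ast)}$, which is exactly what degrades the function-value coefficient from $\gamma L - 1$ to the stated $\gamma L - \frac{1}{2}$. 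Your recentering trick---replacing $\nabla f(x_t^m)$ by $\nabla f(x_t^m) - \nabla f(\hat{x}_t)$ for free because $\sum_m (x_t^m - \hat{x}_t) = 0$, then Cauchy--Schwarz and $L$-Lipschitzness of $\nabla f$---yields $2\gamma L V_t$ with no leftover, so you obtain the sharper coefficient $\gamma L - 1$ and legitimately relax it to $\gamma L - \frac{1}{2}$ since $f(x_t^m) - f(x_\ast) \geq 0$. What each buys: your step is tighter and arguably cleaner; the paper's Young's-inequality route is more robust in that it does not rely on the gradients being evaluations of a \emph{common} function against a zero-sum family of vectors, which is why a similar template (rather than recentering) is what survives in the heterogeneous-data lemmas, where $\sum_m \ev{x_t^m - \hat{x}_t, \nabla f_m(\hat{x}_t)}$ would not vanish. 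Since all downstream results only use the stated $\gamma L - \frac{1}{2}$ coefficient, both proofs serve equally well.
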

\begin{proof}
    This is the first part of Lemma~3.1 in~\citep{Stich2018} and we reproduce it for completeness:
    \begin{align}
        \sqn{\hat{x}_t - x_\ast - \gamma \bar{g}_t} = \sqn{\hat{x}_t - x_\ast} &+ \gamma^2 \sqn{\bar{g}_t} - 2 \gamma \ev{\hat{x}_t - x_\ast, \bar{g}_t} \nonumber \\
        = \sqn{\hat{x}_t - x_\ast} &+ \gamma^2 \sqn{\bar{g}_t} - \frac{2 \gamma}{M} \sum_{m=1}^{M} \ev{\hat{x}_t - x_\ast, \nabla f(x_t^m)} \nonumber \\
        \overset{\eqref{eq:jensen_norm}}{\leq} \sqn{\hat{x}_t - x_\ast} &+ \frac{\gamma^2}{M} \sum_{m=1}^{M} \sqn{\nabla f(x_t^m)} - \frac{2 \gamma}{M} \sum_{m=1}^{M} \ev{\hat{x}_t - x_t^m + x_t^m - x_\ast, \nabla f(x_t^m)} \nonumber \\
        \begin{split}
            = \sqn{\hat{x}_t - x_\ast} &+ \frac{\gamma^2}{M} \sum_{m=1}^{M} \sqn{\nabla f(x_t^m) - f(x_\ast)} - \frac{2 \gamma}{M} \sum_{m=1}^{M} \ev{x_t^m - x_\ast, \nabla f(x_t^m)} \\
            &- \frac{2 \gamma}{M} \sum_{m=1}^{M} \ev{\hat{x}_t - x_t^m, \nabla f(x_t^m)}
        \end{split} \nonumber \\
        \begin{split}
            \overset{\eqref{eq:bregman-dif}}{\leq} \sqn{\hat{x}_t - x_\ast} &+ \frac{2 L \gamma^2}{M} \sum_{m=1}^{M} \br{f(x_t^m) - f(x_\ast)} - \frac{2 \gamma}{M} \sum_{m=1}^{M} \ev{x_t^m - x_\ast, \nabla f(x_t^m)} \\
            &- \frac{2 \gamma}{M} \sum_{m=1}^{M} \ev{\hat{x}_t - x_t^m, \nabla f(x_t^m)}
        \end{split} \nonumber \\
        \begin{split}\label{eq:lma-pia-proof-1}
            \overset{\eqref{eq:asm-strong-convexity}}{\leq} \sqn{\hat{x}_t - x_\ast} &+ \frac{2 \gamma}{M} \sum_{m=1}^{M} \br{\br{ \gamma L - 1 } \br{f(\hat{x}_t^m) - f(x_\ast)} - \frac{\mu}{2} \sqn{x_t^m - x_\ast}} \\
            &- \frac{2 \gamma}{M} \sum_{m=1}^{M} \ev{\hat{x}_t - x_t^m, \nabla f(x_t^m)}.
        \end{split}
    \end{align}
    To bound the last term in (\ref{eq:lma-pia-proof-1}) we use the generalized Young's inequality $2 \ev{a, b} \leq \zeta \sqn{a} + \zeta^{-1} \sqn{b}$ with $\zeta = 2L$:
    \begin{align}
        - 2 \ev{\hat{x}_t - x_t^m, \nabla f(x_t^m)} &\overset{\eqref{eq:youngs-inequality}}{\leq} 2L \sqn{x_t^m - \hat{x}_t} + \frac{1}{2L} \sqn{\nabla f(x_t^m)} \nonumber \\
        &= 2L \sqn{x_t^m - \hat{x}_t} + \frac{1}{2L} \sqn{\nabla f(x_t^m) - f(x_\ast)} \nonumber \\
        &\overset{\eqref{eq:bregman-dif}}{\leq} 2L \sqn{x_t^m - \hat{x}_t} + \br{f(x_t^m) - f(x_\ast)}. \label{eq:lma-pia-proof-2}
    \end{align}
    Finally, using \eqref{eq:lma-pia-proof-2} in \eqref{eq:lma-pia-proof-1} we get,
    \begin{align*}
        \begin{split}
            \sqn{\hat{x}_t - \gamma \bar{g}_t - x_\ast} \overset{\eqref{eq:lma-pia-proof-1}, \eqref{eq:lma-pia-proof-2}}{\leq} \sqn{\hat{x}_t - x_\ast} &+ \frac{2 \gamma}{M} \sum_{m=1}^{M} \br{\br{ \gamma L - \frac{1}{2} } \br{f(\hat{x}_t^m) - f(x_\ast)} - \frac{\mu}{2} \sqn{x_t^m - x_\ast}} \\
            &+ \frac{2 \gamma L}{M} \sum_{m=1}^{M} \sqn{\hat{x}_t - x_t^m}.
        \end{split}
    \end{align*}
\end{proof}

\begin{lemma}
    \label{lemma:optimality-gap-contraction}
    Single-iterate optimality gap analysis: Suppose that Assumptions~\ref{asm:convexity-and-smoothness} and \ref{asm:finite-sum-stochastic-gradients} hold with identical data. Choose a stepsize $\gamma > 0$ such that $\gamma \leq \frac{1}{4 L \br{1 + \frac{2}{M}}}$ where $M$ is the number of nodes, then for expectation conditional on $x_t^1, x_t^2, \ldots, x_t^M$ we have
    \begin{align}
        \label{eq:lma-optimality-gap-contraction}
        \ecn{\hat{x}_{t+1} - x_\ast} \leq \br{1 - \gamma \mu} \sqn{\hat{x}_t - x_\ast} &+ 2 \gamma L V_t + \frac{2 \gamma^2 \sigmaopt^2}{M} - \frac{\gamma}{2} \br{f(\hat{x}_t) - f(x_\ast)},
    \end{align}
    where $\hat{x}_t = \frac{1}{M} \sum_{m=1}^{M} x_t^m$ and $V_t \eqdef \frac{1}{M} \sum_{m=1}^{M} \sqn{x_t^m - \hat{x}_t}$ is the iterate variance across the different nodes from their mean at timestep $t$.
\end{lemma}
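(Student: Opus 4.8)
The plan is to decompose the expected squared distance of the true next iterate into a ``bias'' part (the expected iterate) and a ``variance'' part (the gradient noise), and then feed in the three preceding lemmas. Working with the expectation conditional on $x_t^1, \dotsc, x_t^M$, I would start from $\hat{x}_{t+1} = \hat{x}_t - \gamma g_t$ and apply the variance decomposition \eqref{eq:variance_def} to the vector $X = \hat{x}_{t+1} - x_\ast$. Since $\ec{X} = \hat{x}_t - \gamma \bar{g}_t - x_\ast$ and $X - \ec{X} = -\gamma (g_t - \bar{g}_t)$, this produces the exact identity
\[
\ecn{\hat{x}_{t+1} - x_\ast} = \gamma^2 \ecn{g_t - \bar{g}_t} + \sqn{\hat{x}_t - \gamma \bar{g}_t - x_\ast}.
\]

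Next I would bound the variance term by Lemma~\ref{lemma:minibatch-gradient-noise-reduction} and the mean-iterate term by Lemma~\ref{lemma:perturbed-iterate-analysis}, obtaining $\frac{2\gamma^2 \sigmaopt^2}{M} + \frac{4L\gamma^2}{M^2}\sum_m D_f(x_t^m, x_\ast)$ for the former and $\sqn{\hat{x}_t - x_\ast} + 2\gamma L V_t + \frac{2\gamma}{M}\sum_m\br{(\gamma L - \tfrac12)(f(x_t^m) - f(x_\ast)) - \tfrac{\mu}{2}\sqn{x_t^m - x_\ast}}$ for the latter. Using $\nabla f(x_\ast) = 0$ I identify $D_f(x_t^m, x_\ast) = f(x_t^m) - f(x_\ast)$, so every function-suboptimality contribution can be gathered into one sum $\frac{2\gamma}{M}\sum_m (f(x_t^m) - f(x_\ast))$ carrying the combined coefficient $\gamma L(1 + \tfrac{2}{M}) - \tfrac12$.

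The crux of the argument, and the only place the stepsize restriction enters, is controlling the sign of this combined coefficient. The hypothesis $\gamma \leq \frac{1}{4L(1 + 2/M)}$ is calibrated precisely so that $\gamma L(1 + \tfrac{2}{M}) \leq \tfrac14$, whence the coefficient is at most $-\tfrac14$; since each $f(x_t^m) - f(x_\ast) \geq 0$, the whole block is bounded above by $-\frac{\gamma}{2M}\sum_m (f(x_t^m) - f(x_\ast))$. Two applications of Jensen's inequality then close the proof: \eqref{eq:jensen} gives $\frac1M\sum_m (f(x_t^m) - f(x_\ast)) \geq f(\hat{x}_t) - f(x_\ast)$, converting the function term into $-\frac{\gamma}{2}(f(\hat{x}_t) - f(x_\ast))$, while \eqref{eq:jensen_norm} gives $\frac1M\sum_m \sqn{x_t^m - x_\ast} \geq \sqn{\hat{x}_t - x_\ast}$, turning the strong-convexity term $-\frac{\gamma\mu}{M}\sum_m \sqn{x_t^m - x_\ast}$ into $-\gamma\mu\sqn{\hat{x}_t - x_\ast}$ and delivering the contraction factor $(1 - \gamma\mu)$.

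The main obstacle is the bookkeeping in the second step: the extra $\frac{4L\gamma^2}{M^2}$ factor from the minibatch variance reduction must be merged with the $(\gamma L - \tfrac12)$ factor from the perturbed-iterate lemma, and it is exactly this merger that forces the $(1 + \tfrac{2}{M})$ correction in the stepsize condition rather than the plainer $\gamma \leq \frac{1}{4L}$ one would obtain in the noiseless or infinite-$M$ regime. Everything else is routine rearrangement, so the care required is mainly in ensuring that no function-value term is dropped or double-counted when the two lemma bounds are added together.
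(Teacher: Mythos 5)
Your proposal is correct and follows essentially the same route as the paper's proof: the conditional variance decomposition \eqref{eq:variance_def}, Lemma~\ref{lemma:minibatch-gradient-noise-reduction} for the noise term, Lemma~\ref{lemma:perturbed-iterate-analysis} for the mean-iterate term, the merged coefficient $\gamma L\br{1+\tfrac{2}{M}} - \tfrac{1}{2} \leq -\tfrac{1}{4}$ forced by the stepsize condition, and Jensen's inequality to pass from the local iterates $x_t^m$ to $\hat{x}_t$. The only cosmetic difference is that you apply Jensen separately to $f$ and to $\sqn{\cdot}$, whereas the paper applies it once to the combined convex function $x \mapsto \tfrac{1}{4}\br{f(x)-f(x_\ast)} + \tfrac{\mu}{2}\sqn{x - x_\ast}$; these are equivalent.
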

\begin{proof}
    This is a modification of Lemma~3.1 in~\cite{Stich2018}. For expectation conditional on $(x_t^m)_{m=1}^{M}$ and using Lemma~\ref{lemma:perturbed-iterate-analysis},
    \begin{align*}
        \ecn{\hat{x}_{t+1} - x_\ast} &\overset{\eqref{eq:variance_def}}{=} \sqn{\hat{x}_t - x_\ast - \gamma \bar{g}_t} + \gamma^2 \ecn{g_t - \bar{g}_t} \\
        &\overset{\eqref{eq:lma-perturbed-iterate-analysis}}{\leq} \sqn{\hat{x}_t - x_\ast} + 2 \gamma L V_t + \gamma^2 \ecn{g_t - \bar{g}_t}\\
        &+ \frac{2 \gamma}{M} \sum_{m=1}^{M} \br{ \br{\gamma L - \frac{1}{2}} \br{f(x_t^m) - f(x_\ast)} - \frac{\mu}{2} \sqn{x_t^m - x_\ast} }.
    \end{align*}
    Now use Lemma~\ref{lemma:minibatch-gradient-noise-reduction} to bound $\sqn{g_t - \bar{g}_t}$:
    \begin{align}
        \label{eq:lma-ogc-proof-1}
        \begin{split}
            \ecn{\hat{x}_{t+1} - x_\ast} \overset{\eqref{eq:lma-mbns}}{\leq} \sqn{\hat{x}_t - x_\ast} &+ 2 \gamma L V_t + \frac{2 \gamma^2 \sigmaopt^2}{M} \\
            &+ \frac{2 \gamma}{M} \sum_{m=1}^{M} \br{ \br{\gamma L + \frac{2 \gamma L}{M} - \frac{1}{2}} \br{f(x_t^m) - f(x_\ast)} - \frac{\mu}{2} \sqn{x_t^m - x_\ast} }.
        \end{split}
    \end{align}
    We now use that the stepsize $\gamma \leq \frac{1}{4 L \br{1 + \frac{2}{M}}}$ to bound the last term in \eqref{eq:lma-ogc-proof-1},
    \begin{align}
        \label{eq:lma-ogc-proof-2}
        \begin{split}
            \ecn{\hat{x}_{t+1} - x_\ast} \leq \sqn{\hat{x}_t - x_\ast} &+ 2 \gamma L V_t + \frac{2 \gamma^2 \sigmaopt^2}{M} \\
            &+ \frac{2 \gamma}{M} \sum_{m=1}^{M} \br{ -\frac{1}{4} \br{f(x_t^m) - f(x_\ast)} - \frac{\mu}{2} \sqn{x_t^m - x_\ast} }.
        \end{split}
    \end{align}
    Applying Jensen's inequality from Proposition~\ref{pr:jensen} to $\frac{1}{4} \br{f(x_t^m)- f(x_\ast)} + \frac{\mu}{2} \sqn{x_t^m - x_\ast}$, we obtain
    \begin{align}
        \label{eq:lma-ogc-proof-3}
        - \avemm \br{\frac{1}{4} \br{f(x_t^m)- f(x_\ast)} + \frac{\mu}{2} \sqn{x_t^m - x_\ast} } \overset{\eqref{eq:jensen}}{\leq} - \br{ \frac{1}{4} \br{f(\hat{x}_t) - f(x_\ast)} + \frac{\mu}{2} \sqn{\hat{x}_t - x_\ast}}.
    \end{align}
    Plugging~\eqref{eq:lma-ogc-proof-3} in \eqref{eq:lma-ogc-proof-2}, we get
    \begin{align*}
        \ecn{\hat{x}_{t+1} - x_\ast} \overset{\eqref{eq:lma-ogc-proof-2}, \eqref{eq:lma-ogc-proof-3}}{\leq} \br{1 - \gamma \mu} \sqn{\hat{x}_t - x_\ast} &+ 2 \gamma L V_t + \frac{2 \gamma^2 \sigmaopt^2}{M} - \frac{\gamma}{2} \br{f(\hat{x}_t) - f(x_\ast)},
    \end{align*}
    which is the claim of this lemma.
\end{proof}

\begin{lemma}
    \label{lemma:average-gradient-deviation}
    Bounding the deviation of the gradients from their average: under Assumptions~\ref{asm:convexity-and-smoothness} and \ref{asm:finite-sum-stochastic-gradients} for identical data we have for all $t \geq 0$,
    \begin{equation}
        \label{eq:lma-average-gradient-deviation}
        \avemm \ecn{g_t^m - \avemm g_t^m} \leq 2 \sigmaopt^2 + \frac{4L}{M} \sum_{m=1}^{M} D_{f} (x_t^m, x_\ast).
    \end{equation}
\end{lemma}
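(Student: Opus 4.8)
The plan is to reduce this statement to Lemma~\ref{lemma:gradient-smoothness-bound} via the elementary observation that subtracting the empirical mean can only decrease the average squared norm. Concretely, I would start from the vector variance decomposition~\eqref{eq:variance_m} applied to the family $X_m = g_t^m$, which gives
\[
    \avemm \sqn{g_t^m} = \avemm \sqn{g_t^m - \avemm g_t^i} + \sqn{\avemm g_t^m}.
\]
Since the last term is nonnegative, dropping it yields the key bound
\[
    \avemm \sqn{g_t^m - \avemm g_t^i} \leq \avemm \sqn{g_t^m}.
\]
This is the whole conceptual content of the lemma: deviation from the average is at most the raw second moment.

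Next I would take expectations (conditional on the iterates $x_t^1,\dots,x_t^M$) on both sides, so that the left-hand side becomes exactly the quantity appearing in the statement, and the right-hand side becomes $\avemm \ecn{g_t^m}$. Then I would invoke Lemma~\ref{lemma:gradient-smoothness-bound}, which under Assumption~\ref{asm:finite-sum-stochastic-gradients} with identical data bounds each per-node second moment as $\ecn{g_t^m} \leq 4 L D_f(x_t^m, x_\ast) + 2\sigma_m^2$, where $\sigma_m^2$ is the noise at the optimum on node $m$.

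Averaging this per-node estimate over $m$ gives
\[
    \avemm \ecn{g_t^m} \leq \frac{4L}{M} \sum_{m=1}^{M} D_f(x_t^m, x_\ast) + 2 \avemm \sigma_m^2,
\]
and the proof concludes by recalling that $\sigmaopt^2 = \avemm \sigma_m^2$ by definition, so the last term equals $2\sigmaopt^2$, matching~\eqref{eq:lma-average-gradient-deviation}. I do not anticipate a genuine obstacle here; the only point requiring care is the very first step, where one must pick the centering at the empirical mean $\avemm g_t^i$ (rather than, say, at $\bar g_t$) so that the discarded term in the variance decomposition is precisely the one needed, after which the bound reduces cleanly to the already-established single-node estimate.
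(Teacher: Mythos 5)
Your proposal is correct and follows essentially the same route as the paper's own proof: the variance decomposition~\eqref{eq:variance_m} to drop the squared mean term, then taking expectations and invoking Lemma~\ref{lemma:gradient-smoothness-bound} before averaging over $m$ and identifying $\sigmaopt^2 = \avemm \sigma_m^2$. No gaps; the argument matches the paper step for step.
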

\begin{proof}
    We start by the variance bound,
    \begin{align*}
        \avemm \sqn{g_t^m - \avemm g_t^m} &\overset{\eqref{eq:variance_m}}{=} \avemm \sqn{g_t^m} - \sqn{\avemm g_t^m} \\
        &\leq \avemm \sqn{g_t^m}.
    \end{align*}
    We now take expectations and use Lemma \ref{lemma:gradient-smoothness-bound}:
    \begin{align*}
        \avemm \ecn{g_t^m - \avemm g_t^m} &\leq \avemm \ecn{g_t^m} \\
        &\leq \frac{1}{M} \sum_{m=1}^{M} \br{2 \sigma_m^2 + 4L D_{f} (x_t^m, x_\ast)} \\
        &= 2 \sigmaopt^2 + \frac{4 L}{M} \sum_{m=1}^{M} D_{f} (x_t^m, x_\ast).
    \end{align*}
\end{proof}

\begin{lemma}
    \label{lemma:average-iterate-deviation-recursion}
    Suppose that Assumptions~\ref{asm:convexity-and-smoothness}, and \ref{asm:finite-sum-stochastic-gradients} hold for identical data. Choose $\gamma \leq \frac{1}{2 L}$, then for all $t \geq 0$:
    \begin{equation}
        \label{eq:lma-average-iterate-deviation-recursion}
        \ec{V_{t+1}} \leq \br{1 - \gamma \mu} \ec{V_t} + 2 \gamma \ec{D_{f} (\hat{x}_t, x_\ast)} + 2 \gamma^2 \sigmaopt^2.
    \end{equation}
\end{lemma}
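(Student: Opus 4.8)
The plan is to reuse the one-step argument behind Lemma~\ref{lemma:uniform-var-iterate-variance-bound}, but to feed it the finite-sum variance bound instead of the uniform one. From $x_{t+1}^m = x_t^m - \gamma g_t^m$ and $\hat{x}_{t+1} = \hat{x}_t - \gamma g_t$ I would write $x_{t+1}^m - \hat{x}_{t+1} = \br{x_t^m - \hat{x}_t} - \gamma\br{g_t^m - g_t}$, expand the square, average over $m$, and take expectation conditional on $x_t^1,\dots,x_t^M$. This produces $\ec{V_{t+1}} = V_t - \frac{2\gamma}{M}\sum_m \ec{\ev{x_t^m - \hat{x}_t, g_t^m - g_t}} + \gamma^2 \avemm \ec{\sqn{g_t^m - g_t}}$, i.e.\ the unchanged term $V_t$, a cross term, and a gradient-variance term to be treated separately.

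Because $\ec{g_t^m} = \nabla f(x_t^m)$, $\ec{g_t} = \bar{g}_t$, and $\avemm\br{x_t^m - \hat{x}_t} = 0$ annihilates the $\bar{g}_t$ contribution, the cross term collapses to $-\frac{2\gamma}{M}\sum_m \ev{x_t^m - \hat{x}_t, \nabla f(x_t^m)}$. The key identity I would use is $-\ev{x_t^m - \hat{x}_t, \nabla f(x_t^m)} = f(\hat{x}_t) - f(x_t^m) - D_f(\hat{x}_t, x_t^m)$, so that after averaging the cross term equals $2\gamma\br{f(\hat{x}_t) - \avemm f(x_t^m)} - 2\gamma\,\avemm D_f(\hat{x}_t, x_t^m)$. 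Strong convexity via \eqref{eq:asm-strong-convexity} gives $D_f(\hat{x}_t, x_t^m) \ge \frac{\mu}{2}\sqn{\hat{x}_t - x_t^m}$, which turns the last average into the contraction $-\gamma\mu V_t$; crucially I would keep the functional gap $2\gamma\br{f(\hat{x}_t) - \avemm f(x_t^m)}$ intact rather than discarding it, since its $f(\hat{x}_t)$ part is exactly what later rebuilds the target term $2\gamma D_f(\hat{x}_t, x_\ast)$.

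For the variance term I would apply Lemma~\ref{lemma:average-gradient-deviation} (noting $g_t = \avemm g_t^m$) to get $\gamma^2\avemm\ec{\sqn{g_t^m - g_t}} \le 2\gamma^2 \sigmaopt^2 + \frac{4L\gamma^2}{M}\sum_m D_f(x_t^m, x_\ast)$; the $2\gamma^2\sigmaopt^2$ piece is already the noise term of the claim. The step I expect to be the crux is the bookkeeping that absorbs the leftover $\frac{4L\gamma^2}{M}\sum_m D_f(x_t^m, x_\ast)$ and reconciles it with the retained functional gap against the target $2\gamma D_f(\hat{x}_t, x_\ast)$. Substituting $D_f(\cdot, x_\ast) = f(\cdot) - f(x_\ast)$ (valid since $\nabla f(x_\ast) = 0$), the remaining inequality reduces to $(2L\gamma - 1)\br{\avemm f(x_t^m) - f(x_\ast)} \le 0$, which holds because the stepsize bound $\gamma \le \frac{1}{2L}$ makes the prefactor nonpositive while $\avemm f(x_t^m) \ge f(x_\ast)$ as $x_\ast$ minimizes $f$. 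Taking total expectation over the history up to time $t$ then yields the stated recursion for $\ec{V_{t+1}}$.
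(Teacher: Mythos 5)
Your proposal is correct and follows essentially the same route as the paper's proof: the same expansion of $\ec{V_{t+1}}$ with the cross term collapsing via $\avemm \br{x_t^m - \hat{x}_t} = 0$, the same use of Lemma~\ref{lemma:average-gradient-deviation} for the variance term, the same strong-convexity step (yours phrased through the Bregman identity, the paper's through \eqref{eq:asm-strong-convexity}, which are identical in content), and the same $\gamma \le \frac{1}{2L}$ bookkeeping combined with $\avemm f(x_t^m) \ge f(x_\ast)$. The only omission is the edge case where $t+1$ is a synchronization time $t_p$ --- there the update rule $x_{t+1}^m = x_t^m - \gamma g_t^m$ you start from does not hold, but then $V_{t+1} = 0$ and the claimed inequality is immediate since its right-hand side is nonnegative, which is exactly how the paper dispatches that case.
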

\begin{proof}
    If $t + 1 = t_p$ for some $p \in \N$ then the left hand side is zero and the above inequality is trivially satisfied. If not, then recall that $x_{t+1}^m = x_t^m - \gamma g_t^m$ and $\hat{x}_{t+1} = \hat{x}_t - \gamma g_t$ where $\ec{g_t^m} = \nabla f(x_t^m)$ and $g_t = \avemm g_t^m$. Hence, for the expectation conditional on $(x_t^m)_{m=1}^{M}$ we have
    \begin{align*}
        \ecn{x_{t+1}^m - \hat{x}_{t+1}} &= \ecn{x_t^m - \hat{x}_t - \gamma \br{g_t^m - g_t}} \\
        &= \ecn{x_t^m - \hat{x}_t^m} + \gamma^2 \ecn{g_t^m - g_t} - 2 \gamma \ec{\ev{x_t^m - \hat{x}_t, g_t^m - g_t}} \\
        &= \ecn{x_t^m - \hat{x}_t^m} + \gamma^2 \ecn{g_t^m - g_t} - 2 \gamma \ev{x_t^m - \hat{x}_t, \nabla f(x_t^m) - \bar{g}_t},
    \end{align*}
    where $\bar{g}_t = \ec{\avemm g_t^m} = \avemm \nabla f(x_t^m)$. Averaging over $m$ in the last equality,
    \begin{align}
        \ec{V_{t+1}} &= V_t + \frac{\gamma^2}{M} \sum_{m=1}^{M}\ecn{g_t^m - g_t} - \frac{2\gamma}{M} \sum_{m=1}^{M} \ev{x_t^m - \hat{x}_t, \nabla f(x_t^m)} + \frac{2 \gamma}{M} \sum_{m=1}^{M} \ev{x_t^m - \hat{x}_t, \bar{g}_t} \nonumber \\
        &= V_t + \frac{\gamma^2}{M} \sum_{m=1}^{M} \ecn{g_t^m - g_t} - \frac{2\gamma}{M} \sum_{m=1}^{M} \ev{x_t^m - \hat{x}_t, \nabla f(x_t^m)} + 2 \gamma \underbrace{\ev{\hat{x}_t - \hat{x}_t, \bar{g}_t}}_{=0} \nonumber \\
        &= V_t + \frac{\gamma^2}{M} \sum_{m=1}^{M} \ecn{g_t^m - g_t} - \frac{2 \gamma}{M} \sum_{m=1}^{M} \ev{x_t^m - \hat{x}_t, \nabla f(x_t^m)}. \label{eq:lma-aidr-proof-1}
    \end{align}
    We now use Lemma~\ref{lemma:average-gradient-deviation} to bound the second term in \eqref{eq:lma-aidr-proof-1},
    \begin{align}
        \label{eq:lma-aidr-proof-2}
        \ec{V_{t+1}} &\overset{\eqref{eq:lma-average-gradient-deviation}}{\leq} V_t + \frac{4 L \gamma^2}{M} \sum_{m=1}^{M} D_{f} (x_t^m, x_\ast) + 2 \gamma^2 \sigmaopt^2 - \frac{2 \gamma}{M} \sum_{m=1}^{M} \ev{x_t^m - \hat{x}_t, \nabla f(x_t^m)}.
    \end{align}
    We now use Assumption~\ref{asm:convexity-and-smoothness} to bound the last term in \eqref{eq:lma-aidr-proof-2}:
    \begin{align}
        \label{eq:lma-aidr-proof-3}
        \ev{\hat{x}_t - x_t^m, \nabla f(x_t^m)} &\overset{\eqref{eq:asm-strong-convexity}}{\leq} f(\hat{x}_t) - f(x_t^m) - \frac{\mu}{2} \sqn{x_t^m - \hat{x}_t}.
    \end{align}
    Plugging \eqref{eq:lma-aidr-proof-3} into \eqref{eq:lma-aidr-proof-2},
    \begin{align}
        \label{eq:lma-aidr-proof-4}
        \ec{V_{t+1}} &\leq \br{1 - \gamma \mu} V_t + 2 \gamma^2 \sigmaopt^2 + \frac{4 L \gamma^2}{M} \sum_{m=1}^{M} \br{f(x_t^m) - f(x_\ast)} + \frac{2 \gamma}{M} \sum_{m=1}^{M} \br{f(\hat{x}_t) - f(x_t^m)}.
    \end{align}
    Using that $\gamma \leq \frac{1}{2L}$ in \eqref{eq:lma-aidr-proof-4},
    \begin{align*}
        \ec{V_{t+1}} &\leq \br{1 - \gamma \mu} V_t + 2 \gamma^2 \sigmaopt^2 + \frac{2 \gamma}{M} \sum_{m=1}^{M} \br{f(x_t^m) - f(x_\ast) + f(\hat{x}_t) - f(x_t^m)} \\
        &= \br{1 - \gamma \mu} V_t + 2 \gamma^2 \sigmaopt^2 + 2\gamma \br{f(\hat{x}_t) - f(x_\ast)}.
    \end{align*}
    Taking unconditional expectations and using the tower property yields the lemma's statement.
\end{proof}

\begin{lemma}{Epoch Iterate Deviation Bound}
    \label{lemma:epoch-iterate-deviation-bound}
    Suppose that Assumptions~\ref{asm:convexity-and-smoothness}, and \ref{asm:finite-sum-stochastic-gradients} hold with identical data. Assume that Algorithm~\ref{alg:local_sgd} is run with stepsize $\gamma > 0$, let $p \in \N$ be such that $t_{p}$ is a synchronization point then for $v=t_{p+1} - 1$ we have for $\alpha \eqdef 1 - \gamma \mu$,
    \begin{align*}
        \sum_{t=t_p}^{v} \alpha^{v-t} \cdot \ec{V_t} &\leq \frac{2 \gamma \br{H-1}}{\alpha} \sum_{t=t_p}^{v} \alpha^{v-t} \cdot \ec{D_f (\hat{x}_t, x_\ast)} + 2 \gamma^2 \sigmaopt^2 \br{H - 1} \sum_{t=t_p}^{v} \alpha^{v-t}. 
    \end{align*}
\end{lemma}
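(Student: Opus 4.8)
The plan is to turn the one-step recursion of Lemma~\ref{lemma:average-iterate-deviation-recursion} into the claimed weighted-sum bound by unrolling it over the current epoch and then interchanging the order of a double summation. Write $\alpha = 1 - \gamma\mu$ and abbreviate the per-step increment as $D_t \eqdef 2\gamma \ec{D_f(\hat{x}_t, x_\ast)} + 2\gamma^2 \sigmaopt^2$, so that Lemma~\ref{lemma:average-iterate-deviation-recursion} reads $\ec{V_{t+1}} \leq \alpha \ec{V_t} + D_t$. The crucial boundary fact is that $V_{t_p} = 0$, since all local iterates agree immediately after a synchronization. Unrolling the recursion from this zero initial value gives, for every $t$ with $t_p \le t \le v+1$, the bound $\ec{V_t} \leq \sum_{s=t_p}^{t-1} \alpha^{t-1-s} D_s$, the $t = t_p$ case being the empty sum, consistent with $V_{t_p}=0$.

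Next I would substitute this into the target sum $\sum_{t=t_p}^{v} \alpha^{v-t} \ec{V_t}$ and split $D_s$ into its $D_f$-part $E_s \eqdef 2\gamma \ec{D_f(\hat{x}_s, x_\ast)}$ and its constant part $C \eqdef 2\gamma^2 \sigmaopt^2$, which I treat asymmetrically. For the $E_s$-part the double sum $\sum_{t=t_p+1}^{v}\alpha^{v-t}\sum_{s=t_p}^{t-1}\alpha^{t-1-s}E_s$ collapses, after cancelling powers of $\alpha$ into $\alpha^{v-1-s}$ and swapping the order of summation, to $\sum_{s=t_p}^{v-1}(v-s)\,\alpha^{v-1-s}E_s$; bounding the multiplicity $v-s \le t_{p+1}-1-t_p \le H-1$, rewriting $\alpha^{v-1-s} = \alpha^{v-s}/\alpha$, and re-adding the nonnegative $s=v$ term produces exactly $\frac{2\gamma(H-1)}{\alpha}\sum_{s=t_p}^{v}\alpha^{v-s}\ec{D_f(\hat{x}_s,x_\ast)}$. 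For the constant part I instead bound the inner geometric sum directly by its number of terms, $\sum_{s=t_p}^{t-1}\alpha^{t-1-s} \le t - t_p \le H-1$ (using $\alpha \le 1$), so its contribution to each $\ec{V_t}$ is at most $C(H-1)$, and the outer weighting then yields $C(H-1)\sum_{t=t_p}^{v}\alpha^{v-t} = 2\gamma^2\sigmaopt^2(H-1)\sum_{t=t_p}^{v}\alpha^{v-t}$. Summing the two contributions gives the stated inequality.

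The main obstacle is the bookkeeping of the summation interchange together with the deliberate asymmetry in handling the two parts of $D_s$: the $D_f$-term must go through the Fubini swap, which is what forces the extra factor $1/\alpha$ (harmless, since $\alpha = 1-\gamma\mu$ lies in $(0,1]$ given $\gamma \le \frac{1}{2L}$ and $\mu \le L$), whereas routing the constant term through the same swap would leave the second summand with a spurious $1/\alpha$ and miss the claimed form. The remaining care is to verify positivity, namely $D_f(\hat{x}_t,x_\ast)\ge 0$ and $\sigmaopt^2 \ge 0$, so that truncating and extending the summation ranges only weakens the bound.
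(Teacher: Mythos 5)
Your proof is correct and follows essentially the same route as the paper: unroll the one-step recursion of Lemma~\ref{lemma:average-iterate-deviation-recursion} from $V_{t_p}=0$, bound the constant-variance contribution by $(H-1)$ per step using $\alpha \le 1$ while keeping the outer $\alpha^{v-t}$ weights, and extract the factor $(H-1)/\alpha$ from the weighted double sum of the $D_f$ terms. The only cosmetic difference is that you swap the order of summation and bound the multiplicity $v-s \le H-1$ directly, whereas the paper first enlarges the inner sum to the whole epoch and then counts the outer terms; both yield the same bound, and your bookkeeping of the count is in fact the tidier of the two.
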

\begin{proof}
    We start with Lemma~\ref{lemma:average-iterate-deviation-recursion},
    \begin{align*}
        \ec{V_{t}} &\leq \br{1 - \gamma \mu} \ec{V_{t-1}} + 2 \gamma \ec{D_{f} (\hat{x}_{t-1}, x_\ast)} + 2 \gamma^2 \sigmaopt^2 \\
        &= \alpha \cdot \ec{V_{t-1}} + 2 \gamma \ec{D_{f} (\hat{x}_{t-1}, x_\ast)} + 2 \gamma^2 \sigmaopt^2.
    \end{align*}
    By assumption there is some synchronization point $p \in \N$ such that $t_p \leq t \leq t_{p+1} - 1$ and $t_{p+1} - t_p \leq H$, recursing the above inequality until $t_{p}$ and using that $V_{t_p} = 0$,
    \begin{align}
        \ec{V_{t}} \leq \alpha^{t - t_p} \ec{V_{t_p}} &+ 2 \gamma \sum_{j=t_p}^{t-1} \alpha^{t - j - 1} \ec{D_{f} (\hat{x}_{j}, x_\ast)} \nonumber \\
        &+ 2 \sigmaopt^2 \sum_{j=t_p}^{t-1} \gamma^2 \alpha^{t - 1 - j} \nonumber \\
        \label{eq:lma-eidb-1}
        = \frac{2 \gamma}{\alpha} \sum_{j=t_p}^{t-1} \alpha^{t - j} \ec{D_{f} (\hat{x}_{j}, x_\ast)} &+ 2 \gamma^2 \sigmaopt^2 \sum_{j=t_p}^{t-1} \alpha^{t- 1 - j}.
    \end{align}
    The second term in \eqref{eq:lma-eidb-1} can be bounded as follows: because $\alpha \leq 1$ then $\alpha^{t - 1 - j} \leq 1$ for $j \leq t-1$, hence for $t \leq t_{p+1} - 1$
    \begin{align}
        2 \gamma^2 \sigmaopt^2 \sum_{j=t_p}^{t-1} \alpha^{t - 1 - j} &\leq 2 \gamma^2 \sigmaopt^2 \sum_{j=t_p}^{t-1} 1 \nonumber \\
        &= 2 \gamma^2 \sigmaopt^2 \br{t - t_p} \nonumber \\
        &\leq 2 \gamma^2 \sigmaopt^2 \br{t_{p+1} - t_p - 1} \nonumber \\
        \label{eq:lma-eidb-2}
        &\leq 2 \gamma^2 \sigmaopt^2 \br{H - 1}.
    \end{align}
    Using \eqref{eq:lma-eidb-2} in \eqref{eq:lma-eidb-1},
    \begin{align}
        \label{eq:lma-eidb-3}
        \ec{V_t} &\leq \frac{2 \gamma}{\alpha} \sum_{j=t_p}^{t-1} \alpha^{t - j} \ec{D_{f} (\hat{x}_{j}, x_\ast)} + 2 \gamma^2 \sigmaopt^2 \br{H - 1}.
    \end{align}
    Then summing up \eqref{eq:lma-eidb-3} weighted by $\alpha^{v-t}$ for $v = t_{p+1} - 1$,
    \begin{align}
        \begin{split}
            \sum_{t=t_p}^{v} \alpha^{v-t} \ec{V_t} &\leq \frac{2 \gamma}{\alpha} \sum_{t=t_p}^{v} \alpha^{v-t} \sum_{j=t_p}^{t-1} \alpha^{t-j} \ec{D_f (\hat{x}_j, x_\ast)} + 2 \gamma^2 \sigmaopt^2 \br{H - 1} \sum_{t=t_p}^{v} \alpha^{v-t}
        \end{split}\nonumber \\
        \label{eq:lma-eidb-4}
&=\frac{2 \gamma}{\alpha} \sum_{t=t_p+1}^{v} \alpha^{v-t} \sum_{j=t_p}^{t-1} \alpha^{t-j} \ec{D_f (\hat{x}_j, x_\ast)} + 2 \gamma^2 \sigmaopt^2 \br{H - 1} \sum_{t=t_p}^{v} \alpha^{v-t},
    \end{align}
     where we used that at $t=t_p$ the sum $\sum_{j=t_p}^{t-1} \alpha^{t-j} \ec{D_f (\hat{x}_j, x_\ast)}$ is zero. Then by adding more Bregman divergence terms (which are positive) to the inner sum we obtain
    \begin{align}
        \sum_{t=t_p+1}^{v} \alpha^{v-t} \sum_{j=t_p}^{t-1} \alpha^{t-j} \ec{D_f (\hat{x}_j, x_\ast)} &\leq \sum_{t=t_p+1}^{v} \alpha^{v-t} \sum_{j=t_p}^{v} \alpha^{t-j} \ec{D_f (\hat{x}_j, x_\ast)} \nonumber \\
        &= \sum_{t=t_p+1}^{v} \sum_{j=t_p}^{v} \alpha^{v - j} \ec{D_f (\hat{x}_j, x_\ast)} \nonumber \\
        &= \br{v - t_p} \sum_{j=t_p}^{v} \alpha^{v - j} \ec{D_f (\hat{x}_j, x_\ast)} \nonumber \\
        &= \br{t_{p+1} - t_p - 1} \sum_{j=t_p}^{v} \alpha^{v - j} \ec{D_f (\hat{x}_j, x_\ast)} \nonumber \\
        \label{eq:lma-eidb-5}
        &\leq \br{H-1} \sum_{j=t_p}^{v} \alpha^{v - j} \ec{D_f (\hat{x}_j, x_\ast)}.
    \end{align}
    Combining \eqref{eq:lma-eidb-5} and \eqref{eq:lma-eidb-4} we have,
    \begin{align*}
        \sum_{t=t_p}^{v} \alpha^{v-t} \cdot \ec{V_t} &\leq \frac{2 \gamma \br{H-1}}{\alpha} \sum_{j=t_p}^{v} \alpha^{v-j} \cdot \ec{D_f (\hat{x}_j, x_\ast)} + 2 \gamma^2 \sigmaopt^2 \br{H - 1} \sum_{t=t_p}^{v} \alpha^{v-t}. 
    \end{align*}
    Finally, renaming the variable $j$ gives us the claim of this lemma.
\end{proof}

\subsection{Proof of Theorem~\ref{theorem:sc-unbounded-variance-iid}}
\begin{proof}
    Let $(t_{p})_{p=1}^{\infty}$ index all the times $t$ at which communication and averaging happen. Taking expectations in Lemma~\ref{lemma:optimality-gap-contraction} and letting $r_{t} = \hat{x}_t - x_\ast$,
    \begin{align}
        \label{smooth-case-thm-proof-3}
        \ecn{r_{t+1}} &\leq \br{1 - \gamma \mu} \ecn{r_t} + 2 \gamma L \ec{V_{t}} + \frac{2 \gamma^2 \sigmaopt^2}{M} - \frac{\gamma}{2} \ec{D_{f} (\hat{x}_t, x_\ast)} \\
        &= \br{1 - \gamma \mu} \ecn{r_t} + \br{2 \gamma L \ec{V_t} - \frac{\gamma}{2} D_{f} (\hat{x}_t, x_\ast)} + \frac{2 \gamma^2 \sigmaopt^2}{M}.
    \end{align}
    Let $T = t_p - 1$ for some $p \in \N$, then expanding out $\ecn{r_t}$ in \eqref{smooth-case-thm-proof-3},
    \begin{align}
        \ecn{r_{T+1}} &\leq \br{1 - \gamma \mu}^{T+1} \ecn{\hat{x}_{0} - x_\ast} + \sum_{t=0}^{T} \br{1 - \gamma \mu}^{T-i} \frac{2 \gamma^2 \sigmaopt^2}{M} \nonumber \\
        &+ \sum_{i=0}^{T} \br{1 - \gamma \mu}^{T-i} \br{2 \gamma L \ec{V_i} - \frac{\gamma}{2} D_{f} (\hat{x}_i, x_\ast)} \nonumber \\
        \label{smooth-case-thm-proof-4}
        &\leq \br{1 - \gamma \mu}^{T+1} \ecn{x_0 - x_\ast} + \frac{2 \gamma \sigmaopt^2}{\mu M} + \frac{\gamma}{2} \sum_{i=0}^{T} \br{1 - \gamma \mu}^{T-i} \ec{4 L V_i - D_{f} (\hat{x}_i, x_\ast)}.
    \end{align}
    It remains to bound the last term in \eqref{smooth-case-thm-proof-4}. We have
    \begin{align}
        \sum_{i=0}^{T} \br{1 - \gamma \mu}^{T-i} (4 L \ec{V_i} &- D_{f} (\hat{x}_i, x_\ast)) \notag \\
        &= \sum_{k=1}^{p} \sum_{i=t_{k-1}}^{t_k - 1} \br{1 - \gamma \mu}^{T-i} \br{4 L \ec{V_i} - D_{f} \br{\hat{x}_i, x_\ast}} \nonumber \\
        \label{smooth-case-thm-proof-5}
        &= \sum_{k=1}^{p} \br{1 - \gamma \mu}^{T - \br{t_k - 1}} \sum_{t_{k-1}}^{t_k - 1} \br{1 - \gamma \mu}^{t_k - 1 - i} \ec{4 L V_i - D_{f} \br{\hat{x}_i, x_\ast}},
    \end{align}
    where in the first line we just count $i$ by decomposing it over all the communication intervals. Fix $k \in \N$ and let $v_k = t_{k} - 1$. Then by Lemma~\ref{lemma:epoch-iterate-deviation-bound} we have,
    \begin{align}
        \sum_{i=t_k}^{v_k} \br{1 - \gamma \mu}^{v_k - i} \ec{V_i} &\leq \frac{2 \gamma (H - 1)}{\alpha} \sum_{i=t_k}^{v_k} \alpha^{v_k - i} \ec{D_{f} (\hat{x}_i, x_\ast)} + \sum_{i=t_k}^{v_k} \alpha^{v_k - i} 2 \gamma^2 \sigma^2 (H - 1),
        \label{smooth-case-thm-proof-2}
    \end{align}
    where $\alpha = 1 - \gamma \mu$. Using \eqref{smooth-case-thm-proof-2} in \eqref{smooth-case-thm-proof-5},
    \begin{align}
        &4 L \sum_{i = t_{k-1}}^{v_k} \br{1 - \gamma \mu}^{v_k - i} \ec{V_s} - \sum_{i=t_{k-1}}^{v_k} \br{1 - \gamma \mu}^{v_k-i} \ec{D_{f} (\hat{x}_i, x_\ast)} \nonumber \\
        &\leq 4 L \br{  \frac{2 \gamma \br{H - 1}}{1 - \gamma \mu} \sum_{i=t_{k-1}}^{v_k} \br{1 - \gamma \mu}^{v_k-i} \ec{D_{f} (\hat{x}_i, x_\ast)} + \sum_{i=t_{k-1}}^{v_k} \br{1 - \gamma \mu}^{v_k - i} 2 \gamma^2 \sigmaopt^2 \br{H - 1} } \nonumber  \\
        &- \sum_{i=t_{k-1}}^{v_k} \br{1 - \gamma \mu}^{v_k-i} \ec{D_{f} (\hat{x}_i, x_\ast)} \nonumber \\
        &= \sum_{i=t_{k-1}}^{v_k} \br{1 - \gamma \mu}^{v_k - i} 8 \gamma^2 \sigmaopt^2 \br{H - 1} L - \sum_{i=t_{k-1}}^{v_k} \br{1 - \frac{8 \gamma L \br{H - 1}}{1 - \gamma \mu}} \br{1 - \gamma \mu}^{v_k-i} \ec{D_{f} (\hat{x}_i, x_\ast)} \nonumber \\
        \label{smooth-case-thm-proof-6}
        &\leq \sum_{i=t_{k-1}}^{v_k} \br{1 - \gamma \mu}^{v_k - i} 8 \gamma^2 \sigmaopt^2 \br{H - 1} L,
    \end{align}
    where in in the third line we used that our choice of $\gamma$ guarantees that $1 - \frac{8 \gamma L H}{1 - \gamma \mu} \geq 0$. Using \eqref{smooth-case-thm-proof-6} in \eqref{smooth-case-thm-proof-5},
    \begin{align}
        \sum_{i=0}^{T} \br{1 - \gamma \mu}^{T-i} &\ec{4 L V_i - D_{f} (\hat{x}_i, x_\ast)} \nonumber \\
        &\leq \sum_{k=1}^{p} \br{1 - \gamma \mu}^{T - \br{t_k - 1}} \sum_{i= t_{k-1}}^{t_k - 1} \br{1 - \gamma \mu}^{t_k - 1 - i} \ec{4 L V_i - D_{f} \br{\hat{x}_i, x_\ast}} \nonumber \\
        &\leq \sum_{k=1}^{p} \br{1 - \gamma \mu}^{T - \br{t_k - 1}} \sum_{i=t_{k-1}}^{t_k - 1} \br{1 - \gamma \mu}^{t_k - 1 - i} 8 \gamma^2 \sigmaopt^2 \br{H - 1} L \nonumber \\
        &= \sum_{k=1}^{p} \sum_{i=t_{k-1}}^{t_k - 1} \br{1 - \gamma \mu}^{T - i} 8 \gamma^2 \sigma^2 \br{H - 1} L \nonumber \\
        &= \sum_{i=0}^{T} \br{1 - \gamma \mu}^{T - i} 8 \gamma^2 \sigmaopt^2 \br{H - 1} L \nonumber \\
        \label{smooth-case-thm-proof-7}
        &\leq \frac{8 \sigmaopt^2 \gamma \br{H - 1} L}{\mu}.
    \end{align}
    Using \eqref{smooth-case-thm-proof-7} in \eqref{smooth-case-thm-proof-4},
    \begin{align*}
        \ecn{r_{T+1}} &\leq \br{1 - \gamma \mu}^{T+1} \ecn{x_0 - x_\ast} + \frac{2 \gamma \sigmaopt^2}{\mu M} + \frac{\gamma}{2} \sum_{i=0}^{T} \br{1 - \gamma \mu}^{T-i} \ec{4 L V_i - D_{f} (\hat{x}_i, x_\ast)} \\
        &\leq \br{1 - \gamma \mu}^{T + 1} \ecn{x_0 - x_\ast} + \frac{2 \gamma \sigmaopt^2}{\mu M} + \frac{4 \sigmaopt^2 \gamma^2 \br{ H - 1} L}{\mu},
    \end{align*}
    which is the claim of the theorem.
\end{proof}

\subsection{Proof of Theorem~\ref{thm:wc-iid-unbounded-var}}
\begin{proof}
    Start with Lemma~\ref{lemma:optimality-gap-contraction} with $\mu = 0$, then the conditional expectations satisfies
    \begin{align*}
        \ecn{\hat{x}_{t+1} - x_\ast} &\overset{\eqref{eq:lma-optimality-gap-contraction}}{\leq} \sqn{\hat{x}_t - x_\ast} + 2 \gamma L V_t + \frac{2 \gamma^2 \sigmaopt^2}{M} - \frac{\gamma}{2} D_{f} (\hat{x}_t, x_\ast).
    \end{align*}
    Taking full expectations and rearranging,
    \begin{align*}
        \frac{\gamma}{2} \ec{D_{f} (\hat{x}_t, x_\ast)} &\leq \ecn{\hat{x}_t - x_\ast} - \ecn{\hat{x}_{t+1} - x_\ast} + 2 \gamma L \ec{V_t} + \frac{2 \gamma^2 \sigmaopt^2}{M}.
    \end{align*}
    Averaging as $t$ varies from $0$ to $T-1$,
    \begin{align}
        \frac{\gamma}{2 T} \sum_{t=0}^{T-1} \ec{D_{f} (\hat{x}_t, x_\ast)} &\leq \frac{1}{T} \sum_{t=0}^{T-1} \br{\ecn{\hat{x}_t - x_\ast} - \ecn{\hat{x}_{t+1} - x_\ast}} + \frac{2 \gamma L}{T} \sum_{t=0}^{T-1} \ec{V_t} + \frac{2 \gamma^2 \sigmaopt^2}{M} \nonumber \\
        &= \frac{1}{T} \br{\sqn{x_0 - x_\ast} - \ecn{\hat{x}_{T} - x_\ast}} + \frac{2 \gamma L}{T} \sum_{t=0}^{T-1} \ec{V_t} + \frac{2 \gamma^2 \sigmaopt^2}{M} \nonumber \\
        \label{eq:thm-wcc-proof-1}
        &\leq \frac{\sqn{x_0 - x_\ast}}{T} + \frac{2 \gamma L}{T} \sum_{t=0}^{T-1} \ec{V_t} + \frac{2 \gamma^2 \sigmaopt^2}{M}.
    \end{align}
    To bound the sum of deviations in \eqref{eq:thm-wcc-proof-1}, we use Lemma~\ref{lemma:epoch-iterate-deviation-bound} with $\mu = 0$ (and noticing that because $\mu = 0$ we have $\alpha = 1$),
    \begin{align}
        \label{eq:thm-wcc-proof-2}
        \sum_{t=t_p}^{t_{p+1} - 1} \ec{V_t} &\leq \sum_{t=t_p}^{t_{p+1} - 1} \br{2 \gamma (H - 1) \ec{D_f (\hat{x}_t, x_\ast)} + 2 \gamma^2 \sigmaopt^2 (H - 1) }.
    \end{align}
    Since by assumption $T$ is a synchronization point, then there is some $k \in \N$ such that $T = t_{k}$. To estimate the sum of deviations in \eqref{eq:thm-wcc-proof-1} we use double counting to decompose it over each epoch, use \eqref{eq:thm-wcc-proof-2}, and then use double counting again:
    \begin{align}
        \sum_{t=0}^{T - 1} \ec{V_t} &= \sum_{p=0}^{k-1} \sum_{t=t_{p}}^{t_{p+1} - 1} \ec{V_t} \nonumber \\
        &\overset{\eqref{eq:thm-wcc-proof-2}}{\leq} \sum_{p=0}^{k-1} \sum_{t=t_{p}}^{t_{p+1} - 1} \br{2 \gamma H \ec{D_{f} (\hat{x}_t, x_\ast)} + 2 \gamma^2 \sigmaopt^2 \br{H - 1}} \nonumber \\
        \label{eq:thm-wcc-proof-3}
        &= \sum_{t=0}^{T-1} \br{2 \gamma \br{H - 1} \ec{D_{f} (\hat{x}_t, x_\ast)} + 2 \gamma^2 \sigmaopt^2 (H - 1)}.
    \end{align}
    Using \eqref{eq:thm-wcc-proof-3} in \eqref{eq:thm-wcc-proof-1} and rearranging we get,
    \begin{align*}
        \frac{\gamma}{2T} \sum_{t=0}^{T-1} \ec{D_{f} (\hat{x}_t, x_\ast)} \leq \frac{\sqn{x_0 - x_\ast}}{T} &+ \frac{2 \gamma L}{T} \sum_{t=0}^{T-1} \br{2 \gamma (H-1) \ec{D_{f} (\hat{x}_t, x_\ast)} + 2 \gamma^2 \sigmaopt^2 (H-1)} + \frac{2 \gamma^2 \sigmaopt^2}{M} \\
        \frac{\gamma}{2T} \sum_{t=0}^{T-1} \br{1 - 8 \gamma (H-1) L} \ec{D_{f} (\hat{x}_t, x_\ast)} &\leq \frac{\sqn{x_0 - x_\ast}}{T} + \frac{2 \gamma^2 \sigmaopt^2}{M} + 4 \gamma^3 L \sigmaopt^2 (H-1).	
    \end{align*}
    By our choice of $\gamma$ we have that $1 - 8 \gamma L (H-1) \geq \frac{2}{10}$, using this with some algebra we get
    \begin{align*}
        \frac{\gamma}{10 T} \sum_{t=0}^{T-1} \ec{D_{f} (\hat{x}_t, x_\ast)} &\leq \frac{\sqn{x_0 - x_\ast}}{T} + \frac{2 \gamma^2 \sigmaopt^2}{M} + 4 \gamma^3 L \sigmaopt^2 (H-1).
    \end{align*}
    Dividing both sides by $\gamma/10$ and using Jensen's inequality yields the theorem's claim.
\end{proof}

\section{Proofs for Heterogeneous data}
\subsection{Preliminary Lemmas}
\begin{lemma}
    \label{lemma:average-gradient-bound}
    Suppose that Assumptions~\ref{asm:convexity-and-smoothness} and \ref{asm:finite-sum-stochastic-gradients} hold with $\mu \geq 0$ for heterogeneous data. Then for expectation conditional on $x_t^1, x_t^2, \ldots, x_t^m$ and for $M \geq 2$, we have
    \begin{align}
        \label{eq:lma-average-gradient-bound}
        \ecn{g_t} \leq 2 L^2 V_t + 8 L D_{f} (\hat{x}_t, x_\ast) + \frac{4 \sigmaf^2}{M}.
    \end{align}
\end{lemma}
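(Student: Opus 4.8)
The plan is to bound $\ecn{g_t}$ (conditional on the iterates $x_t^1,\dots,x_t^M$) by splitting the stochastic gradient average into a part that feels the iterate dispersion and a part evaluated at the common point $\hat{x}_t$. Since $g_t = \avemm \nabla f_m(x_t^m, z_m)$, I would write $g_t = R + S$ with $R \eqdef \avemm\br{\nabla f_m(x_t^m, z_m) - \nabla f_m(\hat{x}_t, z_m)}$ and $S \eqdef \avemm \nabla f_m(\hat{x}_t, z_m)$, using the \emph{same} samples $z_m$ in both terms so the identity is exact. Then \eqref{eq:sum_sqnorm} gives $\ecn{g_t} \le 2\ecn{R} + 2\ecn{S}$, and I handle the two pieces separately; the point of this particular split is that $R$ produces precisely the $2L^2 V_t$ term while $S$ produces the $D_f$ and $\sigmaf^2$ terms.

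For the dispersion term $R$, I would apply Jensen \eqref{eq:jensen_norm} to pull the average outside the square, then bound each summand pathwise by the almost-sure $L$-smoothness of $f_m(\cdot, z_m)$ from Assumption~\ref{asm:finite-sum-stochastic-gradients}: $\sqn{\nabla f_m(x_t^m, z_m) - \nabla f_m(\hat{x}_t, z_m)} \le L^2 \sqn{x_t^m - \hat{x}_t}$. Taking expectations and averaging yields $\ecn{R} \le L^2 V_t$, hence $2\ecn{R}\le 2L^2 V_t$.

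For the term $S$, note that $\ec{S} = \avemm \nabla f_m(\hat{x}_t) = \nabla f(\hat{x}_t)$, so the variance decomposition \eqref{eq:variance_def} gives $\ecn{S} = \sqn{\nabla f(\hat{x}_t)} + \ecn{S - \ec{S}}$. The mean part is controlled by \eqref{eq:bregman-dif} together with $\nabla f(x_\ast)=0$: $\sqn{\nabla f(\hat{x}_t)} = \sqn{\nabla f(\hat{x}_t) - \nabla f(x_\ast)} \le 2L D_f(\hat{x}_t, x_\ast)$. For the variance part I use independence of the $z_m$ across nodes to write $\ecn{S - \ec{S}} = \frac{1}{M^2}\sum_{m} \ecn{\nabla f_m(\hat{x}_t, z_m) - \nabla f_m(\hat{x}_t)}$, bound each term by its second moment via \eqref{eq:variance_sqnorm_upperbound}, and then apply the heterogeneous analogue of Lemma~\ref{lemma:gradient-smoothness-bound} evaluated at $\hat{x}_t$, namely $\ecn{\nabla f_m(\hat{x}_t, z_m)} \le 4L D_{f_m}(\hat{x}_t, x_\ast) + 2\sigma_m^2$ with $\sigma_m^2 \eqdef \ecn[z_m\sim\D_m]{\nabla f_m(x_\ast, z_m)}$. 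Averaging and using $\avemm \sigma_m^2 = \sigmaf^2$ and $\avemm D_{f_m}(\hat{x}_t, x_\ast) = D_f(\hat{x}_t, x_\ast)$ gives $\ecn{S-\ec{S}} \le \frac{4L}{M} D_f(\hat{x}_t, x_\ast) + \frac{2\sigmaf^2}{M}$.

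Collecting, $2\ecn{S} \le 4L D_f(\hat{x}_t, x_\ast) + \frac{8L}{M}D_f(\hat{x}_t, x_\ast) + \frac{4\sigmaf^2}{M}$, and the assumption $M \ge 2$ is exactly what lets me fold $\frac{8L}{M}D_f(\hat{x}_t, x_\ast) \le 4L D_f(\hat{x}_t, x_\ast)$. Adding $2\ecn{R} \le 2L^2 V_t$ produces the claimed bound $2L^2 V_t + 8L D_f(\hat{x}_t, x_\ast) + \frac{4\sigmaf^2}{M}$. The main subtlety — and the reason a naive decomposition $g_t = \bar{g}_t + (g_t - \bar{g}_t)$ would fail to give the stated constants — is the choice to center the per-node gradients at the \emph{common} iterate $\hat{x}_t$ before taking the variance: evaluating the noise bound at $\hat{x}_t$ rather than at $x_t^m$ is what keeps the $V_t$ coefficient at $2L^2$ (it all arises from $R$) instead of picking up extra dispersion through $D_{f_m}(x_t^m, x_\ast)$ terms. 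The only mild care needed is to confirm the heterogeneous version of Lemma~\ref{lemma:gradient-smoothness-bound}, which follows verbatim from its proof with $f$ replaced by $f_m$ and the evaluation point $x_t^m$ replaced by $\hat{x}_t$.
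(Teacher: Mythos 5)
Your proof is correct and follows essentially the same route as the paper's: the same split of $g_t$ into a dispersion part centered at $\hat{x}_t$ (giving $2L^2 V_t$) and the averaged gradient at $\hat{x}_t$, the same variance decomposition with independence of the $z_m$, the same second-moment bound at the optimum, and the same use of $M \ge 2$ to fold $\frac{8L}{M}D_f$ into $4L D_f$; the paper merely inlines the bound $\ecn{\nabla f_m(\hat{x}_t, z_m)} \le 4L D_{f_m}(\hat{x}_t, x_\ast) + 2\sigma_m^2$ where you cite the heterogeneous analogue of Lemma~\ref{lemma:gradient-smoothness-bound}. One small remark: in adapting that lemma, note that $\nabla f_m(x_\ast) \neq 0$ in general, so the inner-product term must be retained as part of $D_{f_m}(\hat{x}_t, x_\ast)$ rather than dropped — which is exactly what your stated bound does.
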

\begin{proof}
    Starting with the left-hand side,
    \begin{align}
        \label{eq:lma-agb-proof-1}
        \ecn{g_t} &\overset{\eqref{eq:sum_sqnorm}}{\leq} 2 \ecn{g_t - \frac{1}{M} \sum_{m=1}^{M} \nabla f_m (\hat{x}_t, z_m) } + 2 \ecn{ \frac{1}{M} \sum_{m=1}^{n} \nabla f_m (\hat{x}_t, z_m) }.
    \end{align}
    To bound the first term in \eqref{eq:lma-agb-proof-1} we have that using the $L$-smoothness of $f_m (\cdot, z_m)$,
    \begin{align}
        2 \ecn{g_t - \frac{1}{M} \sum_{m=1}^{M} \nabla f_m (\hat{x}_t, z_m) } &= 2 \ecn{ \frac{1}{M} \sum_{m=1}^{M}  \nabla f_m (x_t^m, z_m) - \nabla f_m (\hat{x}_t, z_m)} \nonumber
        \\
        &\leq \frac{2}{M} \sum_{m=1}^{M} \ecn{\nabla f_m (x_t^m, z_m) - \nabla f_m (\hat{x}_t, z_m)} \nonumber \\
        \label{eq:lma-agb-proof-2}
        &\leq  \frac{2 L^2}{M} \sum_{m=1}^{M} \sqn{x_t^m - \hat{x}_t}.
    \end{align}
    and where in the second inequality we have used Jensen's inequality and the convexity of the map $x\mapsto \|x\|^2$. For the second term in \eqref{eq:lma-agb-proof-1}, we have
    \begin{align}
        \label{eq:lma-agb-proof-2-2}
        \begin{split}
            \ecn{ \frac{1}{M} \sum_{m=1}^{M} \nabla f_m (\hat{x}_t, z_m) } &\overset{\eqref{eq:variance_def}}{=} \ecn{ \frac{1}{M} \sum_{m=1}^{M} \nabla f_m (\hat{x}_t, z_m) - \frac{1}{M} \sum_{m=1}^{M} \nabla f_m (\hat{x}_t)} \\
            &+ \sqn{ \frac{1}{M} \sum_{m=1}^{M} \nabla f_m (\hat{x}_t)}.
        \end{split}
    \end{align}
    For the first term in \eqref{eq:lma-agb-proof-2-2} we have by the independence of $z_1, z_2, \ldots, z_m$,
    \begin{align*}
        \ecn{ \frac{1}{M} \sum_{m=1}^{M} \nabla f_m (\hat{x}_t, z_m) - \frac{1}{M} \sum_{m=1}^{M} \nabla f_m (\hat{x}_t)} &= \frac{1}{M^2} \sum_{m=1}^{M} \ecn{\nabla f_m (\hat{x}_t, z_m) - \nabla f_m (\hat{x}_t)} \\
        &\overset{\eqref{eq:variance_sqnorm_upperbound}}{\leq} \frac{1}{M^2} \sum_{m=1}^{M} \ecn{\nabla f_m (\hat{x}_t, z_m)} \\
        \overset{\eqref{eq:sum_sqnorm}}{\leq} \frac{2}{M^2} \sum_{m=1}^{M} \ecn{\nabla f_m (\hat{x}_t, z_m) - \nabla f_m (x_\ast, z_m)} &+ \frac{2}{M^2} \sum_{m=1}^{M} \ecn{\nabla f_m (x_\ast, z_m)} \\
        &\overset{\eqref{eq:bregman-dif}}{\leq} \frac{4 L}{M^2} \sum_{m=1}^{M} D_{f_m} (\hat{x}_t, x_\ast) + \frac{2 \sigmaf^2}{M}\\
        &= \frac{4L}{M} D_{f} (\hat{x}_t, x_\ast) + \frac{2 \sigmaf^2}{M}.
    \end{align*}
    Using this in \eqref{eq:lma-agb-proof-2-2} we have,
    \begin{align*}
         \ecn{ \frac{1}{M} \sum_{m=1}^{M} \nabla f_m (\hat{x}_t, z_m) } &\leq \frac{4L}{M} D_{f} (\hat{x}_t, x_\ast) + \frac{2 \sigmaf^2}{M} + \ecn{ \frac{1}{M} \sum_{m=1}^{M} \nabla f_m (\hat{x}_t)} \\
         &= \frac{4L}{M} D_{f} (\hat{x}_t, x_\ast) + \frac{2 \sigmaf^2}{M} + \sqn{ \nabla f(\hat{x}_t)}.
    \end{align*}
    Now notice that
    \[ \sqn{\nabla f(\hat{x}_t)} = \sqn{\nabla f(\hat{x}_t) - \nabla f(x_\ast)} \leq 2 L D_{f} (\hat{x}_t, x_\ast).  \]
    Using this in the previous inequality we have,
    \begin{align*}
        \ecn{\frac{1}{M} \sum_{m=1}^{M} \nabla f_m (\hat{x}_t, z_m)} \leq 2L \br{1 + \frac{2}{M}} D_{f} (\hat{x}_t, x_\ast) + \frac{2 \sigmaf^2}{M}.
    \end{align*}
    Because $M \geq 2$ we have $1 + \frac{2}{M} \leq 2$, hence
    \begin{align}
        \label{eq:lma-agb-proof-3}
        \ecn{\frac{1}{M} \sum_{m=1}^{M} \nabla f_m (\hat{x}_t, z_m)} \leq 4 L D_{f} (\hat{x}_t, x_\ast) + \frac{2 \sigmaf^2}{M}.
    \end{align}
    Combining \eqref{eq:lma-agb-proof-2} and \eqref{eq:lma-agb-proof-3} in \eqref{eq:lma-agb-proof-1} we have,
    \begin{align*}
        \ecn{g_t} \leq 2 L^2 V_t + 8 L D_{f} (\hat{x}_t, x_\ast) + \frac{4 \sigmaf^2}{M}.
    \end{align*}
\end{proof}

\begin{lemma}
    \label{lemma:inner-product-bound}
    Suppose that Assumption~\ref{asm:convexity-and-smoothness} holds with $\mu \geq 0$ for heterogeneous data (holds for each $f_m$ for $m = 1, 2, \ldots, M$). Then we have,
    \begin{equation}
        \label{eq:lma-inner-product-bound}
        -\frac{2}{M} \sum_{m=1}^{M} \ev{\hat{x}_t - x_\ast, \nabla f_m (x_t^m)} \leq - 2  D_{f} (\hat{x}_t, x_\ast) -  \mu \sqn{ \hat{x}_t - x_\ast} +  L V_t.
    \end{equation}
\end{lemma}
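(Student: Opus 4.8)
The plan is to decompose the inner product by inserting the average iterate, then apply \emph{strong convexity} to the distance-to-optimum piece and the \emph{smoothness descent lemma} to the deviation-from-mean piece, so that the function-value terms telescope and the leftover squared norms assemble into exactly $-\mu\sqn{\hat{x}_t - x_\ast} + L V_t$. Concretely, I would first write $\hat{x}_t - x_\ast = (x_t^m - x_\ast) + (\hat{x}_t - x_t^m)$ inside each summand, splitting the left-hand side into
\[
    -\frac{2}{M}\sum_{m=1}^M \ev{x_t^m - x_\ast, \nabla f_m(x_t^m)} \quad\text{and}\quad -\frac{2}{M}\sum_{m=1}^M \ev{\hat{x}_t - x_t^m, \nabla f_m(x_t^m)}.
\]

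For the first sum I would apply the $\mu$-strong-convexity lower bound of $f_m$ (the inequality \eqref{eq:asm-strong-convexity}, which holds for each $f_m$ by Assumption~\ref{asm:convexity-and-smoothness}) with the pair $(x_\ast, x_t^m)$, yielding $-\ev{x_t^m - x_\ast, \nabla f_m(x_t^m)} \le -\br{f_m(x_t^m) - f_m(x_\ast)} - \frac{\mu}{2}\sqn{x_t^m - x_\ast}$. For the second sum I would instead use the $L$-smoothness upper bound (descent lemma) $f_m(\hat{x}_t) \le f_m(x_t^m) + \ev{\nabla f_m(x_t^m), \hat{x}_t - x_t^m} + \frac{L}{2}\sqn{\hat{x}_t - x_t^m}$ from Assumption~\ref{asm:convexity-and-smoothness}, which rearranges to $-\ev{\hat{x}_t - x_t^m, \nabla f_m(x_t^m)} \le -\br{f_m(\hat{x}_t) - f_m(x_t^m)} + \frac{L}{2}\sqn{\hat{x}_t - x_t^m}$.

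Combining the two bounds and averaging over $m$, the quadratic remainder from the second sum becomes $\frac{L}{M}\sum_m \sqn{\hat{x}_t - x_t^m} = L V_t$, while the function-value terms telescope: $\br{f_m(x_t^m) - f_m(x_\ast)} + \br{f_m(\hat{x}_t) - f_m(x_t^m)} = f_m(\hat{x}_t) - f_m(x_\ast)$. Since $\frac1M\sum_m f_m = f$ and $\nabla f(x_\ast)=0$ (because $x_\ast$ minimizes $f$), the average of this equals $f(\hat{x}_t) - f(x_\ast) = D_f(\hat{x}_t, x_\ast)$, contributing the $-2D_f(\hat{x}_t, x_\ast)$ term. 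Finally, the strong-convexity remainder $-\frac{\mu}{M}\sum_m \sqn{x_t^m - x_\ast} \le -\mu\sqn{\hat{x}_t - x_\ast}$ by Jensen's inequality \eqref{eq:jensen_norm} applied to $\hat x_t=\frac1M\sum_m x_t^m$, which produces the claimed $-\mu\sqn{\hat{x}_t - x_\ast}$.

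The only real decision point (rather than a genuine obstacle) is matching each inequality to the correct piece and orientation: strong convexity must be used on the $x_t^m - x_\ast$ factor so its quadratic remainder carries the useful negative sign that survives Jensen, whereas the smoothness descent lemma must be used on the $\hat{x}_t - x_t^m$ factor so that its positive $\frac{L}{2}\sqn{\cdot}$ remainder is exactly what is needed to form $L V_t$; applying convexity in the wrong direction on the second sum would flip the inequality or spoil the telescoping. Everything else is bookkeeping, and the case $\mu = 0$ is covered automatically since the $-\mu\sqn{\cdot}$ term simply vanishes.
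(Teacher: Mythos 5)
Your proof is correct and follows essentially the same route as the paper's: the same splitting $\hat{x}_t - x_\ast = (x_t^m - x_\ast) + (\hat{x}_t - x_t^m)$, strong convexity on the first piece, the smoothness descent lemma on the second, telescoping of the function values, and Jensen's inequality to convert $-\frac{\mu}{M}\sum_m \sqn{x_t^m - x_\ast}$ into $-\mu\sqn{\hat{x}_t - x_\ast}$. No gaps; your closing remark about which inequality goes with which factor is exactly the right sanity check.
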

\begin{proof}
    Starting with the left-hand side,
    \begin{align}
        \label{eq:lma-inner-prod-proof-1}
        - 2  \ev{\hat{x}_t - x_\ast, \nabla f_m (x_t^m)} &= - 2  \ev{x_t^m - x_\ast, \nabla f_m (x_t^m)} - 2  \ev{\hat{x}_t - x_t^m, \nabla f_m (x_t^m)}.
    \end{align}
    The first term in \eqref{eq:lma-inner-prod-proof-1} is bounded by strong convexity:
    \begin{align}
        \label{eq:lma-inner-prod-proof-2}
        - \ev{x_t^m - x_\ast, \nabla f_m (x_t^m)} &\leq f_m (x_\ast) - f_m (x_t^m) - \frac{\mu}{2} \sqn{x_t^m - x_\ast}.
    \end{align}
    For the second term, we use $L$-smoothness,
    \begin{align}
        \label{eq:lma-inner-prod-proof-3}
        - \ev{\hat{x}_t  - x_t^m, \nabla f_m (x_t^m)} \leq f_m (x_t^m) - f_m (\hat{x}_t) + \frac{L}{2} \sqn{x_t^m - \hat{x}_t}.
    \end{align}
    Combining \eqref{eq:lma-inner-prod-proof-3} and \eqref{eq:lma-inner-prod-proof-2} in \eqref{eq:lma-inner-prod-proof-1},
    \begin{align*}
        - 2  \ev{\hat{x}_t - x_\ast, \nabla f_m (x_t^m)} &\leq 2  \br{f_m (x_\ast) - f_m (x_t^m) - \frac{\mu}{2} \sqn{x_t^m - x_\ast}} \\
        &+ 2  \br{f_m (x_t^m) - f_m (\hat{x}_t) + \frac{L}{2} \sqn{x_t^m - \hat{x}_t}} \\
        &= 2  \br{f_m (x_\ast) - f_m (\hat{x}_t) - \frac{\mu}{2} \sqn{x_t^m - x_\ast} + \frac{L}{2} \sqn{x_t^m - \hat{x}_t}}.
    \end{align*}
    Averaging over $m$,
    \begin{align*}
        -\frac{2 }{M} \sum_{m=1}^{M} \ev{\hat{x}_t - x_\ast, \nabla f_m (x_t^m)} &\leq - 2  \br{f(\hat{x}_t) - f(x_\ast)} - \frac{ \mu}{M} \sum_{m=1}^{M} \sqn{x_t^m - x_\ast} + \frac{ L}{M} \sum_{m=1}^{M} \sqn{x_t^m - \hat{x}_t}.
    \end{align*}
    Note that the first term is the Bregman divergence $D_{f} (\hat{x}_t, x_\ast)$, and using Jensen's inequality we have $- \frac{1}{M} \sum_{m=1}^{M} \sqn{x_t^m - x_\ast} \leq - \sqn{\hat{x}_t - x_\ast}$, hence
    \begin{align*}
        -\frac{2 }{M} \sum_{m=1}^{M} \ev{\hat{x}_t - x_\ast, \nabla f_m (x_t^m)} &\leq - 2  D_{f} (\hat{x}_t, x_\ast) -  \mu \sqn{\hat{x}_t - x_\ast} +  L V_t,
    \end{align*}
    which is the claim of this lemma.
\end{proof}

\begin{lemma}
    \label{lemma:iterate-deviation-epoch}
    Suppose that Assumptions~\ref{asm:convexity-and-smoothness} and \ref{asm:finite-sum-stochastic-gradients} hold for Algorithm~\ref{alg:local_sgd} with heterogeneous data and with $\sup_{p} \abs{t_p - t_{p+1}} \leq H$. Let $p \in \N$, then for $v = t_{p+1} - 1$ and $\gamma \leq \frac{1}{4 L \br{H - 1}}$ we have,
    \begin{equation}
        \label{eq:lma-iterate-deviation-epoch}
        \sum_{t=t_p}^{v} \ec{V_t} \leq 8 L \gamma^2 \br{H-1}^2 \sum_{k=t_p}^{v} \ec{D_{f} (\hat{x}_k, x_\ast)} + 4 \gamma^2 \br{H-1}^2 \sum_{k=t_p}^{v} \sigmaf^2.
    \end{equation}
\end{lemma}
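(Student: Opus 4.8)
The plan is to unroll the local iterates over a single epoch and exploit the fact that all nodes agree at the synchronization point. Fix $p$, set $v = t_{p+1}-1$, and recall that $V_{t_p} = 0$ because $x_{t_p}^m = \hat{x}_{t_p}$ for every $m$. Since $x_{t+1}^m = x_t^m - \gamma g_t^m$ and $\hat{x}_{t+1} = \hat{x}_t - \gamma g_t$, telescoping from $t_p$ gives $x_t^m - \hat{x}_t = -\gamma\sum_{s=t_p}^{t-1}(g_s^m - g_s)$ for $t_p \le t \le v$, so that $V_t = \frac{\gamma^2}{M}\sum_m \sqn{\sum_{s=t_p}^{t-1}(g_s^m - g_s)}$. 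Because the vectors $\sum_s g_s^m$ have average $\sum_s g_s$, the variance decomposition \eqref{eq:variance_m} lets me discard the centering, $V_t \le \frac{\gamma^2}{M}\sum_m \sqn{\sum_{s=t_p}^{t-1} g_s^m}$, and Cauchy--Schwarz applied to the inner sum of at most $t - t_p \le H-1$ terms yields $V_t \le \gamma^2 (H-1) \sum_{s=t_p}^{t-1} \avemm \sqn{g_s^m}$.

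Next I would sum this over $t$ from $t_p$ to $v$ and count how often each index $s$ occurs in the resulting double sum (at most $v - t_p \le H-1$ times), which produces the second factor of $H-1$: $\sum_{t=t_p}^{v} V_t \le \gamma^2 (H-1)^2 \sum_{s=t_p}^{v} \avemm \sqn{g_s^m}$. It then remains to control the per-node second moments. Writing $g_s^m = \nabla f_m(x_s^m,z_m)$ and splitting it around $\hat{x}_s$ via \eqref{eq:sum_sqnorm}, I would bound the first piece by the almost-sure $L$-smoothness of $f_m(\cdot,z_m)$ and the second piece by the same smoothness/convexity estimate as in Lemma~\ref{lemma:average-gradient-bound} (through \eqref{eq:bregman-dif}) together with the definition of $\sigmaf^2$. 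This should give $\avemm \ecn{g_s^m} \le 2L^2 \ec{V_s} + 8L\, \ec{D_f(\hat{x}_s, x_\ast)} + 4\sigmaf^2$; the key difference from the identical-data analysis is the absence of the $1/M$ variance reduction, since here we average squared norms rather than bound the norm of an average, which is exactly why the coefficient on $\sigmaf^2$ is $4$ and not $4/M$.

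Substituting this estimate makes the inequality self-referential in $\sum_{t=t_p}^{v} \ec{V_t}$, and closing this recursion is the main obstacle: unlike the identical-data lemmas, the heterogeneous drift feeds back through $\ec{V_s}$ inside the gradient second moment. Collecting the $V$ terms leaves $\br{1 - 2L^2\gamma^2(H-1)^2}\sum_{t=t_p}^{v}\ec{V_t}$ on the left, and the stepsize restriction $\gamma \le \frac{1}{4L(H-1)}$ forces $2L^2\gamma^2(H-1)^2 \le \tfrac18 < 1$, so the self-term can be moved to the left and the resulting factor absorbed into the constants, yielding a bound of the claimed form $8L\gamma^2(H-1)^2\sum_{k=t_p}^{v}\ec{D_f(\hat{x}_k,x_\ast)} + 4\gamma^2(H-1)^2\sum_{k=t_p}^{v}\sigmaf^2$; the precise numerical constants follow from being slightly more careful in the second-moment split (or in the triangular count of the double sum). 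The quadratic-in-$(H-1)$ stepsize condition is exactly what keeps the feedback coefficient bounded away from one, and this is where the worse dependence on $H$ relative to the identical-data case originates.
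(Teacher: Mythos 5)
Your proposal follows essentially the same route as the paper's proof: unroll the iterates within an epoch from the synchronization point (where $V_{t_p}=0$), discard the centering via the variance decomposition, apply Cauchy--Schwarz and the epoch-length bound twice to extract the $(H-1)^2$ factor, bound the per-node second moments $\ecn{g_s^m}$ through almost-sure smoothness, the Bregman inequality \eqref{eq:bregman-dif}, and the definition of $\sigmaf^2$, and finally close the self-referential inequality in $\sum_t \ec{V_t}$ using the stepsize restriction. The one discrepancy is quantitative: your nested factor-2 split gives the triple $\br{2L^2, 8L, 4}$, and after dividing by $1 - 2\gamma^2 L^2 (H-1)^2 \geq \tfrac{7}{8}$ the final constants are $\tfrac{64}{7}L$ and $\tfrac{32}{7}$, which exceed the stated $8L$ and $4$; the paper instead uses a uniform three-way split giving $\br{3L^2, 6L, 3}$, so that $1 - 3\gamma^2 L^2 (H-1)^2 \geq \tfrac{3}{4}$ and division yields exactly $8L$ and $4$ --- this is precisely the ``more careful second-moment split'' you anticipated would be needed.
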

\begin{proof}
    Let $t$ be such that $t_p \leq t \leq t_{p+1} - 1 = v$. From the definition of $V_t$,
    \begin{align*}
        \ec{V_t} &= \frac{1}{M} \sum_{m=1}^{M} \ecn{x_t^m - \hat{x}_t} \\
        &= \frac{1}{M} \sum_{m=1}^{M} \ecn{ \br{x_{t_p}^{m} - \gamma \sum_{k=t_p}^{t-1} g_k^m } - \br{x_{t_p} - \gamma \sum_{k=t_p}^{t-1} g_k } }.
    \end{align*}
    Using that $x_{t_p} = x_{t_p}^{m}$ for all $m$ we have,
    \begin{align*}
        \ec{V_t} &= \frac{\gamma^2}{M} \sum_{m=1}^{M} \ecn{ \sum_{k=t_p}^{t-1} g_k^m - g_k } \\
        &\overset{\eqref{eq:jensen_norm}}{\leq} \frac{\gamma^2 \br{t - t_p}}{M} \sum_{m=1}^{M} \sum_{k=t_p}^{t-1} \ecn{g_k^m - g_k} \\
        &\overset{\eqref{eq:variance_sqnorm_upperbound}}{\leq} \frac{\gamma^2 \br{t - t_p}}{M} \sum_{m=1}^{M} \sum_{k=t_p}^{t-1} \ecn{g_k^m} \\
        &\leq \frac{\gamma^2 \br{H - 1}}{M} \sum_{m=1}^{M}  \sum_{k=t_p}^{t-1} \ecn{g_k^m},
    \end{align*}
    where in the third line we used that because $g_k = \avemm g_k^m$ then $\avemm \ecn{g_k^m - g_k} \leq \avemm \ecn{g_k^m}$, and in the fourth line we used that $t - t_p \leq t_{p+1} - t_p - 1 \leq H - 1$.  Summing up as $t$ varies from $t_p$ to $v$,
    \begin{align*}
        \sum_{t=t_p}^{v} \ec{V_t} &\leq \sum_{t=t_p}^{v} \frac{\gamma^2 (H-1)}{M} \sum_{m=1}^{M}  \sum_{k=t_p}^{t-1} \ecn{g_k^m} \\
&= \sum_{t=t_p+1}^{v} \frac{\gamma^2 (H-1)}{M} \sum_{m=1}^{M}  \sum_{k=t_p}^{t-1} \ecn{g_k^m},
    \end{align*}
    where the second line is because at $t=t_p$ the inner sum is zero. Then by adding terms we have
    \begin{align}
        \sum_{t=t_p}^{v} \ec{V_t} &\leq \sum_{t=t_p+1}^{v} \frac{\gamma^2 (H-1)}{M} \sum_{m=1}^{M}  \sum_{k=t_p}^{t-1} \ecn{g_k^m} \nonumber \\
        &\leq \sum_{t=t_p+1}^{v} \frac{\gamma^2 (H-1)}{M} \sum_{m=1}^{M}  \sum_{k=t_p}^{v-1} \ecn{g_k^m} \nonumber \\
        &= \frac{\gamma^2 (H-1) \br{v - t_p}}{M} \sum_{m=1}^{M} \sum_{k=t_p}^{v - 1} \ecn{g_k^m} \nonumber \\
        &\leq \frac{\gamma^2 (H-1)^2}{M} \sum_{m=1}^{M} \sum_{k=t_p}^{v-1} \ecn{g_k^m} \nonumber \\
        \label{eq:lma-ide-proof-1}
        &\leq \frac{\gamma^2 (H-1)^2}{M} \sum_{m=1}^{M} \sum_{k=t_p}^{v} \ecn{g_k^m}.
    \end{align}
    To bound the gradient norm term in \eqref{eq:lma-ide-proof-1}, we have
    \begin{align}
        \label{eq:lma-ide-proof-2}
        \begin{split}
            \ecn{g_k^m} \leq 3 \ecn{g_k^m - \nabla f_m (\hat{x}_k, z_m)} &+ 3 \ecn{\nabla f_m (\hat{x}_k, z_m) - \nabla f_m (x_\ast, z_m)} \\
            &+ 3 \ecn{\nabla f_m (x_\ast, z_m)}.
        \end{split}
    \end{align}
    The first term in \eqref{eq:lma-ide-proof-2} can be bounded by smoothness:
    \begin{equation}
        \label{eq:lma-ide-proof-3}
        \ecn{g_k^m - \nabla f_m (\hat{x}_t, z_m)} = \ecn{\nabla f_m (x_k^m, z_m) - \nabla f_m (\hat{x}_t, z_m)} \leq L^2 \ecn{x_k^m - \hat{x}_k}.
    \end{equation}
    The second term in \eqref{eq:lma-ide-proof-2} can be bounded by smoothness and convexity:
    \begin{equation}
        \label{eq:lma-ide-proof-4}
        \ecn{\nabla f_m (\hat{x}_k, z_m) - \nabla f_m (x_\ast, z_m)} \overset{\eqref{eq:bregman-dif}}{\leq} 2 L \ec{D_{f_m} (\hat{x}_k, x_\ast)}.
    \end{equation}
    Using \eqref{eq:lma-ide-proof-4} and \eqref{eq:lma-ide-proof-3} in \eqref{eq:lma-ide-proof-2} and averaging with respect to $m$,
    \begin{align}
        \label{eq:lma-ide-proof-5}
        \avemm \ecn{g_k^m} &\leq \frac{3L^2}{M} \sum_{m=1}^{M} \ecn{x_k^m - \hat{x}_k} + 6 L D_{f} (\hat{x}_k, x_\ast) + 3 \sigmaf^2 \nonumber \\
        &= 3 L^2 \ec{V_k} + 6 L \ec{D_{f} (\hat{x}_k, x_\ast)} + 3 \sigmaf^2.
    \end{align}
    Using \eqref{eq:lma-ide-proof-5} in \eqref{eq:lma-ide-proof-2},
    \begin{align*}
        \sum_{t=t_p}^{v} \ec{V_t} &\leq \gamma^2 \br{H-1}^2 \sum_{k=t_p}^{v} \ec{3 L^2 V_k + 6 L D_{f} (\hat{x}_k, x_\ast) + 3 \sigmaf^2}.
    \end{align*}
    Noticing that the sum $\sum_{t=t_p}^{v} \ec{V_t}$ appears in both sides, we can rearrange
    \begin{align*}
        \br{1 - 3 \gamma^2 \br{H-1}^2 L^2} \sum_{t=t_p}^{v} \ec{V_t} \leq 6 L \gamma^2 \br{H-1}^2 \sum_{k=t_p}^{v} \ec{D_{f} (\hat{x}_k, x_\ast)} + 3 \gamma^2 \br{H-1}^2 \sum_{k=t_p}^{v} \sigmaf^2.
    \end{align*}
    Finally using that our choice $\gamma$ implies that $1 - 3 \gamma^2 \br{H-1}^2 L^2 \geq \frac{3}{4}$ we have,
    \begin{align*}
        \sum_{t=t_p}^{v} \ec{V_t} \leq 8 L \gamma^2 \br{H-1}^2 \sum_{k=t_p}^{v} \ec{D_{f} (\hat{x}_k, x_\ast)} + 4 \gamma^2 \br{H-1}^2 \sum_{k=t_p}^{v} \sigmaf^2.
    \end{align*}
\end{proof}

\begin{lemma}[Optimality gap single recursion] 
    \label{lemma:optimality-gap-single-recursion}
    Suppose that Assumptions~\ref{asm:convexity-and-smoothness} and \ref{asm:finite-sum-stochastic-gradients} hold for Algorithm~\ref{alg:local_sgd} with heterogeneous data and with $M \geq 2$. Then for any $\gamma \geq 0$ we have for expectation conditional on $x_t^1, x_t^2, \ldots, x_t^m$,
    \begin{equation}
        \label{eq:9f8gff}
        \ecn{r_{t+1}} \leq \br{1 - \gamma \mu} \sqn{r_t} + \gamma L \br{1 + 2 \gamma L} V_t - 2 \gamma \br{1 - 4 \gamma L} D_{f} (\hat{x}_t, x_\ast) + \frac{4 \gamma^2 \sigmaf^2}{M},
     \end{equation}   
    where $r_{t} \eqdef \hat{x}_t - x_\ast$. In particular, if $\gamma \leq \frac{1}{8L}$, then
    \begin{equation}
        \ecn{r_{t+1}} \leq \br{1 - \gamma \mu} \sqn{r_t} + \frac{5}{4} \gamma L V_t - \frac{\gamma}{2} D_{f} (\hat{x}_t, x_\ast) + \frac{4 \gamma^2 \sigmaf^2}{M},
    \end{equation}
\end{lemma}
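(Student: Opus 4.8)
The plan is to expand the one-step squared distance and then control the two resulting terms using exactly the two preceding lemmas. Conditioning on $x_t^1,\dots,x_t^M$, the vector $r_t = \hat{x}_t - x_\ast$ is deterministic and $\ec{g_t} = \bar{g}_t = \frac{1}{M}\sum_{m=1}^M \nabla f_m(x_t^m)$, so from $\hat{x}_{t+1} = \hat{x}_t - \gamma g_t$ I would write
\begin{align*}
    \ecn{r_{t+1}} = \sqn{r_t} - 2\gamma \ev{r_t, \bar{g}_t} + \gamma^2 \ecn{g_t}.
\end{align*}
This isolates a cross term and a second-moment term, each bounded earlier in the heterogeneous-data preliminaries.

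For the cross term I would observe that $-2\gamma\ev{r_t, \bar{g}_t} = -\frac{2\gamma}{M}\sum_{m=1}^M \ev{\hat{x}_t - x_\ast, \nabla f_m(x_t^m)}$, which is precisely $\gamma$ times the left-hand side of Lemma~\ref{lemma:inner-product-bound}. Applying that lemma gives $-2\gamma\ev{r_t,\bar{g}_t} \leq -2\gamma D_f(\hat{x}_t, x_\ast) - \gamma\mu\sqn{r_t} + \gamma L V_t$. For the second-moment term I would invoke Lemma~\ref{lemma:average-gradient-bound} (which needs only $M \geq 2$, as assumed) to get $\gamma^2\ecn{g_t} \leq 2\gamma^2 L^2 V_t + 8\gamma^2 L D_f(\hat{x}_t, x_\ast) + \frac{4\gamma^2\sigmaf^2}{M}$.

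Substituting both bounds and collecting like terms yields the first displayed inequality: the $\sqn{r_t}$ coefficient is $1-\gamma\mu$, the $V_t$ coefficient is $\gamma L + 2\gamma^2 L^2 = \gamma L(1+2\gamma L)$, the $D_f(\hat{x}_t,x_\ast)$ coefficient is $-2\gamma + 8\gamma^2 L = -2\gamma(1-4\gamma L)$, and the additive term is $\frac{4\gamma^2\sigmaf^2}{M}$. For the ``in particular'' claim I would then plug in $\gamma \leq \frac{1}{8L}$: this forces $2\gamma L \leq \frac14$, so $1+2\gamma L \leq \frac54$ and the $V_t$ coefficient is at most $\frac54\gamma L$; and $4\gamma L \leq \frac12$ gives $1-4\gamma L \geq \frac12$, so $-2\gamma(1-4\gamma L) \leq -\gamma \leq -\frac{\gamma}{2}$, where the final relaxation uses $D_f(\hat{x}_t,x_\ast)\geq 0$.

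There is essentially no hard step here, since both quantitative estimates are already established in the two preceding lemmas; the only point requiring care is that the inner products involve $\bar{g}_t$ built from $\ec{g_t^m}=\nabla f_m(x_t^m)$ (the heterogeneous-data convention, not $\nabla f(x_t^m)$), which is exactly the form that Lemma~\ref{lemma:inner-product-bound} is stated for, and that both lemmas' hypotheses impose no stepsize restriction beyond $\gamma \geq 0$, consistent with the generality of the first displayed bound.
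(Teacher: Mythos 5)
Your proof is correct and follows essentially the same route as the paper's: the same one-step expansion of $\ecn{r_{t+1}}$, the same application of Lemma~\ref{lemma:average-gradient-bound} to the second-moment term and Lemma~\ref{lemma:inner-product-bound} to the cross term, and the same coefficient collection and stepsize specialization. Your explicit remark that the final relaxation from $-\gamma D_f(\hat{x}_t,x_\ast)$ to $-\frac{\gamma}{2}D_f(\hat{x}_t,x_\ast)$ uses $D_f(\hat{x}_t,x_\ast)\geq 0$ is a detail the paper leaves implicit, but the argument is otherwise identical.
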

\begin{proof}
    First note that $\hat{x}_{t+1} = \hat{x}_t - \gamma g_t$ is always true (regardless of whether or not synchronization happens), hence
    \begin{align*}
        \sqn{\hat{x}_{t+1} - x_\ast} &= \sqn{\hat{x}_t - \gamma g_t - x_\ast} \\
        &= \sqn{\hat{x}_t - x_\ast} + \gamma^2 \sqn{g_t} - 2 \gamma \ev{\hat{x}_t - x_\ast, g_t} \\
        &= \sqn{\hat{x}_t - x_\ast} + \gamma^2 \sqn{g_t} - \frac{2 \gamma}{M} \sum_{m=1}^{M} \ev{\hat{x}_t - x_\ast, g_t^m}.
    \end{align*}
    Let $r_{t} = \hat{x}_t - x_\ast$, taking conditional expectations then using Lemmas~\ref{lemma:average-gradient-bound} and \ref{lemma:inner-product-bound},
    \begin{align*}
        \ecn{r_{t+1}} &\leq \sqn{r_t} + \gamma^2 \ecn{g_t} - \frac{2 \gamma}{M} \sum_{m=1}^{M} \ev{\hat{x}_t - x_\ast, \nabla f_m (x_t^m)} \\
        &\overset{\eqref{eq:lma-average-gradient-bound}}{\leq} \sqn{r_t} + \gamma^2 \br{ 2 L^2 V_t + 8 L D_{f} (\hat{x}_t, x_\ast) + \frac{4 \sigmaf^2}{M}} - \frac{2 \gamma}{M} \sum_{m=1}^{M} \ev{\hat{x}_t - x_\ast, \nabla f_m (x_t^m)}  \\
        &\overset{\eqref{eq:lma-inner-product-bound}}{\leq} \br{1 - \gamma \mu} \sqn{r_t} + \gamma L \br{1 + 2 \gamma L} V_t - 2 \gamma \br{1 - 4 \gamma L} D_{f} (\hat{x}_t, x_\ast) + \frac{4 \gamma^2 \sigmaf^2}{M}.
    \end{align*}
    If $\gamma \leq \frac{1}{8L}$, then $1 - 4 \gamma L \geq \frac{1}{2}$ and $1 + 2 \gamma L \leq \frac{5}{4}$, and hence
    \begin{align*}
        \ecn{r_{t+1}} &\leq \br{1 - \gamma \mu} \sqn{r_t} + \frac{5}{4} \gamma L V_t - \frac{\gamma}{2} D_{f} (\hat{x}_t, x_\ast) + \frac{4 \gamma^2 \sigmaf^2}{M},
    \end{align*}
    as claimed.
\end{proof}

\subsection{Proof of Theorem~\ref{thm:wc-noniid-unbounded-var}}
\begin{proof}
    Start with Lemma~\ref{lemma:optimality-gap-single-recursion} with $\mu = 0$,
    \begin{align*}
        \ecn{r_{t+1}} &\leq \sqn{r_t} + \frac{\gamma}{2} \br{\frac{5}{2} L V_t - D_{f} (\hat{x}_t, x_\ast)} + \frac{4 \gamma^2 \sigmaf^2}{M}.
    \end{align*}
    Taking unconditional expectations and summing up,
    \begin{align}
        \label{eq:thm-lsgd-dd-proof-1}
        \sum_{i=1}^{T} \ecn{r_t} &\leq \sum_{i=0}^{T-1} \ecn{r_t} + \frac{\gamma}{2} \sum_{i=0}^{T-1} \ec{\frac{5}{2} L V_i - D_{f} (\hat{x}_i, x_\ast)} + \sum_{i=0}^{T-1} \frac{4 \gamma^2 \sigmaf^2}{M}.
    \end{align}
    Using that $T = t_p$ for some $p \in \N$, we can decompose the second term by double counting and bound it by Lemma~\ref{lemma:iterate-deviation-epoch},
    \begin{align*}
        \sum_{i=0}^{T-1} \ec{\frac{5}{2} L V_i - D_{f} (\hat{x}_i, x_\ast)} &= \sum_{k=1}^{p} \sum_{i=t_{k-1}}^{t_{k} - 1} \ec{\frac{5}{2} L V_i - D_{f} (\hat{x}_i, x_\ast)} \\
        &\leq \sum_{k=1}^{p} \sum_{i=t_{k-1}}^{t_k - 1} \br{20 L^2 \gamma^2 (H-1)^2 - 1} \ec{D_f (\hat{x}_i, x_\ast} + \sum_{k=1}^{p} \sum_{i=t_{k-1}}^{t_k - 1} 10 L \gamma^2 (H-1)^2 \sigmaf^2.
    \end{align*}
    By assumption on $\gamma$ we have that $20 L^2 \gamma^2 (H-1)^2 - 1 \leq -\frac{1}{2}$, using this and then using double counting again we have,
    \begin{align*}
        \sum_{i=0}^{T-1} \ec{\frac{5}{2} L V_i - D_{f} (\hat{x}_i, x_\ast)} &\leq -\frac{1}{2} \sum_{k=1}^{p} \sum_{i=t_{k-1}}^{t_k - 1} \ec{D_f (\hat{x}_i, x_\ast)} + \sum_{k=1}^{p} \sum_{i=t_{k-1}}^{t_k - 1} 10 L \gamma^2 (H-1)^2 \sigmaf^2 \\
        &= -\frac{1}{2} \sum_{i=0}^{T-1} \ec{D_f (\hat{x}_i, x_\ast)} + \sum_{i=0}^{T-1} 10 L \gamma^2 (H-1)^2 \sigmaf^2.
    \end{align*}
    Using this in \eqref{eq:thm-lsgd-dd-proof-1},
    \begin{align*}
        \sum_{i=1}^{T} \ecn{r_t} &\leq \sum_{i=0}^{T-1} \ecn{r_t} - \frac{\gamma}{4} \sum_{i=0}^{T-1} \ec{D_{f} (\hat{x}_i, x_\ast)} + \sum_{i=0}^{T-1} \br{5 L \gamma^3 (H-1)^2 \sigmaf^2 + \frac{4 \gamma^2 \sigmaf^2}{M}}.
    \end{align*}
    Rearranging, we get
    \begin{align*}
        \frac{\gamma}{4} \sum_{i=0}^{T-1} \ec{D_f (\hat{x}_i, x_\ast)} &\leq \sum_{i=0}^{T-1} \ecn{r_t} - \sum_{i=1}^{T} \ecn{r_t} + \sum_{i=0}^{T-1} \br{5 L \gamma^3 (H-1)^2 \sigmaf^2 + \frac{4 \gamma^2 \sigmaf^2}{M}} \\
        &= \sqn{r_0} - \ecn{r_T} + \sum_{i=0}^{T-1} \br{5 L \gamma^3 (H-1)^2 \sigmaf^2 + \frac{4 \gamma^2 \sigmaf^2}{M}} \\
        &\leq \sqn{r_0} + T \br{5 L \gamma^3 (H-1)^2 \sigmaf^2 + \frac{4 \gamma^2 \sigmaf^2}{M}}.
    \end{align*}
    Dividing both sides by $\gamma T / 4$, we get
    \begin{align*}
        \frac{1}{T} \sum_{i=0}^{T-1} \ec{D_f (\hat{x}_i, x_\ast)} &\leq \frac{4 \sqn{r_0}}{\gamma T} + \frac{20 \gamma \sigmaf^2}{M} + 16 \gamma^2 L (H-1)^2 \sigmaf^2.
    \end{align*}
    Finally, using Jensen's inequality and the convexity of $f$ we get the required claim.
\end{proof}

\section{Extra Experiments}
Figure~\ref{fig:a5a_same_data_001} shows experiments done with identical data and Figure~\ref{fig:mushrooms_different_H} shows experiments done with heterogeneous data in the same setting as described in the main text but with different datasets.

\begin{figure}
	\centering
	\includegraphics[scale=0.4]{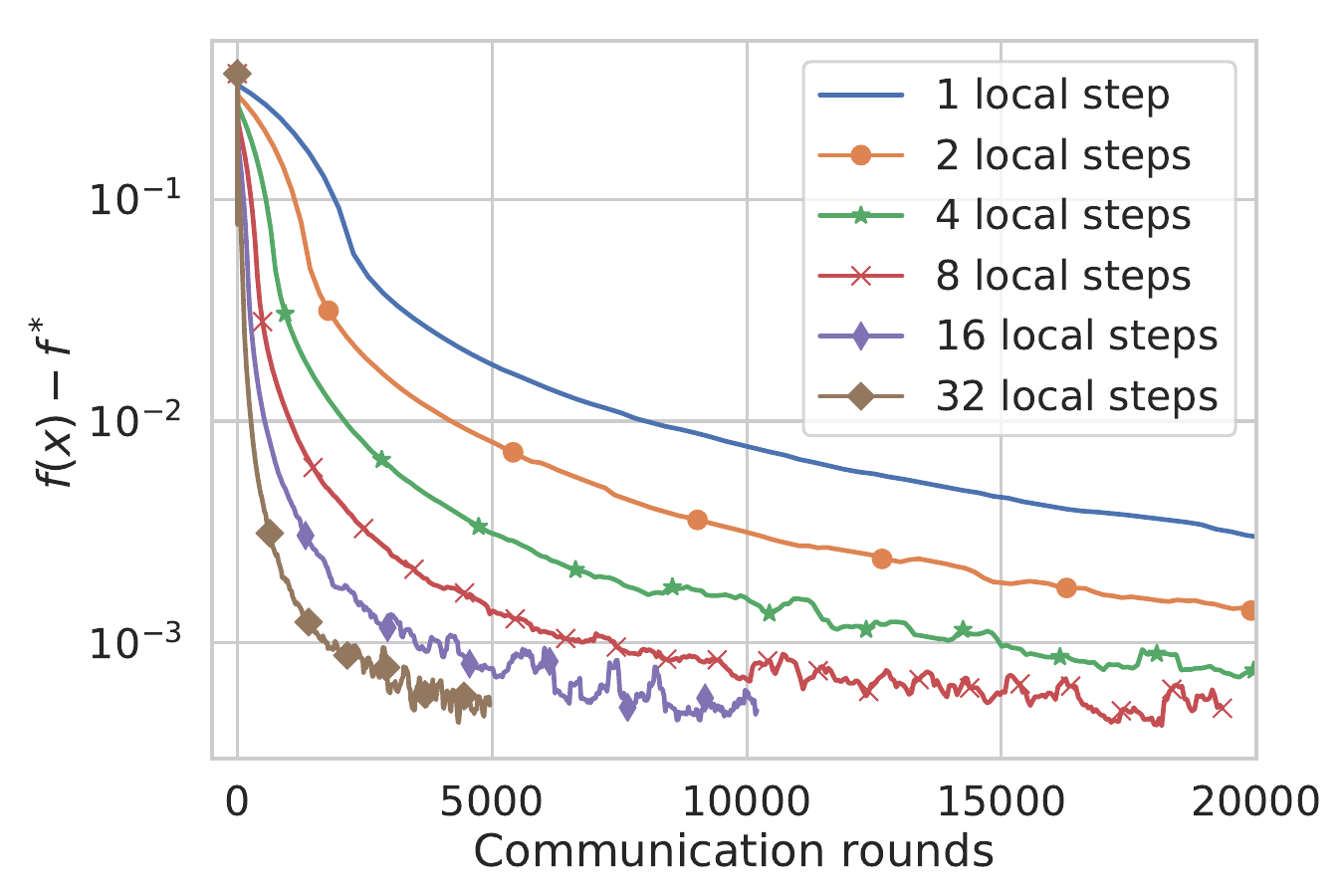}
	\includegraphics[scale=0.4]{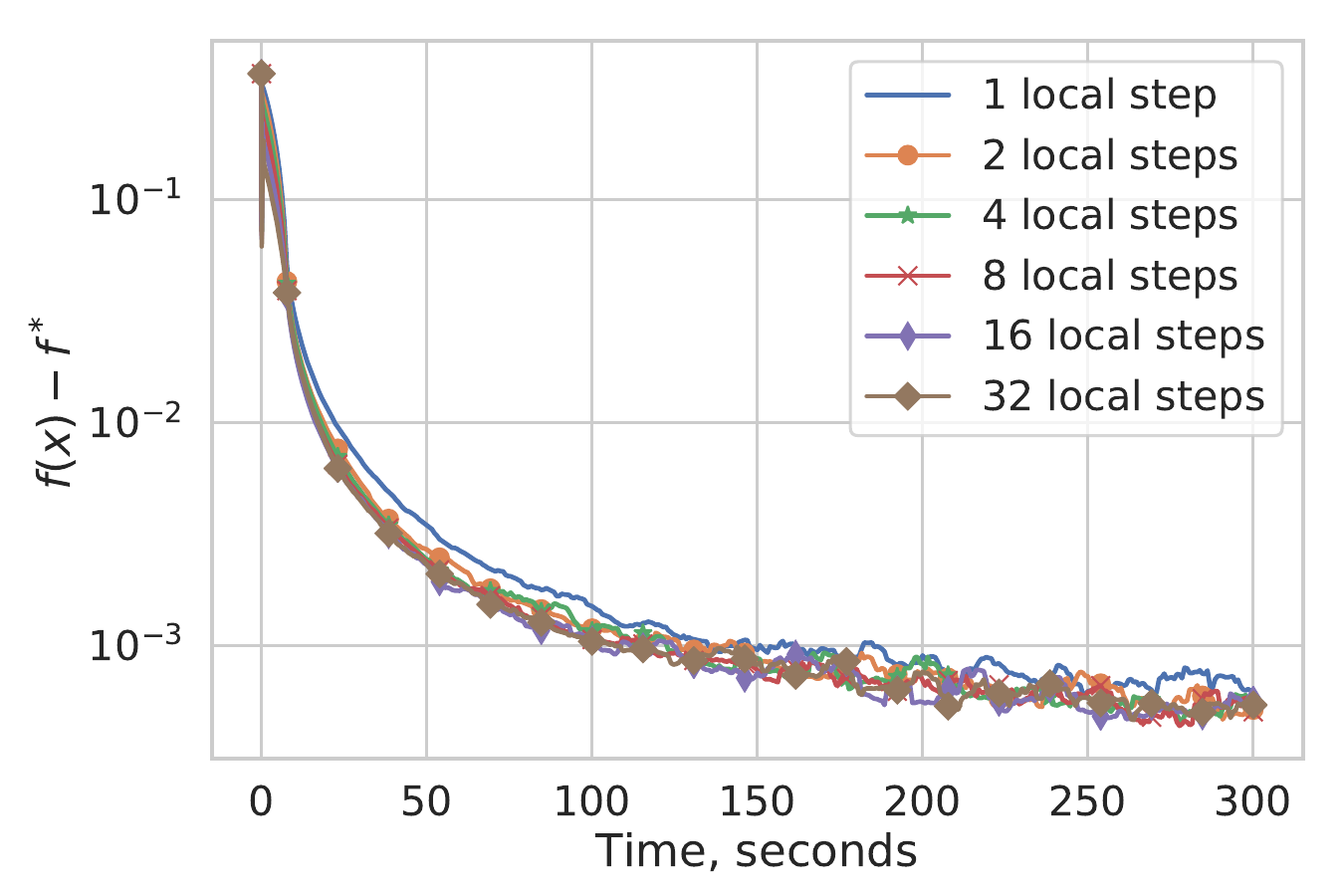}
	\caption{Results on regularized logistic regression with shared data, `a9a' dataset, with stepsize $\frac{0.05}{L}$. With more local iterations, fewer communication rounds are required to get to a neighborhood of the solution.}
	\label{fig:a5a_same_data_001}
\end{figure}

\begin{figure}[!b]
	\centering
	\includegraphics[scale=0.4]{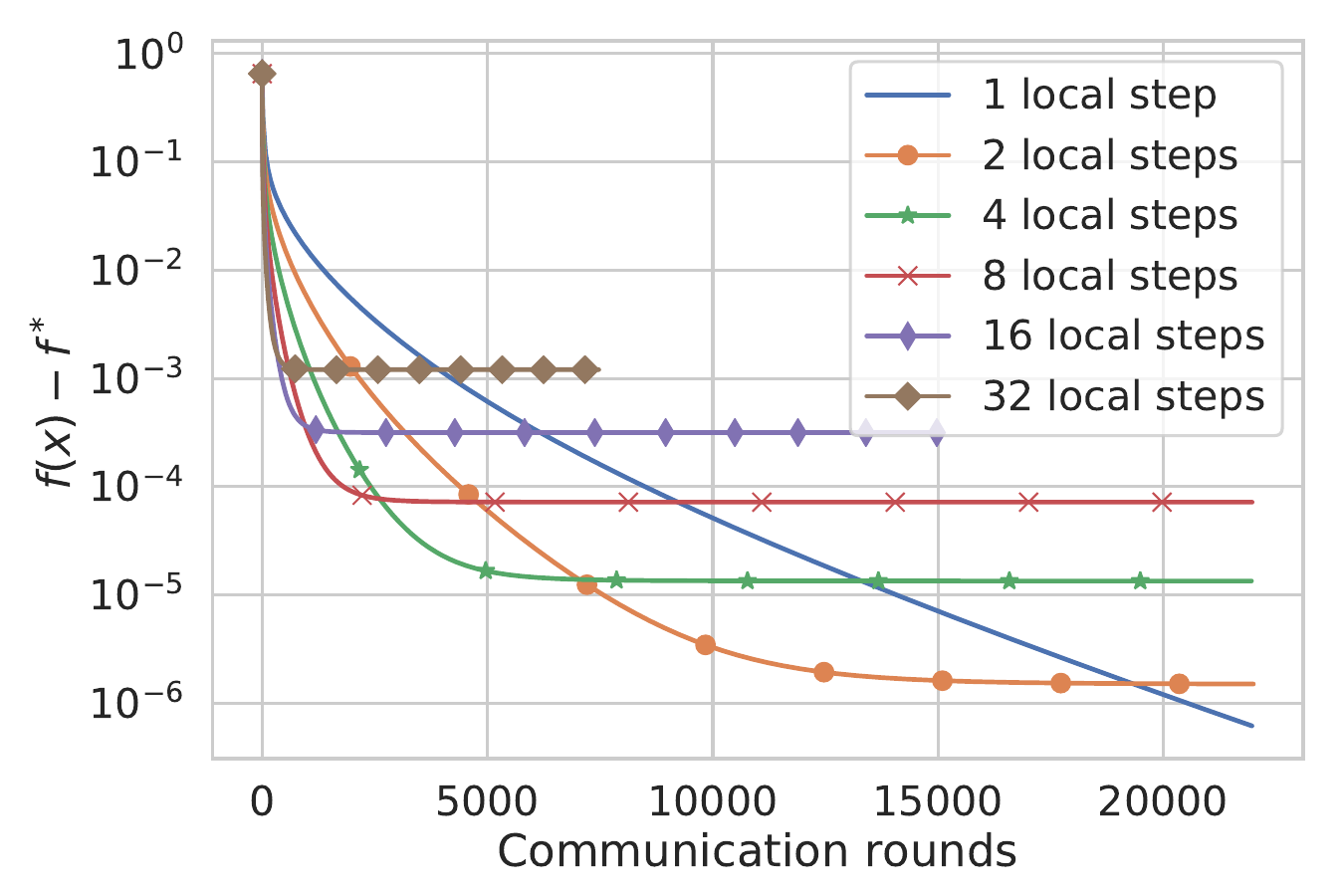}
	\includegraphics[scale=0.4]{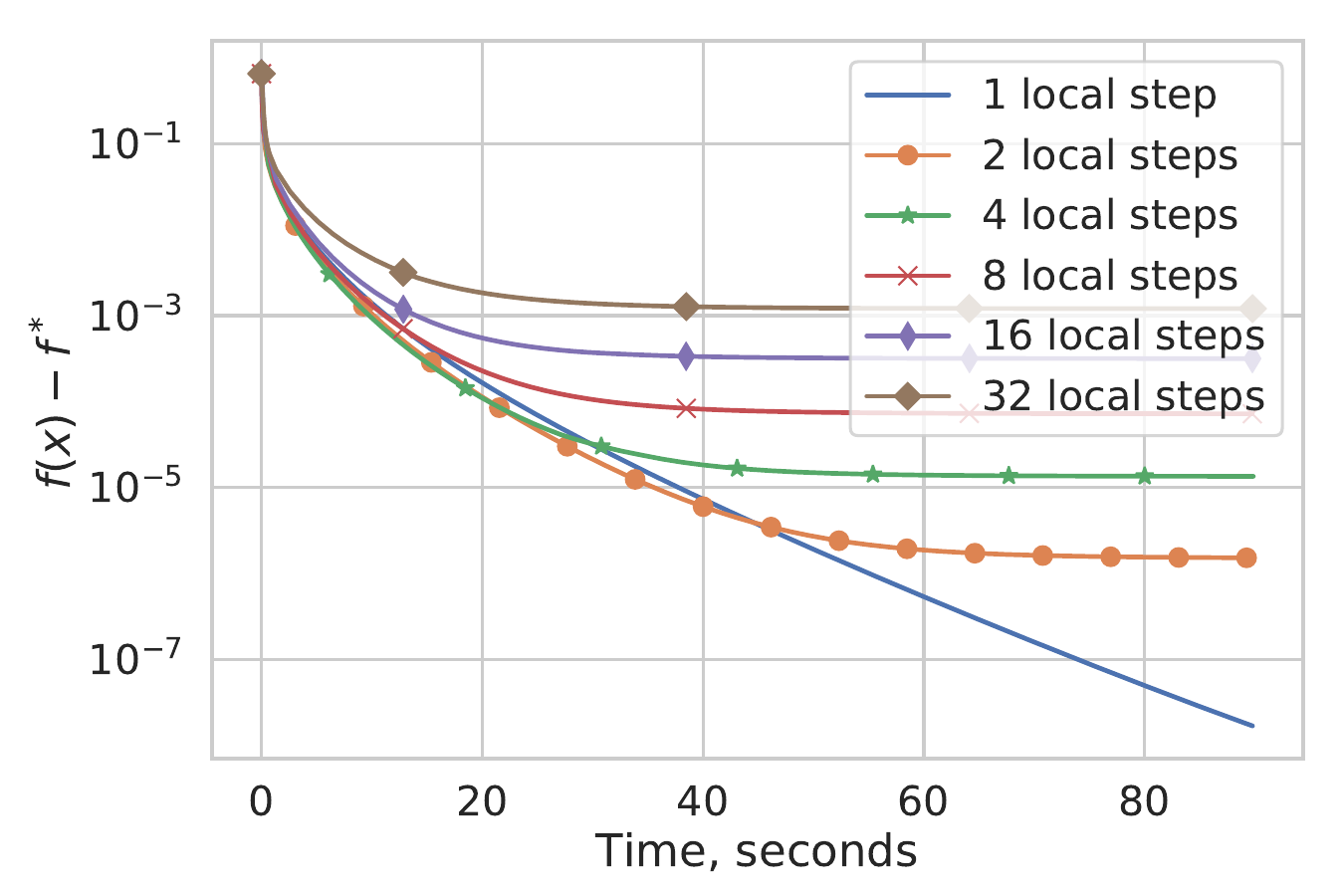}
	\includegraphics[scale=0.4]{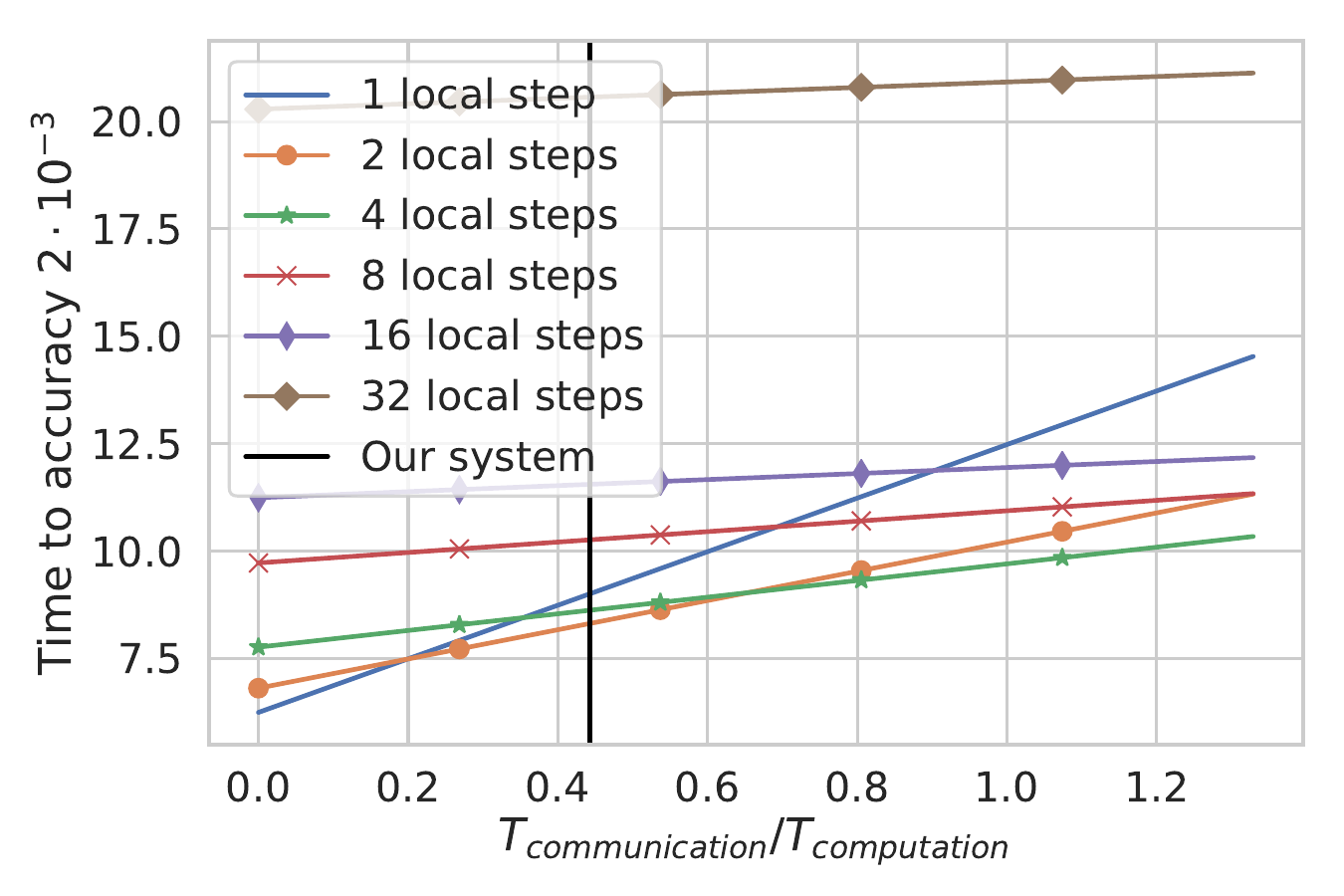}
	\caption{Same experiment as in Figure~\ref{fig:a5a_different_H}, performed on the `mushrooms' dataset.} 
	\label{fig:mushrooms_different_H}
\end{figure}

\section{Discussion of Dieuleveut and Patel (2019)}
\label{sec:patel-discuss}
An analysis of Local SGD for identical data under strong convexity, Lipschitzness of $\nabla f$, uniformly bounded variance, and Lipschitzness of $\nabla^2 f$ is given in \citep{Patel19}, where they obtain a similar communication complexity to \citep{Stich2018} without bounded gradients. However, in the proof of their result for general non-quadratic functions (Proposition S20) they make the following assumption, rewritten in our notation:
\[ G = \sup_{p} \br{ 1 + M L_H \gamma \sum_{k=t_{p}}^{t_{p+1} - 1} \sqn{\hat{x}_k - x_\ast} } < \infty,  \]
where $L_H$ is the Lipschitz constant of the Hessian of $f$ (assumed thrice differentiable). Their discussion of $G$ speculates on the behaviour of iterate distances, e.g.\ saying that \emph{if} they are bounded, then the guarantee is good. Unfortunately, assuming this quantity bounded implies that gradients are bounded as well, making the improvement over \citep{Stich2018} unclear to us. Furthermore, as $G$ depends on the algorithm's convergence (it is the distance from the optimum evaluated at various points), assuming it is bounded to prove convergence to a compact set results in a possibly circular argument. Since $G$ is also used as an upper bound on $H$ in their analysis, it is not possible to calculate the communication complexity.

\end{document}